\newcommand{\fastsdp}{SDCut\xspace}
\newcommand{\lowrank}{LowRank\xspace}
\newcommand{\be}{\mathbf e}
\newcommand{\bu}{\mathbf u}
\newcommand{\bx}{\mathbf x}
\newcommand{\bt}{\mathbf t}
\newcommand{\bh}{\mathbf h}
\newcommand{\by}{\mathbf y}
\newcommand{\bc}{\mathbf c}
\newcommand{\bb}{\mathbf b}
\newcommand{\bp}{\mathbf p}
\newcommand{\bX}{\mathbf X}
\newcommand{\bP}{\mathbf P}
\newcommand{\bD}{\mathbf D}
\newcommand{\bW}{\mathbf W}
\newcommand{\bA}{\mathbf A}
\newcommand{\bI}{\mathbf I}
\newcommand{\bK}{\mathbf K}
\newcommand{\bH}{\mathbf H}
\newcommand{\bC}{\mathbf C}
\newcommand{\bN}{\mathbf N}
\newcommand{\bM}{\mathbf M}
\newcommand{\bR}{\mathbf R}
\newcommand{\bL}{\mathbf L}
\newcommand{\bB}{\mathbf B}
\newcommand{\bZ}{\mathbf Z}
\newcommand{\bY}{\mathbf Y}
\newcommand{\sst}{\mathrm{s.t.}}
\newtheorem{theorem}{Theorem}
\newenvironment{theorem*}{\par\noindent{\bf Theorem\ }}{\hfill\\[2mm]}
\newtheorem{claim}{Result}
\newenvironment{corollary*}{\par\noindent{\bf Corollary\ }}{\hfill\\[2mm]}
\def\T{{\!\top}}
\begin{document}

\title{A Fast Semidefinite Approach to Solving Binary Quadratic
Problems
\thanks{\em  
Submitted to IEEE Conf.\ Computer Vision and
Pattern Recognition on 15 Nov.\ 2012;
Accepted 24 Feb.\ 2013. 
Content may be slightly different from the final published version.
}
}

\author{Peng Wang, Chunhua Shen, Anton van den Hengel              
\\
School of Computer Science, The University of Adelaide, Australia}

\maketitle
\begin{abstract}

    Many computer vision problems can be formulated as binary
    quadratic programs (BQPs).  Two classic relaxation methods are
    widely used for solving BQPs, namely, spectral methods and
    semidefinite programming (SDP), each with their own advantages and
    disadvantages.  Spectral relaxation is simple and easy to
    implement, but its bound is loose.    Semidefinite relaxation has
    a tighter bound, but its computational complexity is high for
    large scale problems.  We present a new SDP formulation for BQPs,
    with two desirable properties.  First, it has a similar relaxation 
    bound to conventional SDP formulations.  Second, compared with
    conventional SDP methods, the new SDP formulation leads to a
    significantly more efficient and scalable dual optimization
    approach, which has the same degree of complexity as spectral
    methods.  Extensive experiments on various applications
    including clustering, image segmentation, co-segmentation and
    registration demonstrate the usefulness of our SDP formulation for
    solving large-scale BQPs.   
   
\end{abstract}

\tableofcontents

\section{Introduction}

Many problems in computer vision can be formulated as binary quadratic problems, such as 
image segmentation, image restoration, graph-matching and problems formulated by Markov Random Fields (MRFs).
Because general BQPs are NP-hard, they are commonly approximated by spectral or semidefinite relaxation.

Spectral methods convert BQPs  into  eigen-problems.
Due to their simplicity, spectral methods have been applied to a variety of problems in computer vision, 
such as image segmentation~\cite{Shi2000normalized,Yu2004segmentation}, motion segmentation~\cite{Lauer09spectralmotion} 
and many other MRF applications~\cite{Cour_solvingmarkov}.
However, the bound of spectral relaxation is loose and can lead to poor solution quality 
in many cases~\cite{Guattery98onthe,Lang05fixingtwo,Kannan00onclusterings}.
Furthermore, the spectral formulation is hard to generalize to
accommodate inequality constraints~\cite{Cour_solvingmarkov}.

In contrast, SDP methods produce tighter approximations than spectral methods,
which have been applied to problems including image segmentation~\cite{Heiler2005semi}, restoration~\cite{Keuchel2003binary,Olsson07solvinglarge},
subgraph matching~\cite{Schellewald05}, co-segmentaion~\cite{Joulin2010dis} and general MRFs~\cite{Torr2003MRFSDP}.
The disadvantage of SDP methods, however, is their poor scalability
for large-scale problems.
The worst-case complexity of solving a generic SDP problem 
involving a matrix variable of size $ n \times n $ and $ {\cal O}(n) $
linear constraints is about $\mathcal{O}(n^{6.5})$,
using interior-point methods.

In this paper, we present a new SDP formulation for BQPs (denoted by \fastsdp).
Our approach achieves higher quality solutions than spectral methods
while being significantly faster than the conventional SDP
formulation.
Our main contributions  are as follows. 

 \noindent
 ($ i $) 
      A new SDP formulation (\fastsdp) is proposed to solve binary quadratic problems. 
      By virtue of its use of  the dual formulation, our approach is
      simplified and can be solved efficiently by first order
      optimization methods, \eg, quasi-Newton methods.  \fastsdp has
      the same level of computational complexity as
      spectral methods, roughly $\mathcal{ O }(n^3)$,
      which is much lower than the conventional SDP formulation using
      interior-point method. %
      \fastsdp also achieves a similar bound with the conventional SDP
      formulation and therefore produces better estimates than
      spectral relaxation.

 \noindent 
 ($ ii $)
We demonstrate the flexibility of \fastsdp by applying it to a few computer vision applications.
      The \fastsdp formulation allows additional equality or inequality constraints,
      which enable it to have a broader application area than the spectral method.

{\bf Related work}
Our method is motivated by the work of Shen
\etal~\cite{Shen2011scalabledual}, which presented a fast dual SDP
approach to Mahalanobis metric learning.
The  Frobenius-norm regularization in their objective function plays an important role, which leads to a simplified dual formulation.
They, however, focused on learning a metric for nearest neighbor
classification. In contrast, here we are interested in discrete
combinatorial optimization problems arising in computer vision.  
In~\cite{Journee2010lowrank}, the SDP problem was reformulated by the
non-convex low-rank factorization $\bX = \bY \bY^{\T}$, where $\bY \in
\mathbb{R}^{n \times m}, m \ll n$. 
This method finds a locally-optimal low-rank solution, and runs faster than the interior-point method.
We compare \fastsdp with the method in~\cite{Journee2010lowrank}, on image co-segmentation.
The results show that our method achieves a better solution quality and a faster running speed.
Olsson \etal~\cite{Olsson07solvinglarge} proposed fast SDP methods
based on spectral sub-gradients and trust region methods.
Their methods cannot be extended to accommodate inequality constraints, while ours
is much more general and flexible.   
Krislock \etal\ 
\cite{krislock2012improved}
have independently formulated a similar SDP for the
MaxCut problem, which is simpler than the problems that we solve here. 
Moreover, they focus on globally solving the MaxCut problem using branch-and-bound.

{\bf Notation}
A matrix is denoted by a bold capital letter ($\bX$) and a column vector is by a bold lower-case letter ($\bx$).
$\mathcal{S}_n$ denotes the set of $n \times n$ symmetric matrices.
$\bX \succcurlyeq \mathbf{0}$ represents that the matrix $\bX$ is positive semidefinite (p.s.d.).
For two vectors, $\bx \leq \by$ indicates the element-wise inequality;
$\mathbf{diag}(\cdot)$ denotes the diagonal entries of a matrix.
The trace of a matrix is denoted as $\mathrm{trace}(\cdot)$.
The rank of a matrix is denoted as $\mathrm{rank}(\cdot)$.
$\lVert \cdot \rVert_1$ and $\lVert \cdot \rVert_2$ denote the $\ell_1$ and $\ell_2$ norm of a vector respectively.
$\lVert \bX \rVert_F^2 = \mathrm{trace}(\bX \bX^{\T}) = \mathrm{trace}(\bX^{\T} \bX)$ is the Frobenius norm.
The inner product of two matrices is defined as $\langle \bX, \bY \rangle = \mathrm{trace}(\bX^{\T}\bY)$.
$\bX \circ \bY$ denotes the Hadamard product of $\bX$ and $\bY$.
$\bX \otimes \bY$ denotes the Kronecker product of $\bX$ and $\bY$.
$\bI_n$ indicates the $n \times n$ identity matrix and $\be_n$ denotes an $n \times 1$ vector with all ones.
${\lambda_i}(\bX)$ and $\mathbf{\bp}_i(\bX)$ indicate the $i$th eigenvalue and the corresponding eigenvector of the matrix $\bX$.
We define the positive and negative part of $\bX$ as:
\begin{align}
{\textstyle 
    \bX_{+} = \sum_{\lambda_i >0} \lambda_i \mathbf{\bp}_{i} \mathbf{\bp}_{i}^{\T}, \ \ 
    \bX_{-} = \sum_{\lambda_i <0} \lambda_i \mathbf{\bp}_{i}
    \mathbf{\bp}_{i}^{\T},
}
\end{align}
    and explicitly $\bX = \bX_{+} + \bX_{-}$.

{\bf Euclidean projection onto the p.s.d.\ cone}
Our method relies on the following results 
(see Sect. 8.1 of~\cite{boyd2004convex}):
\begin{align}
{\textstyle \bX_{+} =  \mathrm{argmin}_{\bY \succcurlyeq \mathbf{0} } \  \lVert \bY - \bX \rVert_F^2 }.\label{eq:notation_1}
\end{align}
Although~\eqref{eq:notation_1} is an SDP problem, it can be solved efficiently by using eigen-decomposition.
This is the key observation to simplify our SDP formulation.

\section{Spectral and Semidefinite Relaxation}

As a simple example of a
binary quadratic problem, we consider the following optimization
problem:
\begin{align}
\min_{\bx} &\ \bx^{\T} \bA \bx, \ \sst \ \bx \in \{-1,1\}^n,
\label{eq:backgd_bqp} 
\end{align}
where $\bA \in \mathcal{S}_n$. {\it The integrality constraint makes the BQP
problem non-convex and NP-hard}.

One of the spectral methods (again by way of example) relaxes the
constraint $\bx \in \{-1,1\}^n$ to $\lVert \bx \rVert_2^2 = n$:
\begin{align}
\min_{\bx} \bx^{\T} \bA \bx, \ \sst \ \lVert \bx \rVert_2^2 = n. \label{eq:backgd_spectral}
\end{align}
This problem can be solved by the eigen-decomposition of $\bA$ in $\mathcal{O}(n^{3})$ time.
Although appealingly simple to  implement, the spectral relaxation often yields poor solution quality.
There is no guarantee on the bound of its solution with respect to the optimum of~\eqref{eq:backgd_bqp}.
The poor bound of spectral relaxation has been verified by a variety of authors~\cite{Guattery98onthe,Lang05fixingtwo,Kannan00onclusterings}.
Furthermore, it is difficult to generalize the spectral method to BQPs with linear or quadratic inequality constraints.
Although linear equality constraints can be considered~\cite{Gour2006}, 
solving~\eqref{eq:backgd_spectral} under additional inequality constraints is in general NP-hard~\cite{Cour_solvingmarkov}.

Alternatively, BQPs can be relaxed to semidefinite programs.
Firstly, let us consider an equivalent problem of~\eqref{eq:backgd_bqp}:
\begin{align}
\min_{\bX \succcurlyeq \mathbf{0}} \  \langle \bX, \bA \rangle, \ \sst \ \mathrm{diag}(\bX) = \be, \mathrm{rank}(\bX)=1.  \label{eq:backgd_sdp0}
\end{align}
The original problem is lifted to the space of rank-one p.s.d.\   matrices of the form $\bX = \bx \bx^{\T}$,
The number of variables increases from $n$ to $n(n+1)/2$.
Dropping the only non-convex rank-one constraint,~\eqref{eq:backgd_sdp0} is a convex SDP problem, 
which can be solved conveniently by standard convex optimization toolboxes, \eg, SeDuMi~\cite{Sturm98usingsedumi} and SDPT3~\cite{Toh99sdpt3}.
The SDP relaxation is tighter than spectral relaxation~\eqref{eq:backgd_spectral}.
In particular, it has been proved in~\cite{Goemans95improved} that the
expected values of solutions are bounded for the SDP formulation of
some BQPs (\eg, MaxCut).
Another advantage of the SDP formulation is the ability of solving
problems of more general forms, \eg, quadratically constrained quadratic program (QCQP).
Quadratic constraints on $\bx$ are transformed to linear constraints on $\bX = \bx \bx^{\T}$.
In summary, the constraints for SDP can be either equality or inequality.

The general form of the SDP problem is expressed as:
\begin{subequations}
\begin{align}
\min_{\bX \succcurlyeq \mathbf{0}} &\  \langle \bX,\bA \rangle, \\
\sst &\ \langle  \bX, \bB_i \rangle = b_i, \  \  \forall i = 1, \dots, p \label{eq:backgd_sdp1_cons1}, \\
     &\ \langle  \bX, \bB_j \rangle \leq b_j, \ \forall j = {p\!+\!1}, \dots, m  \label{eq:backgd_sdp1_cons2}.
\end{align}
\label{eq:backgd_sdp1}
\end{subequations}
The most significant drawback of SDP methods is the poor scalability to large problems.  
Most optimization toolboxes, \eg, SeDuMi~\cite{Sturm98usingsedumi} and SDPT3~\cite{Toh99sdpt3},
use the interior-point method for solving SDP problems, which has $\mathcal{O}(n^{6.5})$ complexity,
making it impractical for large scale problems.

\section{\fastsdp Formulation}

    Before we present the new SDP formulation, 
we first introduce a property of the following set:
\begin{align}
\Omega(\eta) = \{ \bX \in  \mathcal{S}_n | \bX \succcurlyeq \mathbf{0}, \mathrm{trace}(\bX) = \eta \}.
\end{align}
The set $\Omega(\eta)$ is known as a spectrahedron, which is the
intersection of a linear subspace (\ie $\mathrm{trace}(\bX) = \eta$)
and the p.s.d.\   cone.

For the set $\Omega(\eta)$, we have the following theorem, which is an extension of the one in~\cite{Malick2007spherical}.
\begin{theorem} (The spherical constraint on a spectrahedron).
For $ \bX \in \Omega(\eta)$, we have the inequality $\lVert \bX \rVert_F \leq \eta$, in which 
the equality holds if and only if $\mathrm{rank}(\bX) = 1$.
\label{thm:1}
\end{theorem}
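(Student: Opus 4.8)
The plan is to reduce the statement to an elementary inequality between the $\ell_1$ and $\ell_2$ norms of the eigenvalue vector of $\bX$. Since $\bX \in \Omega(\eta)$ is symmetric and positive semidefinite, I would start from its spectral decomposition $\bX = \sum_{i=1}^n \lambda_i \bp_i \bp_i^{\T}$ with all $\lambda_i \geq 0$ and $\{\bp_i\}$ orthonormal. Then $\mathrm{trace}(\bX) = \sum_i \lambda_i = \eta$, and using $\lVert \bX \rVert_F^2 = \mathrm{trace}(\bX^{\T}\bX)$ together with orthonormality of the $\bp_i$ gives $\lVert \bX \rVert_F^2 = \sum_i \lambda_i^2$.

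Next I would expand $\eta^2 = \big(\sum_i \lambda_i\big)^2 = \sum_i \lambda_i^2 + 2\sum_{i<j}\lambda_i\lambda_j$. Because every $\lambda_i \geq 0$, the cross-term sum is nonnegative, so $\lVert \bX \rVert_F^2 = \sum_i \lambda_i^2 \leq \eta^2$; taking square roots of both (nonnegative) sides yields $\lVert \bX \rVert_F \leq \eta$, which is the claimed inequality.

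For the equality case, observe that $\lVert \bX \rVert_F = \eta$ forces $\sum_{i<j}\lambda_i\lambda_j = 0$, which combined with $\lambda_i \geq 0$ means at most one eigenvalue is nonzero. Assuming $\eta > 0$ (so $\bX \neq \mathbf 0$), exactly one eigenvalue equals $\eta$ and the rest vanish, i.e. $\mathrm{rank}(\bX) = 1$. Conversely, if $\mathrm{rank}(\bX)=1$ then $\bX = \lambda_1 \bp_1 \bp_1^{\T}$ with $\lambda_1 = \mathrm{trace}(\bX) = \eta$, so $\lVert \bX \rVert_F = \lambda_1 = \eta$, completing the equivalence.

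I do not anticipate a genuine obstacle here, as the argument is a direct eigenvalue computation. The only point needing a word of care is the degenerate case $\eta = 0$, where $\bX = \mathbf 0$ has rank $0$ rather than $1$; the theorem is therefore implicitly understood for $\eta > 0$, and I would state that assumption explicitly (it holds in all the applications considered, where $\eta$ is a fixed positive constant determined by the relaxation).
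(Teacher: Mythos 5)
Your proof is correct and follows essentially the same route as the paper's: both reduce $\lVert \bX \rVert_F$ to the $\ell_2$ norm of the eigenvalue vector and compare it with the trace (its $\ell_1$ norm, by positive semidefiniteness), with equality forcing a single nonzero eigenvalue. Your explicit expansion of the cross terms is just an unpacked version of the paper's $\lVert \cdot \rVert_2 \leq \lVert \cdot \rVert_1$ step, and your remark about the degenerate case $\eta = 0$ (where the rank is $0$, not $1$) is a valid caveat the paper leaves implicit.
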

\begin{proof}
For a matrix $\bX \in \Omega(\eta)$, $\lVert \bX \rVert_F^2 =
\mathrm{trace} (\bX \bX^\T ) = \lVert \mathbf{\lambda}(\bX) \rVert_2^2 \leq
\lVert \mathbf{\lambda}(\bX) \rVert_1^2$.
Because $\bX \succcurlyeq \mathbf{0}$, then $\mathbf{\lambda}(\bX)
\geq \mathbf{0}$ and $\lVert \mathbf{\lambda}(\bX) \rVert_1 =
\mathrm{trace}(\bX)$.
Therefore
\begin{align}
\lVert \bX \rVert_F = \lVert \mathbf{\lambda}(\bX) \rVert_2 \leq \lVert (\mathbf{\lambda}(\bX)) \rVert_1 = \eta. \label{eq:fastsdp_spherical}
\end{align}
Because $\lVert \bx \rVert_2 = \lVert \bx \rVert_1$ holds if and only if only one element in $\bx$ is non-zero, 
the equality holds for~\eqref{eq:fastsdp_spherical} if and only if there is only one non-zero eigenvalue for $\bX$, i.e., $\mathrm{rank}(\bX) = 1$.
\end{proof}
This theorem shows the rank-one constraint is equivalent to $\lVert
\bX \rVert_F = \eta$ for p.s.d.\ matrices with a fixed trace.

The constraint on $\mathrm{trace}(\bX)$ is common in the SDP formulation for BQPs. 
For $\bx \!\in\! \{ -1,1 \}^n$, we have $\mathrm{diag}(\bx \bx^{\T}) = \be$, and so $\mathrm{trace}(\bx \bx^{\T}) = n$.
Therefore $\lVert \bX \rVert_F \leq \eta$ is implicitly involved in the SDP formulation of BQPs.

Then we have a geometrical interpretation of SDP relaxation. 
The non-convex spherical constraint $\lVert \bX \rVert_F = \eta$ is relaxed to the convex inequality constraint $\lVert \bX \rVert_F \leq \eta$:
\begin{align}
\min_{\bX \succcurlyeq \mathbf{0}} \  \langle \bX,\bA \rangle, \ \ \
\sst \ \lVert \bX \rVert_F^2 - \eta^2 \leq 0, \ \eqref{eq:backgd_sdp1_cons1},  \ \eqref{eq:backgd_sdp1_cons2}. \  \label{eq:fastsdp_analysis1}
\end{align}

Inspired by the spherical constraint, we consider the following SDP formulations:
\begin{align}
\min_{\bX \succcurlyeq \mathbf{0}} \  \langle \bX,\bA \rangle, \ \ \
\sst \ \lVert \bX \rVert_F^2 - \eta^2 \leq \rho, \ \eqref{eq:backgd_sdp1_cons1}, \ \eqref{eq:backgd_sdp1_cons2}. \label{eq:fastsdp_analysis2} \\
\min_{\bX \succcurlyeq \mathbf{0}} \  \langle \bX,\bA \rangle + \sigma ( \lVert \bX \rVert_F^2 - \eta^2), \ \ \
\sst \ \eqref{eq:backgd_sdp1_cons1}, \ \eqref{eq:backgd_sdp1_cons2} \label{eq:fastsdp_analysis3}.
\end{align}
where $\rho < 0$ and $\sigma > 0$ are scalar parameters. Given a $\rho$, one can always find a $\sigma$,
making the problems~\eqref{eq:fastsdp_analysis2} and~\eqref{eq:fastsdp_analysis3} equivalent.

The problem~\eqref{eq:fastsdp_analysis2} has the same objective function with~\eqref{eq:fastsdp_analysis1},
but its search space is a subset of the feasible set of~\eqref{eq:fastsdp_analysis1}.
Hence~\eqref{eq:fastsdp_analysis2} finds a sub-optimal solution to~\eqref{eq:fastsdp_analysis1}.
The gap between the solution of~\eqref{eq:fastsdp_analysis2} and~\eqref{eq:fastsdp_analysis1} vanishes when $\rho$ approaches $0$.

On the other hand, because $\lVert \bX \rVert_F^2 - \eta^2 \leq 0$, 
the objective function of~\eqref{eq:fastsdp_analysis3} is not larger than the one of~\eqref{eq:fastsdp_analysis1}.
When $\sigma$ approaches $0$, the problem~\eqref{eq:fastsdp_analysis3} is equivalent to~\eqref{eq:fastsdp_analysis1}. 
For a small $\sigma$, the solution of~\eqref{eq:fastsdp_analysis3} approximates the solution of~\eqref{eq:fastsdp_analysis1}.
{\em When $\sigma$ approaches $0$, the bound of~\eqref{eq:fastsdp_analysis3} is arbitrarily close to the bound of~\eqref{eq:fastsdp_analysis1}.}

Although problems~\eqref{eq:fastsdp_analysis2} and~\eqref{eq:fastsdp_analysis3}
can be converted into standard SDP problems, 
solving them using interior-point methods can be very slow.
Next, we show that the dual of~\eqref{eq:fastsdp_analysis3} has a much simpler form.
\begin{claim} 
The dual problem of~\eqref{eq:fastsdp_analysis3} can be simplified to 
\begin{align}
\max_{\bu} &\,\,\,\, - \frac{1}{4\sigma} \lVert \bC(\bu)_{-} \rVert_F^2 \!-\! \bu^{\T} \bb \!-\! \sigma \eta^2,  \label{eq:fastsdp_dual} \\
\sst       &\,\,\,\, u_j \geq 0,  \, \forall j = p+1, \dots, m,  \notag
\end{align}
where $\bC(\bu) = \sum_{i=1}^{m} u_i \bB_i + \bA$.
\end{claim}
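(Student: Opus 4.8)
The plan is to write down the Lagrangian of~\eqref{eq:fastsdp_analysis3}, dualizing only the affine constraints~\eqref{eq:backgd_sdp1_cons1}--\eqref{eq:backgd_sdp1_cons2} and keeping the conic constraint $\bX\succcurlyeq\mathbf{0}$ explicit, and then to evaluate the resulting inner minimization over the p.s.d.\ cone in closed form via the projection identity~\eqref{eq:notation_1}. First I would introduce multipliers $u_1,\dots,u_p\in\mathbb{R}$ for the equalities and $u_{p+1},\dots,u_m\ge 0$ for the inequalities, and form
\begin{align}
L(\bX,\bu) = \langle \bX,\bA\rangle + \sigma\big(\lVert\bX\rVert_F^2-\eta^2\big) + {\textstyle\sum_{i=1}^{m}} u_i\big(\langle\bX,\bB_i\rangle-b_i\big). \notag
\end{align}
Grouping the terms that are linear in $\bX$ gives $\langle\bX,\bC(\bu)\rangle$ with $\bC(\bu)=\bA+\sum_{i=1}^m u_i\bB_i$, so the dual function becomes
\begin{align}
g(\bu) = \Big(\min_{\bX\succcurlyeq\mathbf{0}}\ \sigma\lVert\bX\rVert_F^2+\langle\bX,\bC(\bu)\rangle\Big) - \bu^{\T}\bb - \sigma\eta^2. \notag
\end{align}

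The key step is to complete the square in the inner problem: $\sigma\lVert\bX\rVert_F^2+\langle\bX,\bC(\bu)\rangle = \sigma\big\lVert\bX+\tfrac{1}{2\sigma}\bC(\bu)\big\rVert_F^2 - \tfrac{1}{4\sigma}\lVert\bC(\bu)\rVert_F^2$. Hence, up to that additive constant, the inner minimization is exactly the Euclidean projection of $-\tfrac{1}{2\sigma}\bC(\bu)$ onto the p.s.d.\ cone, which by~\eqref{eq:notation_1} equals $\big(-\tfrac{1}{2\sigma}\bC(\bu)\big)_{+}$. Since $\sigma>0$, scaling $\bC(\bu)$ by $-\tfrac{1}{2\sigma}$ turns its negative eigenvalues into the positive eigenvalues of the scaled matrix, so this projection is $-\tfrac{1}{2\sigma}\bC(\bu)_{-}$, and it is the unique minimizer because the inner objective is strongly convex in $\bX$ and the cone is closed. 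Substituting $\bX=-\tfrac{1}{2\sigma}\bC(\bu)_{-}$ back, $\sigma\big\lVert\bX+\tfrac{1}{2\sigma}\bC(\bu)\big\rVert_F^2 = \tfrac{1}{4\sigma}\lVert\bC(\bu)_{+}\rVert_F^2$, and using the orthogonal splitting $\lVert\bC(\bu)\rVert_F^2=\lVert\bC(\bu)_{+}\rVert_F^2+\lVert\bC(\bu)_{-}\rVert_F^2$ (the positive and negative parts have mutually orthogonal ranges) the inner minimum collapses to $-\tfrac{1}{4\sigma}\lVert\bC(\bu)_{-}\rVert_F^2$. This is precisely $g(\bu)$ as in~\eqref{eq:fastsdp_dual}, and the dual problem is $\max_{\bu}g(\bu)$ subject to $u_j\ge 0$ for $j=p+1,\dots,m$.

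I expect the only delicate point to be the inner minimization: making sure the minimizer exists and is unique (guaranteed here by the $\sigma>0$ strong convexity together with closedness of the p.s.d.\ cone), and correctly tracking the sign and the factor $\tfrac{1}{2\sigma}$ when invoking~\eqref{eq:notation_1} on $-\tfrac{1}{2\sigma}\bC(\bu)$ rather than on $\bC(\bu)$ itself. Everything else is routine bookkeeping. If one additionally wants the value of~\eqref{eq:fastsdp_dual} to match that of~\eqref{eq:fastsdp_analysis3}, it suffices to note that~\eqref{eq:fastsdp_analysis3} satisfies Slater's condition whenever it admits a strictly feasible $\bX\succ\mathbf{0}$, so strong duality holds and the primal optimum is recovered as $\bX^{\star}=-\tfrac{1}{2\sigma}\bC(\bu^{\star})_{-}$ from a dual optimizer $\bu^{\star}$.
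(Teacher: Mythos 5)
Your derivation is correct and reaches the same simplified dual, but it gets there by a slightly different route than the paper. The paper dualizes the conic constraint as well, introducing a matrix multiplier $\bZ \succcurlyeq \mathbf{0}$, writing the Lagrangian $\mathrm{L}(\bX,\bu,\bZ)$, using stationarity in $\bX$ to obtain the intermediate dual $\max_{\bu,\bZ} -\tfrac{1}{4\sigma}\lVert \bZ - \bC(\bu)\rVert_F^2 - \bu^{\T}\bb - \sigma\eta^2$ over $\bZ \succcurlyeq \mathbf{0}$, and only then eliminates $\bZ$ by recognizing the inner problem in $\bZ$ as the projection~\eqref{eq:notation_1}, giving $\bZ = \bC(\bu)_{+}$ and hence the residual $\bC(\bu)_{+}-\bC(\bu) = -\bC(\bu)_{-}$. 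You instead keep $\bX \succcurlyeq \mathbf{0}$ as an explicit constraint in the inner minimization, complete the square, and apply the same projection identity directly to $-\tfrac{1}{2\sigma}\bC(\bu)$; the two uses of~\eqref{eq:notation_1} are mirror images of one another (the paper projects to find the multiplier $\bZ = \bC(\bu)_{+}$, you project to find the minimizer $\bX = -\tfrac{1}{2\sigma}\bC(\bu)_{-}$, and these are related exactly by the paper's stationarity relation~\eqref{eq:fastsdp_primal_dual}). Your version has two small advantages: it computes the dual function as a genuine partial infimum for every fixed $\bu$ rather than substituting a stationary point evaluated at the optimal multipliers, so it does not implicitly lean on strong duality to derive the dual itself; and your closing remark that equality of optimal values requires Slater's condition is more precise than the paper's assertion that feasibility of both problems suffices. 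What the paper's route buys is the explicit intermediate SDP dual~\eqref{eq:fastsdp_dual_2} and the primal--dual recovery formula, which it reuses later; make sure you state the orthogonality $\langle \bC(\bu)_{+},\bC(\bu)_{-}\rangle = 0$ (which you do) since that is where the cross term vanishes in your final cancellation.
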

\begin{proof}
The Lagrangian of the primal problem~\eqref{eq:fastsdp_analysis3} is:
\begin{align}
\mathrm{L} (\bX, \bu, \bZ)  =& \langle \bX, \bA \rangle - \langle \bX, \bZ \rangle + \sigma \lVert \bX \rVert_F^2 - \sigma \eta^2 \notag \\
           & + \sum_{i=1}^m u_i (\langle \bX, \bB_i\rangle \! - \! b_i),  \label{eq:fastsdp_lagrangian}
\end{align}
with $\bZ \succcurlyeq \mathbf{0}$ and $u_j \geq \mathbf{0}, \ \forall j = p+1, \dots, m$.
$\bZ \in \mathbb{R}^{n \times n}$ is the dual variable \wrt the constraint $\bX \succcurlyeq \mathbf{0}$;
$\bu \in \mathbb{R}^m$ is the dual variable \wrt the constraints~\eqref{eq:backgd_sdp1_cons1},~\eqref{eq:backgd_sdp1_cons2}.

Since the primal problem~\eqref{eq:fastsdp_analysis3} is convex, and
both the primal and dual problems are feasible,  strong duality holds. 
The primal optimal $\bX^\star$ is a minimizer of $\mathrm{L}(\bX, \bu^\star, \bZ^\star)$,
\ie, $\nabla_{\bX = \bX^\star} \mathrm{L} (\bX, \bu^\star, \bZ^\star)  = 0$.
Then we have
\begin{align}
\bX^\star \!=\! \frac{1}{2\sigma} (\bZ^\star \!-\! \bA \!-\! \sum_{i=1}^{m} u_i^\star \bB_i) = \frac{1}{2\sigma} (\bZ^\star \!-\! \bC(\bu^\star)).
\end{align}  
By substituting $\bX^\star$ in the Lagrangian~\eqref{eq:fastsdp_lagrangian}, we obtain the dual problem:
\begin{align}
\max_{\bu, \bZ} &\quad -\frac{1}{4\sigma} \lVert \bZ - \bC(\bu) \rVert_F^2 - \bu^{\T} \bb - \sigma \eta^2, \label{eq:fastsdp_dual_2} \\
\sst            &\quad \bZ \succcurlyeq \mathbf{0}, \,\, u_j \geq \mathbf{0}, \ \forall j = p+1, \dots, m.  \notag
\end{align}
As the dual~\eqref{eq:fastsdp_dual_2} is still a SDP problem, 
it seems that no efficient method can be used to solve~\eqref{eq:fastsdp_dual_2} directly, other than the interior-point algorithms.

Fortunately, the p.s.d.\ matrix variable $\bZ$ can be eliminated.
Given a fixed $\bu$, the dual~\eqref{eq:fastsdp_dual_2} can be simplified to:
\begin{align}
\min_{\bZ} \  \lVert \bZ - \bC(\bu) \rVert_F^2, \ \sst \ \bZ \succcurlyeq \mathbf{0}. \label{eq:fastsdp_projection}
\end{align}
Based on~\eqref{eq:notation_1}, the problem~\eqref{eq:fastsdp_projection} has 
an explicit solution: $\bZ = \bC(\bu)_{+}$.
By substituting $\bZ$ to~\eqref{eq:fastsdp_dual_2}, the dual problem is simplified to~\eqref{eq:fastsdp_dual}.
\end{proof}

We can see that the simplified dual problem~\eqref{eq:fastsdp_dual} is {\em not} a SDP problem.
The number of dual variables is $m$, \ie, the number of constraints in the primal problem~\eqref{eq:fastsdp_analysis3}.
In most of cases, $m \ll n^2$ where $n^2$ is the number of primal variables, 
and so the problem size of the dual is much smaller than that of the primal.

The gradient of the objective function of~\eqref{eq:fastsdp_dual} can be calculated as
\begin{align}
\mathrm{g}(u_i) = - \frac{1}{2\sigma} \left\langle \bC(\bu)_{-}, \bB_i \right\rangle - b_i, \forall i = 1, \dots, m. \label{eq:fastsdp_grd}
\end{align}
Moreover, {\em the objective function of~\eqref{eq:fastsdp_dual} is differentiable but not necessarily twice differentiable,}
which can be inferred on the results in Sect.~5 in~\cite{boyd2004convex}.

Based on the following relationship: 
\begin{align} 
\bX^{\star} = \frac{1}{2\sigma} (\bC(\bu^\star)_{+} \!-\! \bC(\bu^\star)) = - \frac{1}{2\sigma} \bC(\bu^\star)_{-}, \label{eq:fastsdp_primal_dual}
\end{align}
the primal optimal $\bX^\star$ can be calculated from the dual optimal $\bu^\star$.

{\bf Implementation}
We have used L-BFGS-B~\cite{Zhu94lbfgsb} for the optimization of~\eqref{eq:fastsdp_dual}.
All code is written in MATLAB (with mex files) and the results are tested on a $2.7$GHz Intel CPU.

The convergence tolerance settings of L-BFGS-B is set to the default, and the number of limited-memory vectors is set to $200$.
Because we need to calculate the value and gradient of the dual objective function at each gradient-descent step,
a partial eigen-decomposition should be performed to compute $\bC(\bu)_{-}$ at each iteration; this is the most computationally expensive part.  
The default ARPACK embedded in MATLAB is used to calculate the eigenvectors smaller than $0$. 
Based on the above analysis, a small $\sigma$ will improve the solution accuracy;
but we find that the optimization problem becomes ill-posed for an extremely small $\sigma$, and more iterations are needed for convergence.
In our experiments, $\sigma$ is set within the range of $[10^{-4}, 10^{-2}]$.

There are several techniques to speed up the eigen-decomposition process for \fastsdp:
(1)   In many cases, the matrix $\bC(\bu)$ is sparse or structural, which leads to an efficient way for calculating $\bC \bx$ for an arbitrary vector $\bx$.
Furthermore, because ARPACK only needs a callback function for the matrix-vector multiplication,
the process of eigen-decomposition can be very fast for matrices with specific structures.
(2)
As the step size of gradient-descent, $\lVert \Delta \bu \rVert_1$, becomes significantly small after some initial iterations, 
the difference $\lVert \bC(\bu) \!- \!\bC(\bu\!+\!\Delta \bu) \rVert_1$ turns to be small as well.
Therefore, the eigenspace of the current $\bC$ is a good choice of the starting point for the next eigen-decomposition process.
A suitable starting point can accelerate convergence considerably.

After solving the dual using L-BFGS-B, the optimal primal $\bX^\star$
is calculated from the dual optimal $\bu^\star$ based on~\eqref{eq:fastsdp_primal_dual}.

Finally, the optimal variable $\bX^\star$ should be discretized to the feasible binary solution $\bx^\star$.
The discretization method is dependent on specific applications, which
will be discussed separately in the section of applications.

In summary, the \fastsdp is solved by the following steps.

\noindent
{\bf Step 1}: Solve the dual problem~\eqref{eq:fastsdp_dual} using L-BFGS-B,
    based on the application-specific $\bA$, $\bB$, $\bb$ and the
    $\sigma$ chosen by the user.
      The gradient of the objective function is calculated through~\eqref{eq:fastsdp_grd}. 
      The optimal dual variable $\bu^\star$ is obtained when the dual~\eqref{eq:fastsdp_dual} is solved.

\noindent
{\bf Step 2}:
    Compute the optimal primal variable $\bX^\star$ using~\eqref{eq:fastsdp_primal_dual}.
 
\noindent    
{\bf Step 3}:    
    Discretize $\bX^\star$ to a feasible binary solution $\bx^\star$.

{\bf Computational Complexity}
The complexity for eigen-decomposition is $\mathcal{O}(n^3)$ where $n$ is the number of rows of matrix $\bA$,
therefore our method is $\mathcal{O}(kn^3)$ where $k$ is the number of gradient-descent steps of L-BFGS-B.
$k$ can be considered as a constant, which is irrelevant with the matrix size in our experiments.
Spectral methods also need the computation of the eigenvectors of the same matrix $\bA$,
which means they have the same order of complexity with \fastsdp.
As the complexity of  interior-point SDP solvers is $\mathcal{O}(n^{6.5})$,
our method is much faster than the conventional SDP method. 

Our method can be further accelerated by using faster eigen-decomposition method:
a problem that has been studied in depth for a long time.
Efficient algorithms and well implemented toolboxes have been available recently.
By taking advantage of them, \fastsdp can be applied to even larger problems.

\section{Applications}

In this section, we show several applications of \fastsdp in computer vision.
Because \fastsdp can handle different types of constraints (equality/inequality, linear/quadratic),
it can be applied to more problems than spectral methods.

\subsection{Application 1: Graph Bisection}

{\bf Formulation}
Graph bisection is a problem of separating the vertices of a weighted graph into two disjoint sets with equal cardinality, 
and minimize the total weights of cut edges.
The problem can be formulated as:
\begin{align}
\min_{\bx \in \{-1,+1\}^n} \bx^{\T} \bL \bx, \ \sst \  \bx^{\T} \be = 0,
\label{eq:gb_1}
\end{align}
where $\bL = \bD - \bW$ is the graph Laplacian matrix, 
$\bW$ is the weighted affinity matrix,  and $\bD = \mathbf{diag}(\bW \be)$ is the degree matrix.
The classic spectral clustering approaches, \eg, RatioCut and NCut~\cite{Shi2000normalized}, are in the following forms:
\begin{align}
\text{{ RatioCut:}}  &\ \min_{\bx \in \mathbb{R}^n} \bx^{\T} \bL         \bx, \ \sst \  \bx^{\T} \be  = 0, \lVert \bx \rVert_2^2 = n,  \\
\text{{ NCut:}}      &\ \min_{\bx \in \mathbb{R}^n} \bx^{\T} \tilde{\bL} \bx, \ \sst \  \bx^{\T} \bc  = 0, \lVert \bx \rVert_2^2 = n, 
\end{align}
where $\tilde{\bL} = \bD^{-1/2} \bL \bD^{-1/2}$ and $\bc = \bD^{1/2} \be$. 
The solutions of RatioCut and NCut are the second least eigenvectors of $\bL$ and $\tilde{\bL}$, respectively.
\begin{figure}[t]
\centering
\subfloat{
\includegraphics[width=0.11\textwidth]{./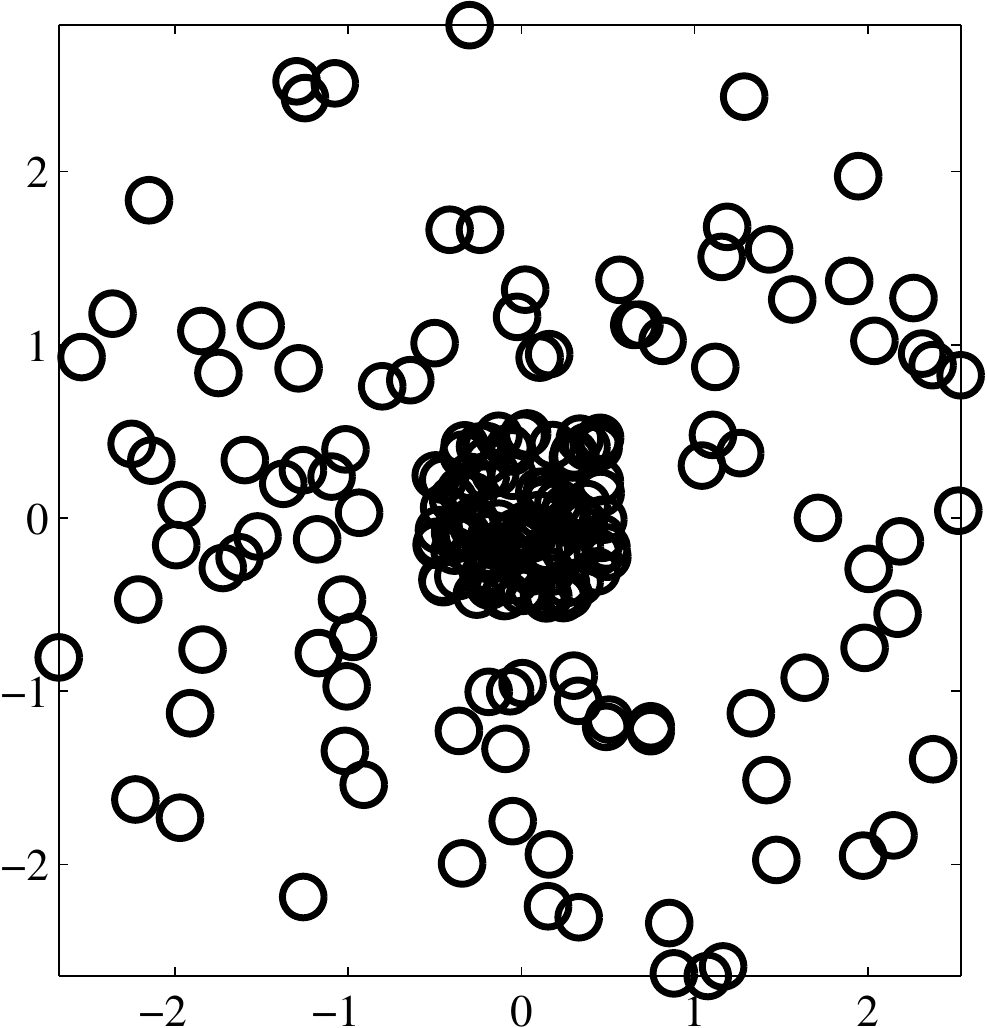}
\includegraphics[width=0.11\textwidth]{./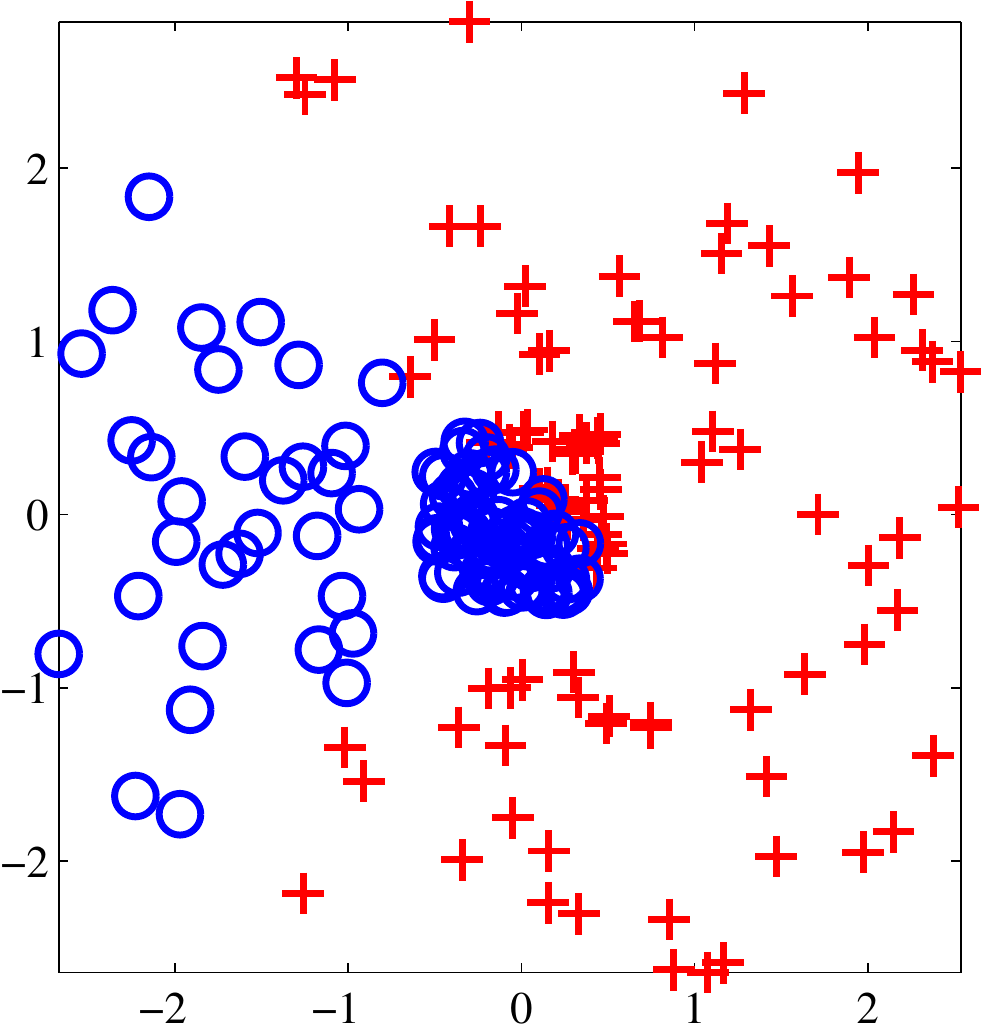}
\includegraphics[width=0.11\textwidth]{./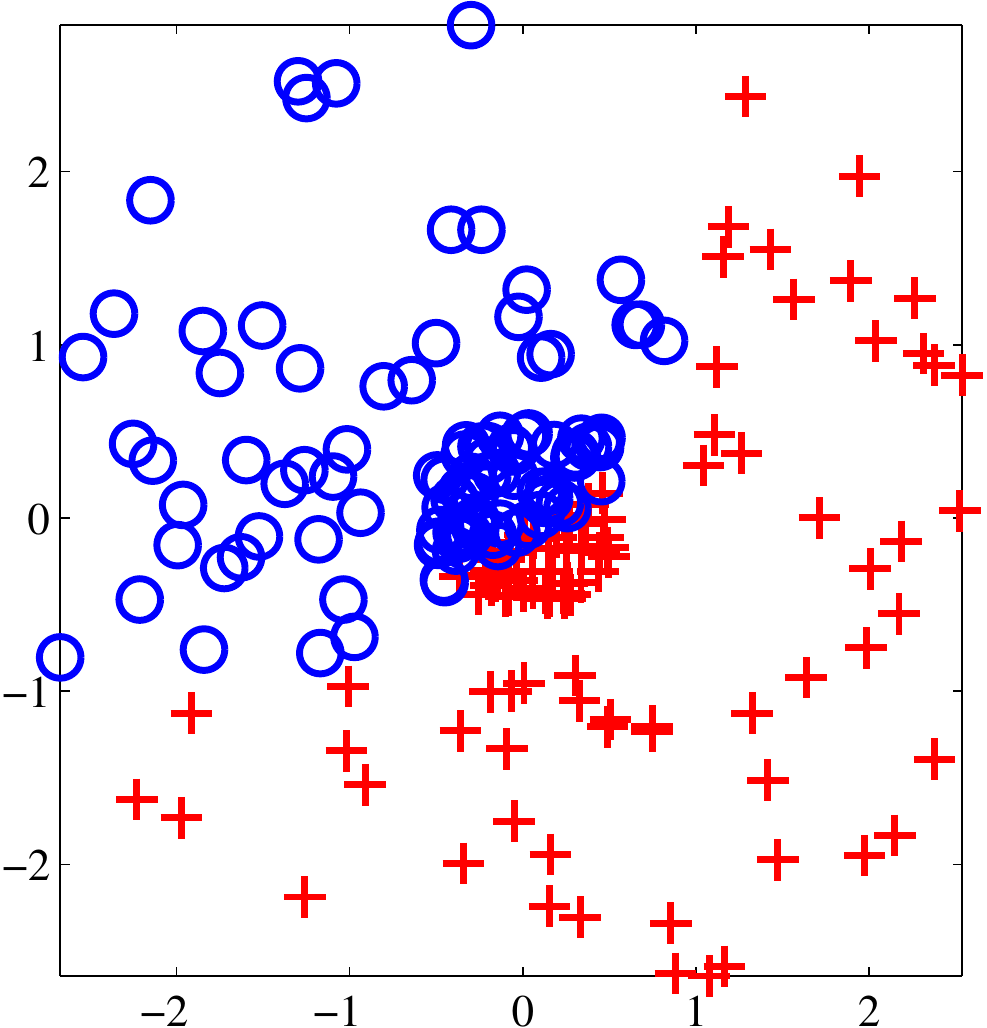}
\includegraphics[width=0.11\textwidth]{./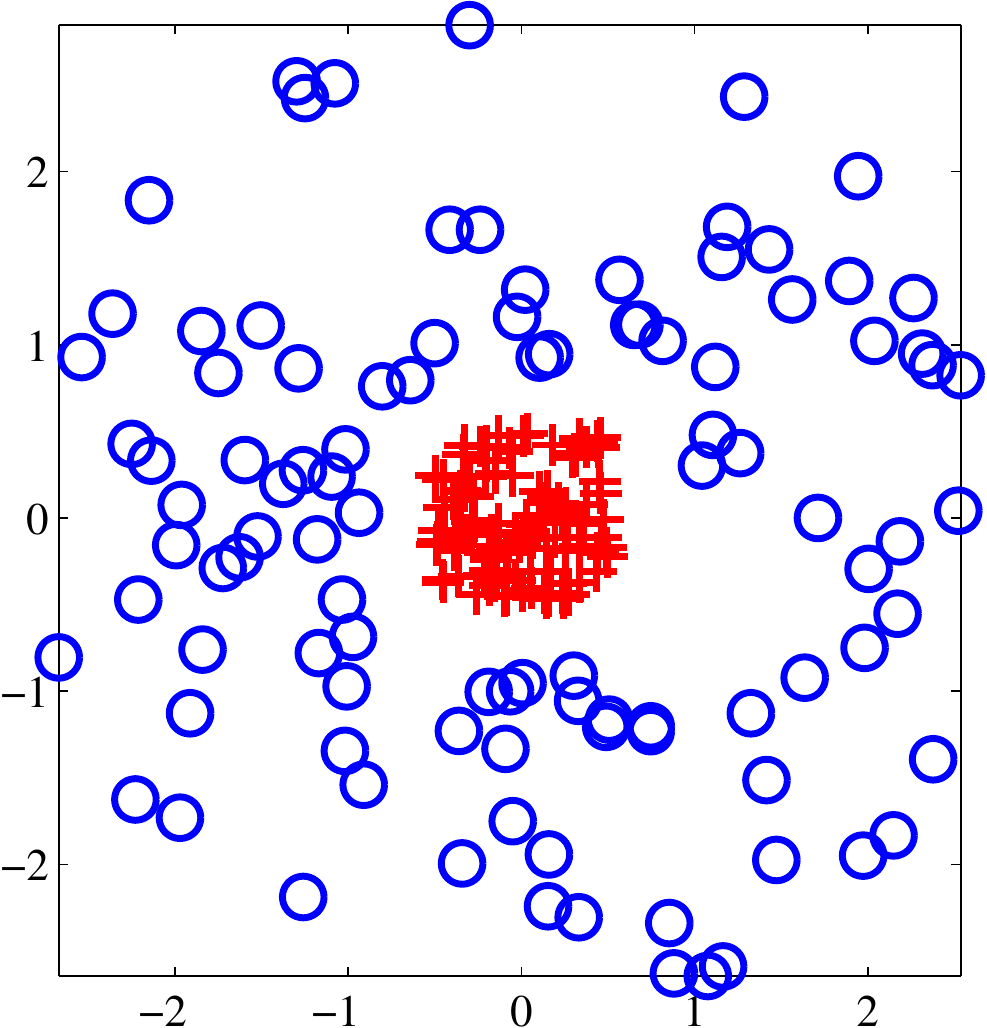}
\centering
}\\
\subfloat{
\centering
\includegraphics[width=0.11\textwidth]{./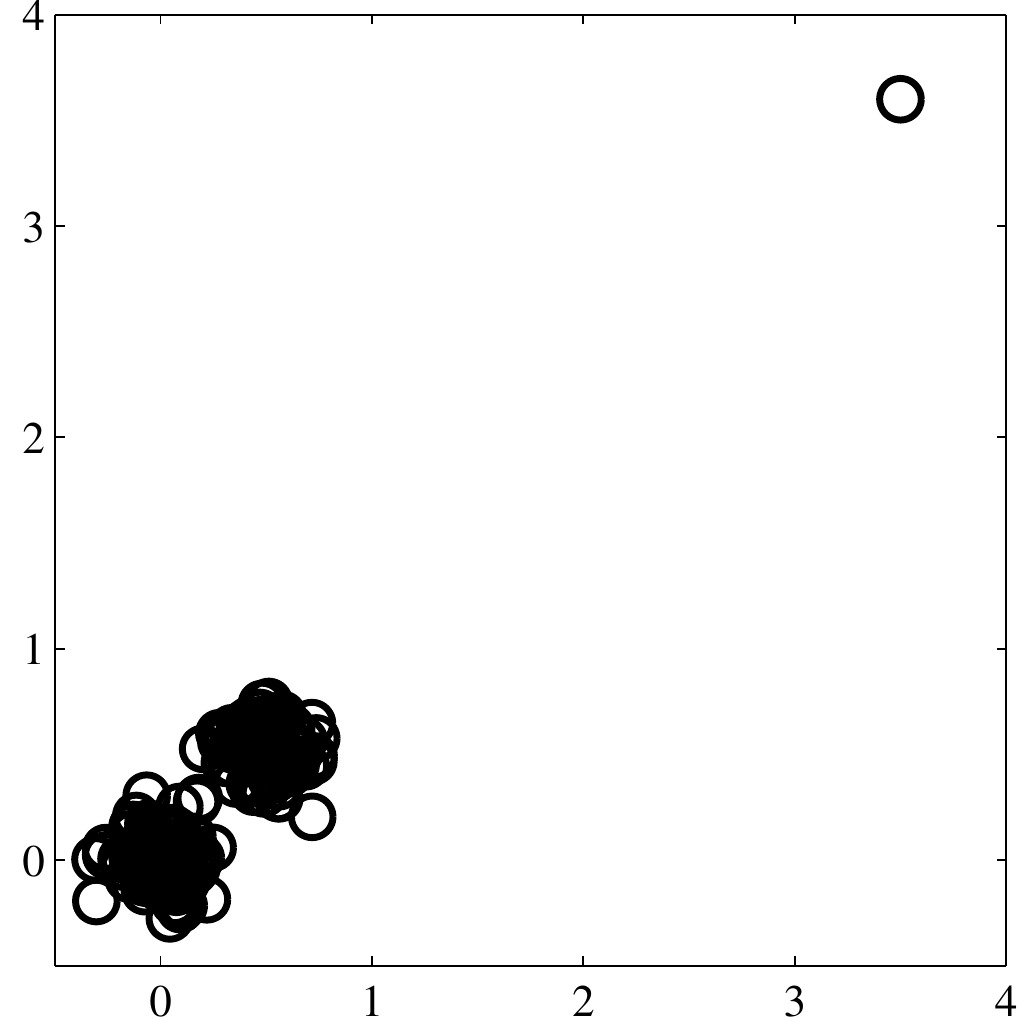} 
\includegraphics[width=0.11\textwidth]{./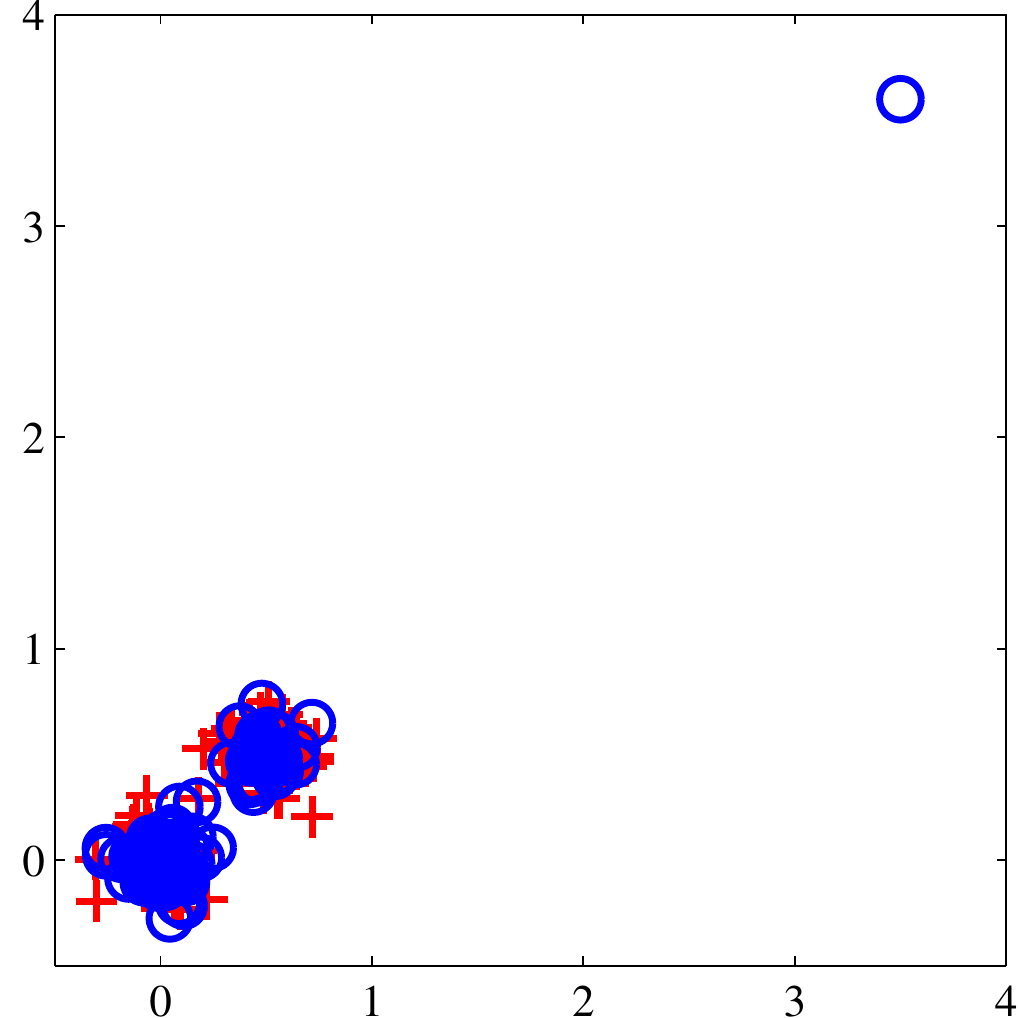}
\includegraphics[width=0.11\textwidth]{./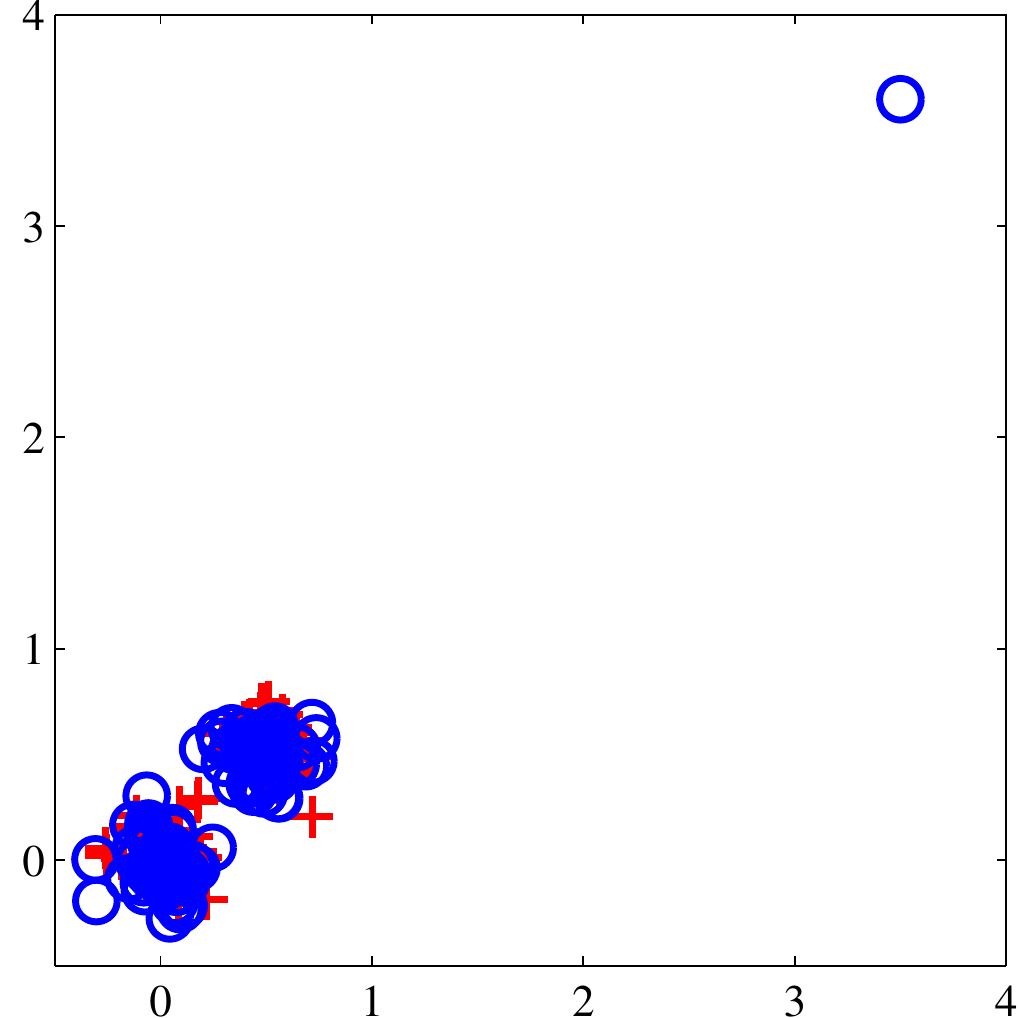}
\includegraphics[width=0.11\textwidth]{./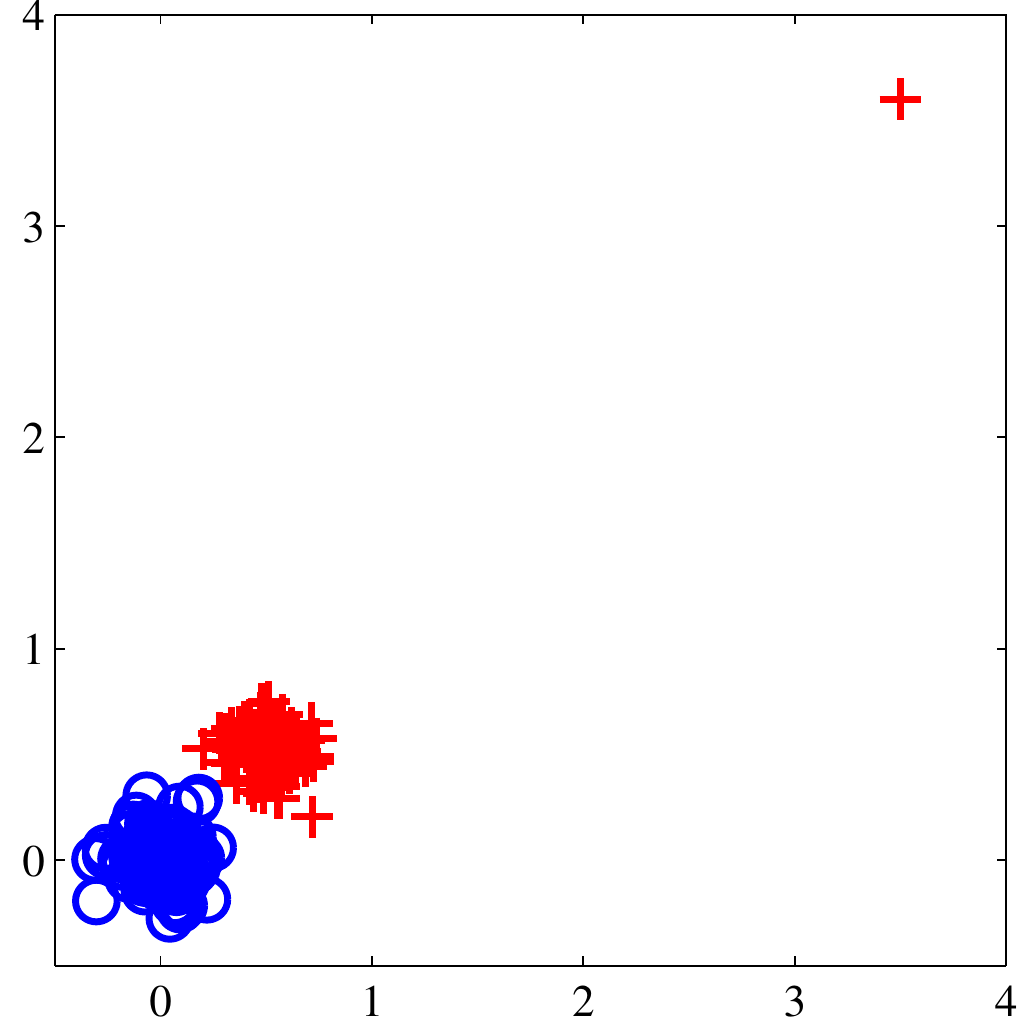}
}\\
{\footnotesize Original data} \hspace{0.8cm} {\footnotesize NCut} \hspace{1cm} {\footnotesize RatioCut} \hspace{1cm} {\footnotesize \fastsdp} 
\caption{Results of 2d points bisection. %
The thresholds are set to the median of score vectors. The two classes
of points are shown in red  `+'  and blue `$ \circ $'.
RatioCut and NCut fail to separate the points correctly, while \fastsdp succeeds.}
\label{fig:2d-cluster}
\end{figure}
\begin{figure*}[t]
\begin{minipage}[t]{0.31\linewidth}
    \centering
\includegraphics[width=.825\linewidth]{./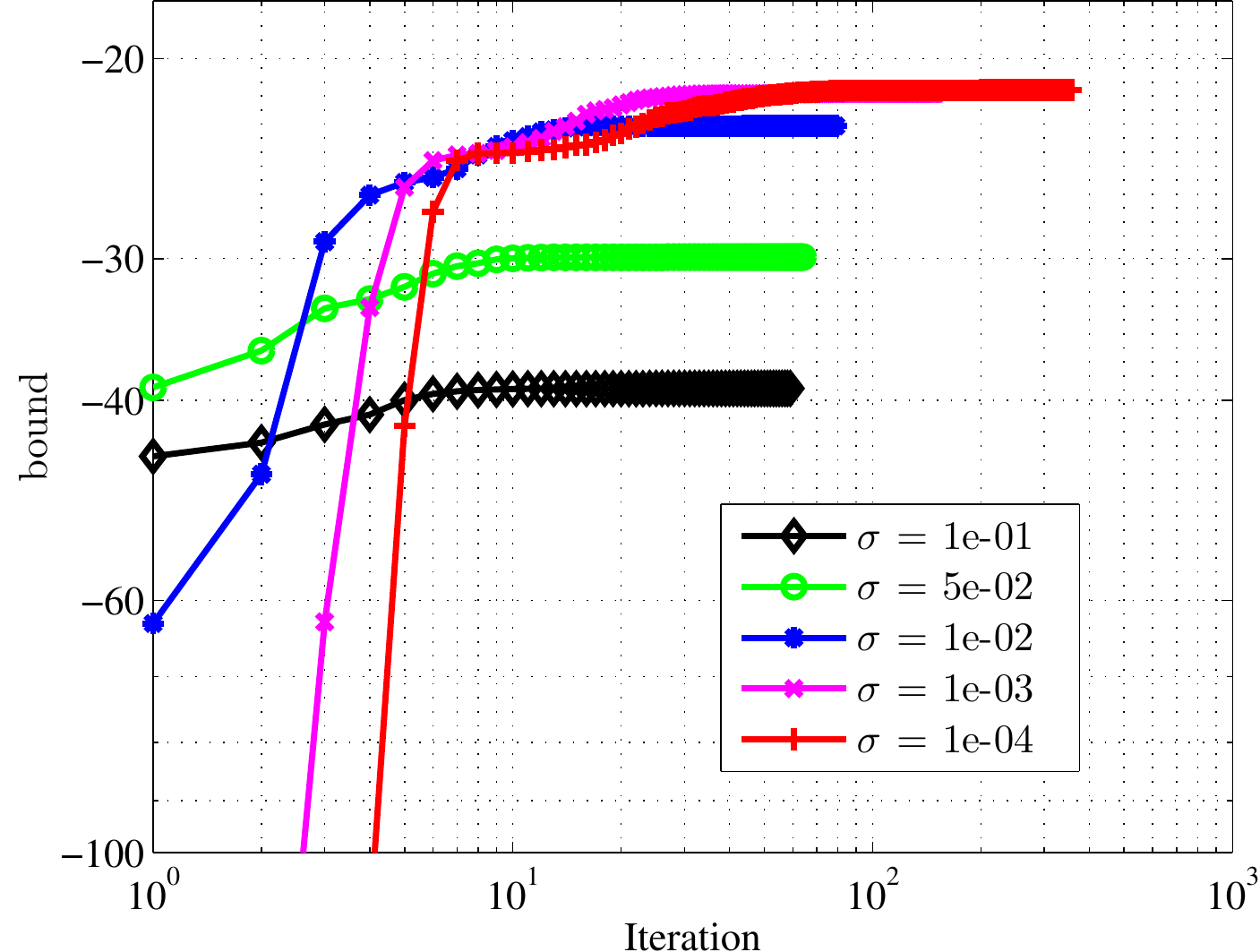}
\caption{The convergence of the objective value of the dual~\eqref{eq:fastsdp_dual}, which can be seen as a lower bound. 
         \fastsdp is tested to bisect a random graph with $200$ vertices and $0.5$ density. The bound is better when $\sigma$ is smaller.}
\label{fig:tst_sigma_obj}
\end{minipage}
\hspace{0.5cm}
\begin{minipage}[t]{0.64\linewidth}
\centering
\includegraphics[width=0.4\textwidth]{./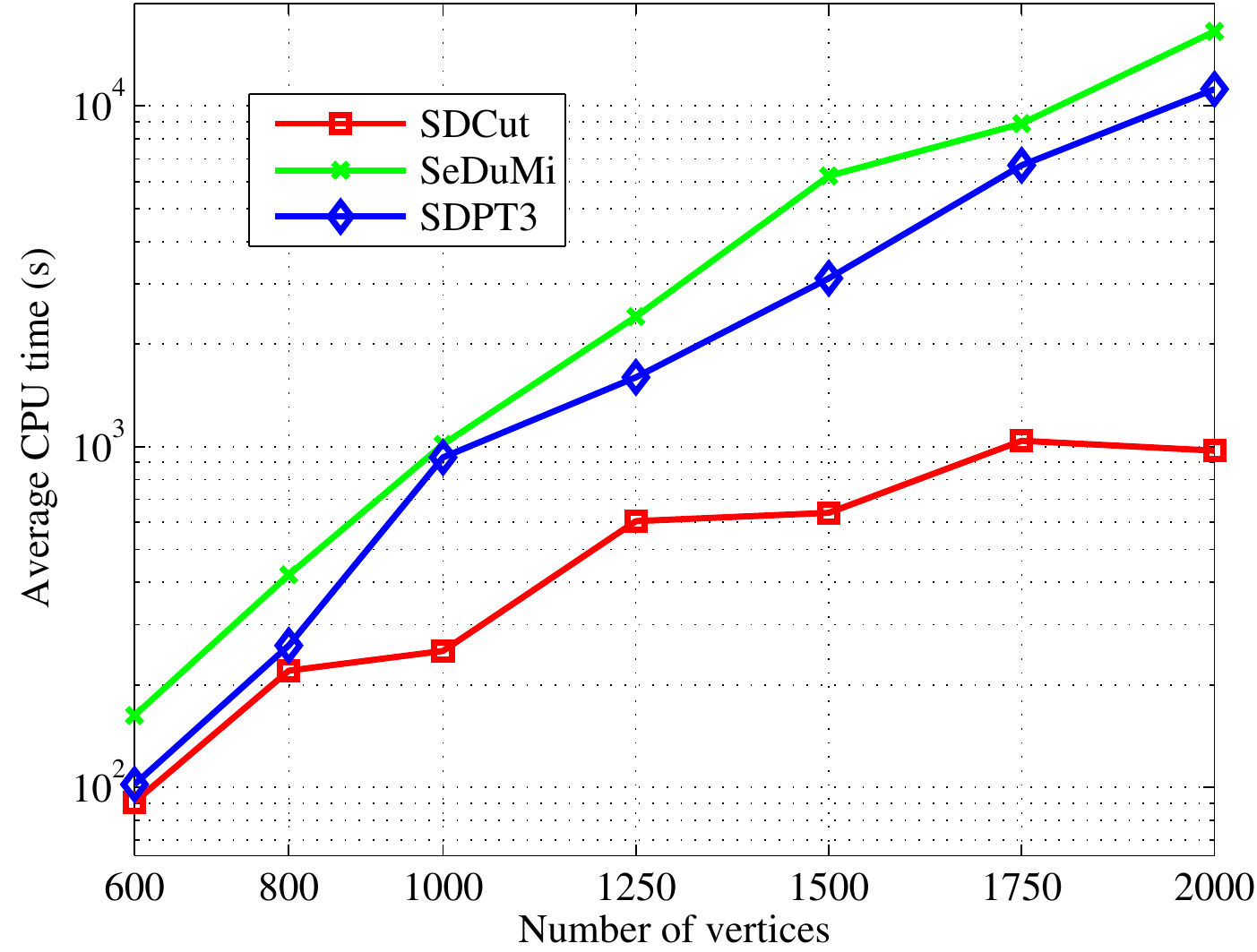}
\includegraphics[width=0.4\textwidth]{./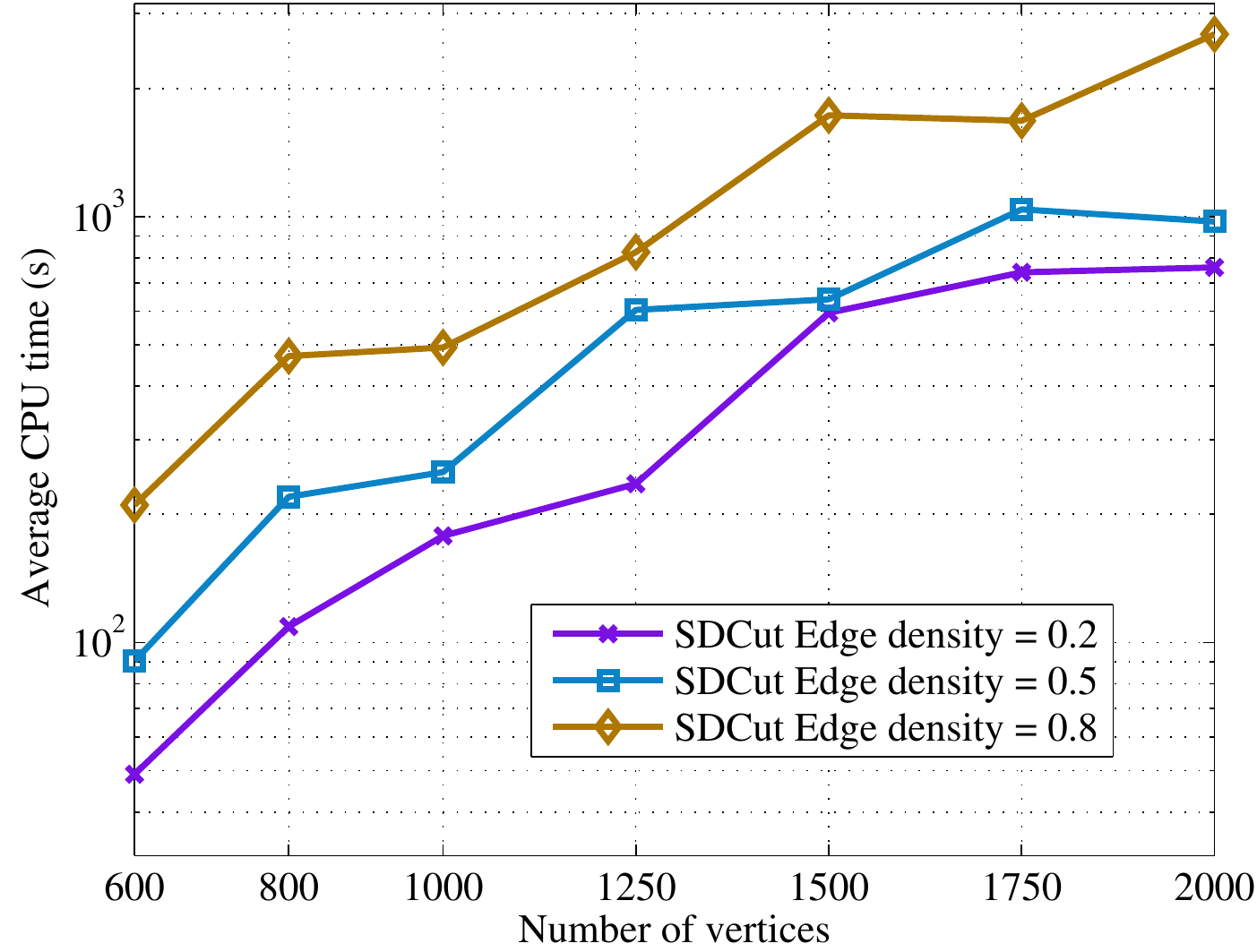}
\caption{Computation time for graph bisection. All the results are the average of $5$ random graphs.
Left: Comparison of \fastsdp, SeduMi and SDPT3. 
Right: Comparison of \fastsdp under different edge densities. $\sigma$ is set to $10^{-3}$ in this case.
\fastsdp is much more faster than the conventional SDP methods, and is faster when the graph is sparse.}
\label{fig:tst_time}
\end{minipage}
\end{figure*}
For~\eqref{eq:gb_1}, $\bX = \bx \bx^{\T}$ satisfies:
\begin{align}
\mathbf{diag}(\bX) = \be, \ \text{and} \ \langle \bX, \be \be^{\T} \rangle = 0. \label{eq:graph_bisect_cons_1}
\end{align}
Since $\bx^{\T} \bD \bx$ is constant for $\bx \in \{-1,1\}^n$, we have
\begin{align}
\min_{\bx \in \{-1,1\}^n} \bx^{\T} \bL \bx \Longleftrightarrow \min_{\bx \in \{-1,1\}^n}  \bx^{\T} (-\bW) \bx.
\end{align}
By substituting $-\bW$ and the constraints~\eqref{eq:graph_bisect_cons_1} into~\eqref{eq:backgd_sdp1} and~\eqref{eq:fastsdp_analysis3},
we then have the formulation of the conventional SDP method and \fastsdp.
To obtain the discrete result from the solution $\bX^\star$, %
we adopt the randomized rounding method in~\cite{Goemans95improved}:
a score vector $\bx^{\star}_r$ is generated from a Gaussian distribution with mean $0$ and covariance $\bX^\star$, 
and the discrete vector $\bx^\star \in \{ -1,1\}^n$ is obtained by thresholding $\bx^{\star}_r$ with its median. 
This process is repeated several times and the final solution  is the one with the highest objective value.

{\bf Experiments}
To show the new SDP formulation has better solution quality than spectral relaxation, 
we compare the bisection results of RatioCut, NCut and \fastsdp %
on two artificial 2-dimensional data.
As shown in Fig.~\ref{fig:2d-cluster}, the data in the first row contain two point sets with different densities, and the second data
contain an outlier.
The similarity matrix $\bW$ is calculated based on the Euclidean distance of points $i$ and~$j$:
\begin{align}
 \bW_{ij} = \left\{ \begin{array}{ll} \mathrm{exp}(-\mathrm{d}(i,j)^2 / \gamma^2)  & \mbox{if }   \mathrm{d}(i,j) < r \\
                0, & \mbox{otherwise.} \end{array} \right.
\end{align}
The parameter $\gamma$ is set to $0.1$ of the maximum distance.
RatioCut and NCut fail to offer satisfactory results on both of the
data sets, possibly due to the loose bound of spectral relaxation.
Our \fastsdp achieves better results on these data sets.

\begin{table}[t]
  \centering
  \footnotesize
  \begin{tabular}{c|ccccc}
  \hline
     $\sigma$       & bound & obj & norm & rank & iters\\
  \hline 
     $10^{-1}$ & $-39.04$  & $-20.55$ & $55.05$ & $18$ & $59$ \\
     $5\times 10^{-2}$ & $-29.91$  & $-20.92$ & $63.80$ & $14$ & $64$ \\
     $10^{-2}$ & $-22.93$  & $-21.26$ & $81.32$ & $9$  & $79$ \\
     $10^{-3}$ & $-21.45$  & $-21.29$ & $87.91$ & $7$  & $150$ \\
     $10^{-4}$ & $-21.31$  & $-21.31$ & $88.68$ & $7$  & $356$ \\
  \hline
  \end{tabular}
  \caption{Effect of $\sigma$. The lower bound, objective value $\langle \bX^\star, -\bW \rangle$, norm and rank of $\bX^\star$ and iterations are shown in each column.
           The number of variables is $19900$ for SDP problems. The results correspond to Fig.~\ref{fig:tst_sigma_obj}. 
           Better solution quality and more iterations are achieved when $\sigma$ becomes small.}
  \label{tab:sigma}
\end{table}

Moreover, to demonstrate the impact of the parameter~$\sigma$,
we test \fastsdp on a random graph with different $\sigma$'s.
The graph has $200$ vertices and its edge density is $0.5$: $50\%$ of edges are assigned with a weight uniformly sampled from $[0,1]$, 
the other half has zero-weights.
In Fig.~\ref{fig:tst_sigma_obj}, we show the convergence of the objective value of the dual~\eqref{eq:fastsdp_dual}, \ie a lower bound of the objective value of the problem~\eqref{eq:backgd_sdp1}.
A smaller $\sigma$ leads to a higher (better) bound.
The optimal objective value of the conventional SDP method is $-21.29$.
For $\sigma = 10^{-4}$, the bound of \fastsdp ($-21.31$) is very close to the SDP optmial.
Table~\ref{tab:sigma} also shows the objective value, the Frobenius norm and the rank of solution $\bX^\star$.
With the decrease of $\sigma$, the quality of the solution $\bX$ is further optimized (the objective value is smaller and the rank is lower).
However, the price of higher quality is the slow convergence speed: more iterations are needed for a smaller $\sigma$.

Finally, experiments are performed to compare the computation time under different conditions.
All the times shown in Fig.~\ref{fig:tst_time} are the mean of $5$ random graphs when $\sigma$ is set to $10^{-3}$.
\fastsdp, SeDuMi and SDPT3 are compared with graph sizes ranging from $600$ to $2000$ vertices.
Our method is faster than SeDuMi and SDPT3 on all graph sizes. When the problem size is larger, the speedup is more significant.
For graphs with $2000$ vertices, 
\fastsdp runs $11.5$ times faster than SDPT3 and $17.0$ times faster than SeDuMi.
The computation time of \fastsdp is also tested under $0.2$, $0.5$ and $0.8$ edge density.
Our method runs faster for smaller edge densities, which validates that our method can take the advantage of graph sparsity.

We also test the memory usage of MATLAB for \fastsdp, SeDuMi and SDPT3.
Because L-BFGS-B and ARPACK use limited memory, the total memory used by our method is also relatively small. 
Given a graph with $1000$ vertices, \fastsdp requires $100$MB memory, while SeDuMi and SDPT3 use around $700$MB.

\subsection{Application 2: Image Segmentation}

\begin{figure*}[t]
\vspace{-0.0cm}
\centering
\subfloat{
\centering
\includegraphics[width=0.1\textwidth]{./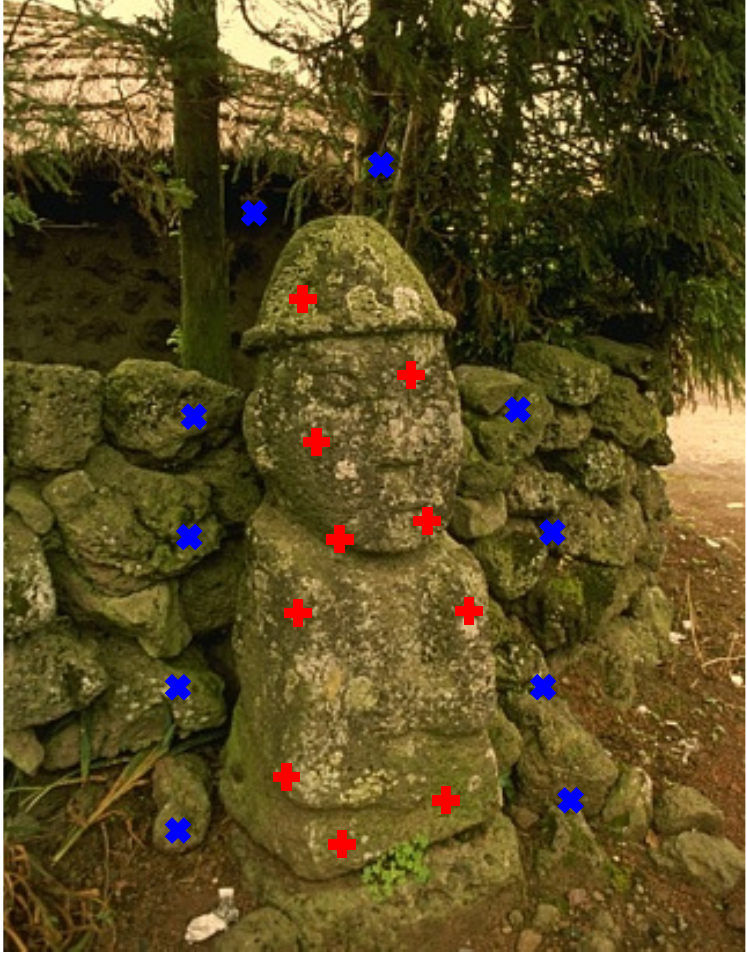}
\includegraphics[width=0.192\textwidth]{./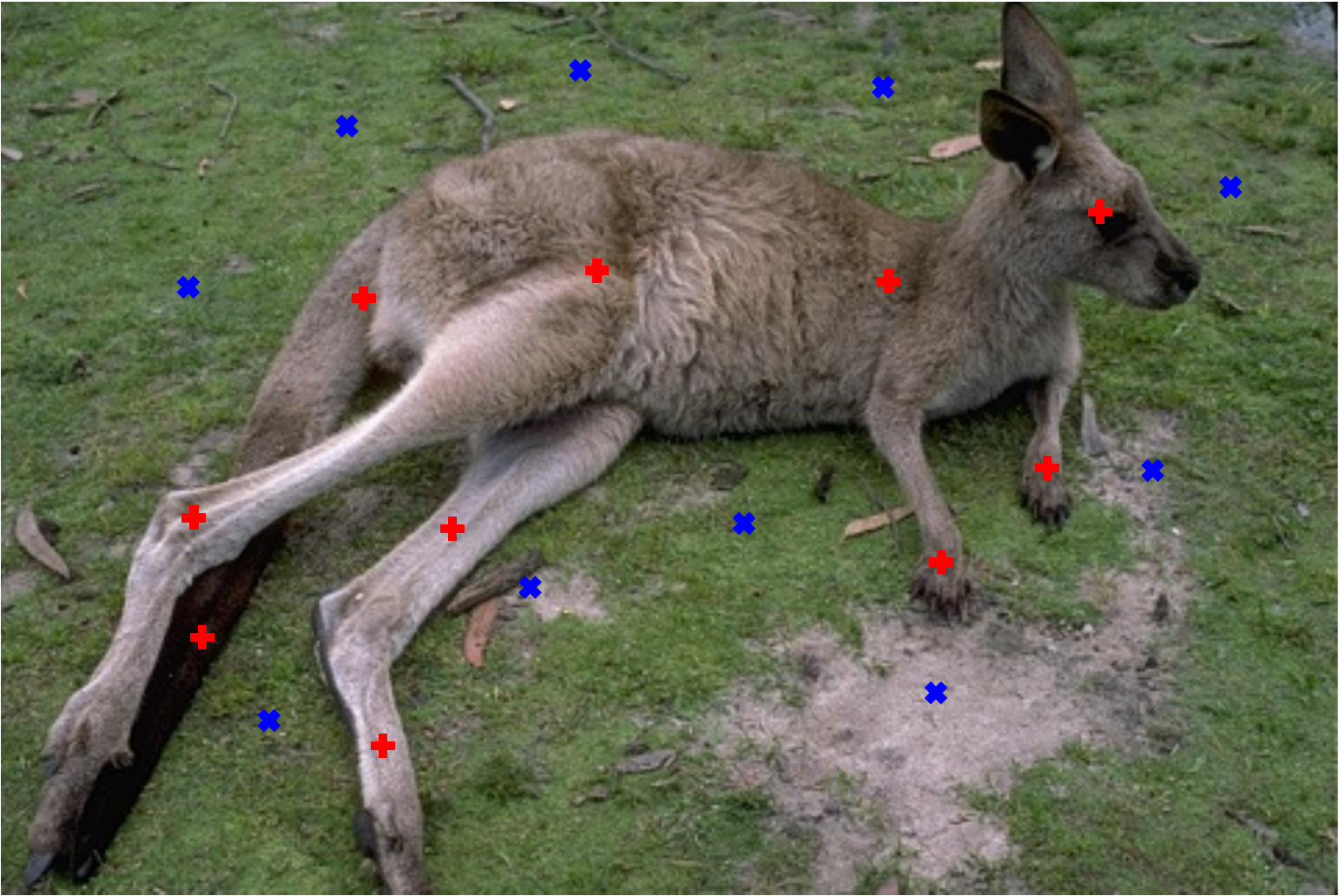}
\includegraphics[width=0.192\textwidth]{./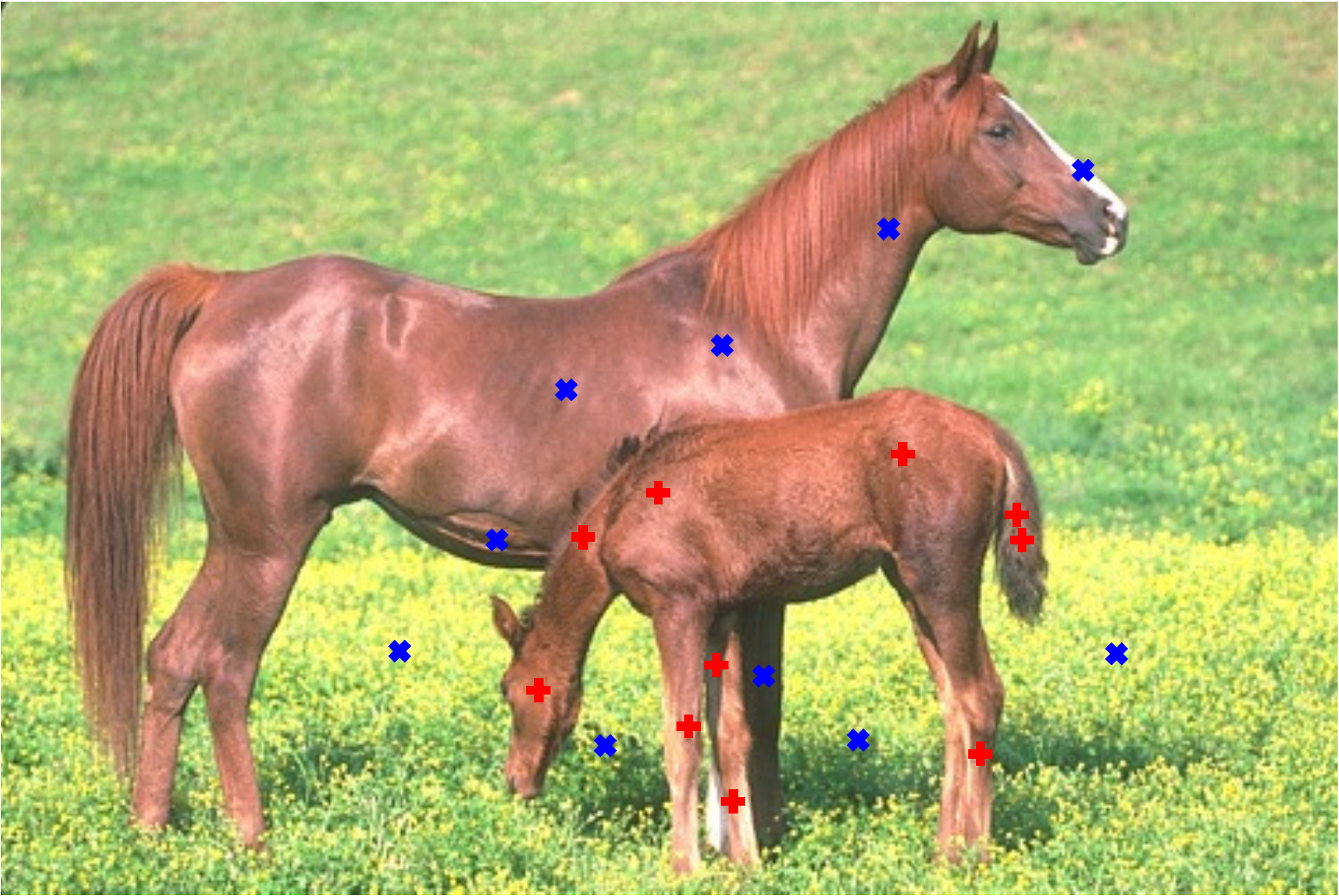}
\includegraphics[width=0.192\textwidth]{./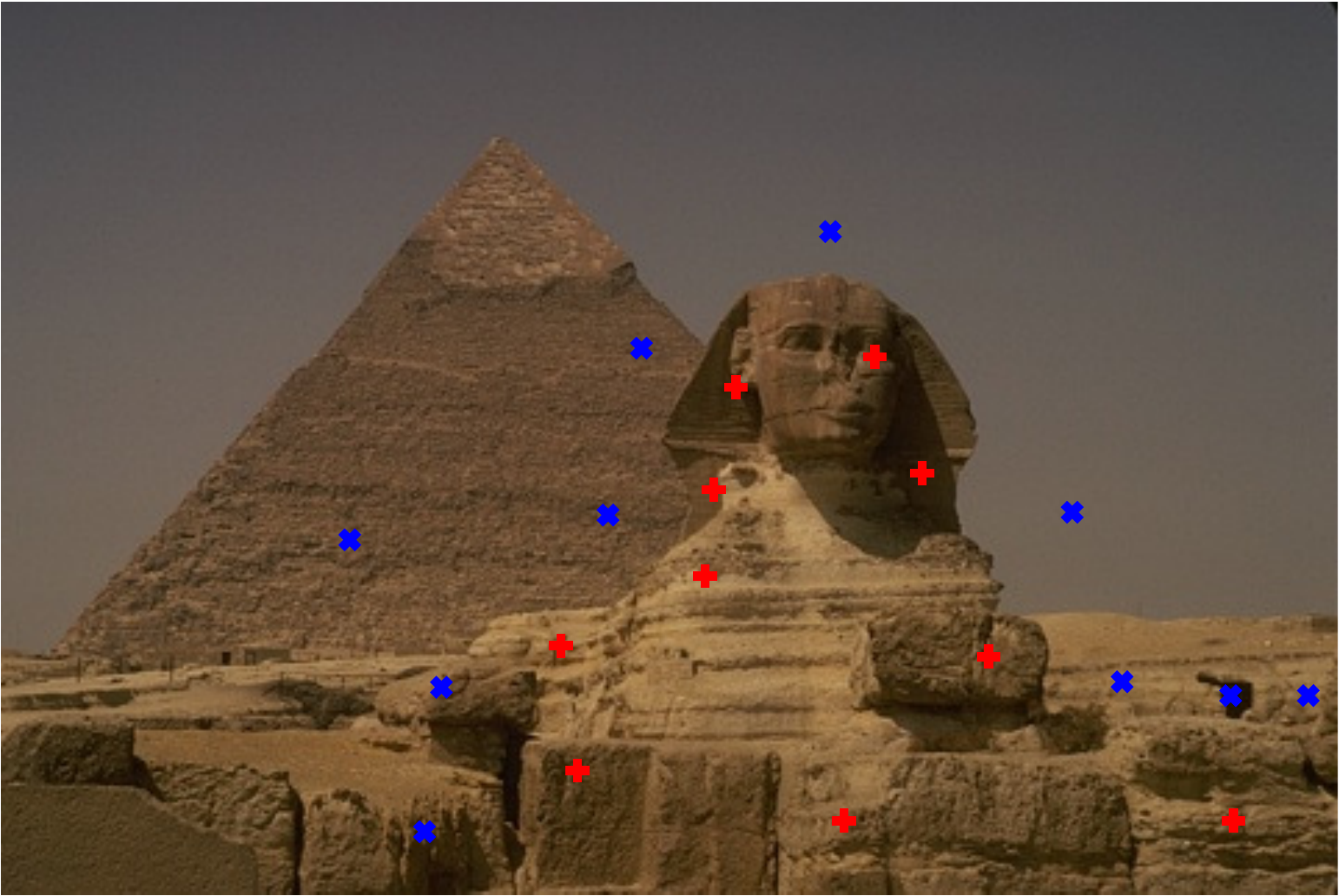}
\includegraphics[width=0.156\textwidth]{./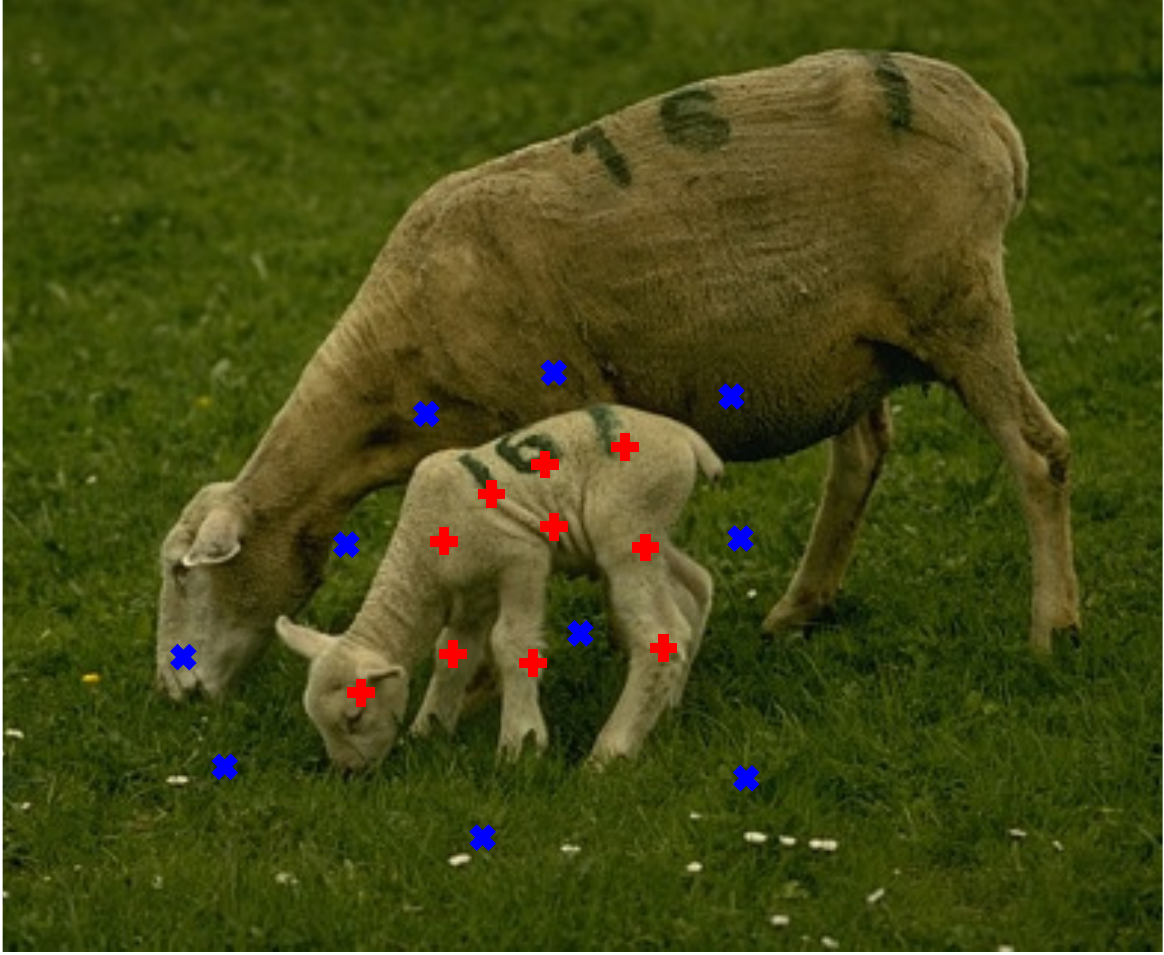}}\\
\subfloat{
\centering
\includegraphics[width=0.1\textwidth]{./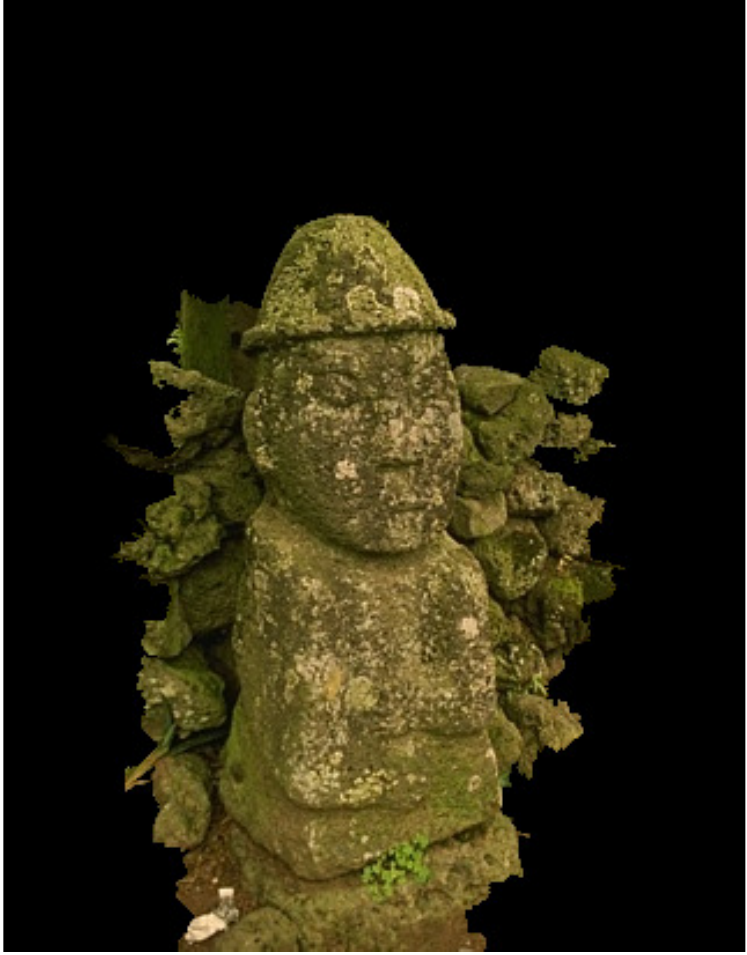}
\includegraphics[width=0.192\textwidth]{./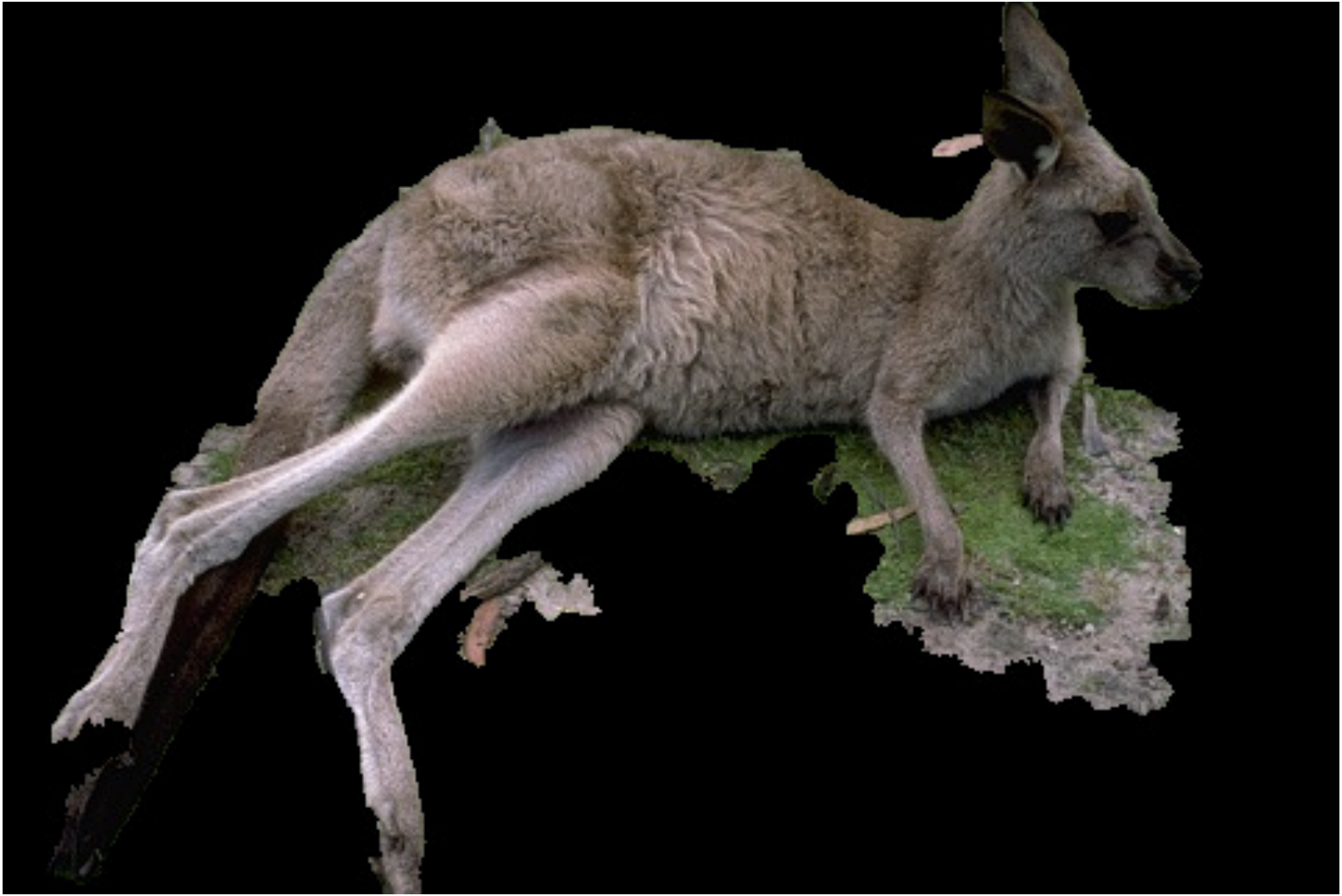}
\includegraphics[width=0.192\textwidth]{./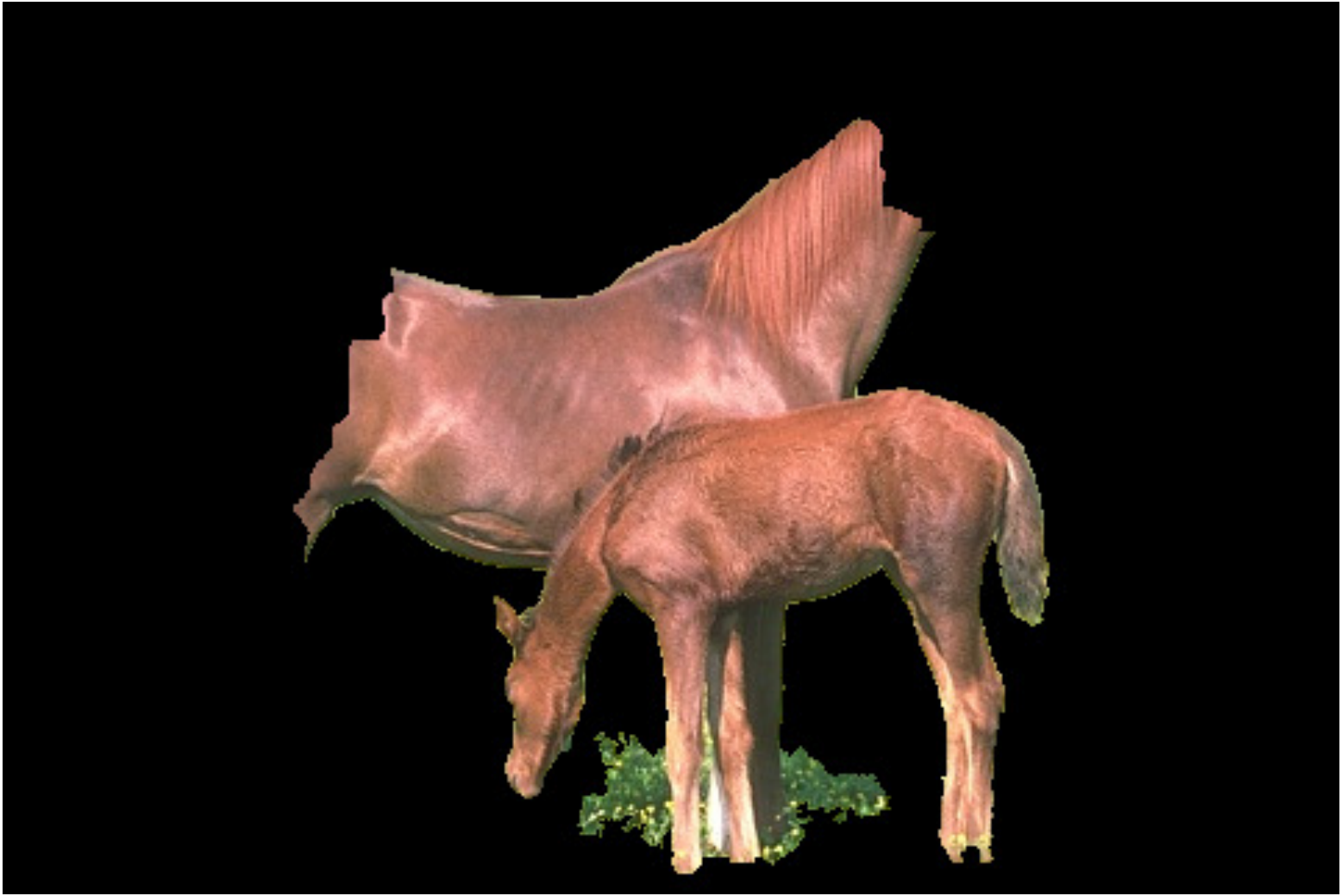}
\includegraphics[width=0.192\textwidth]{./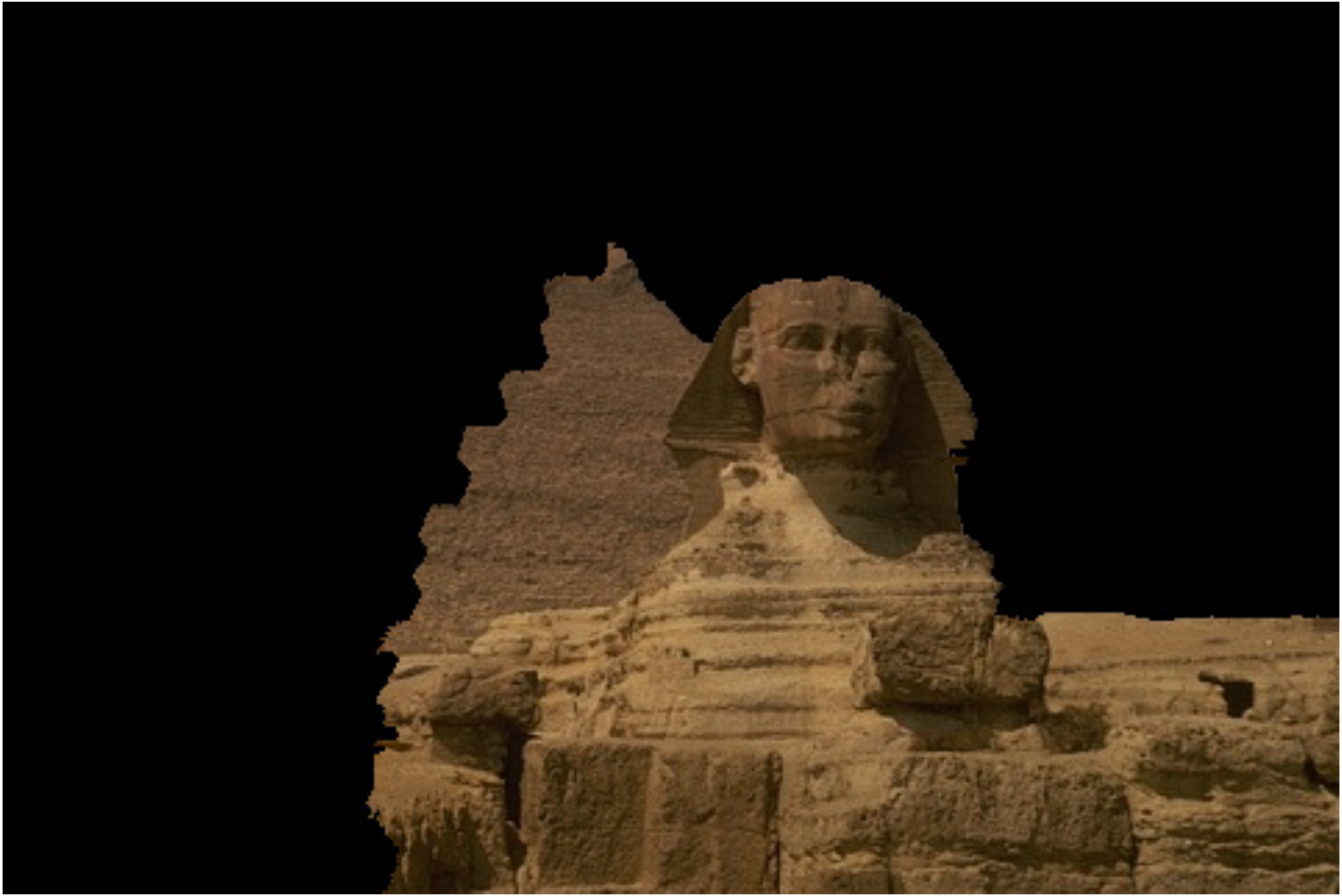}
\includegraphics[width=0.156\textwidth]{./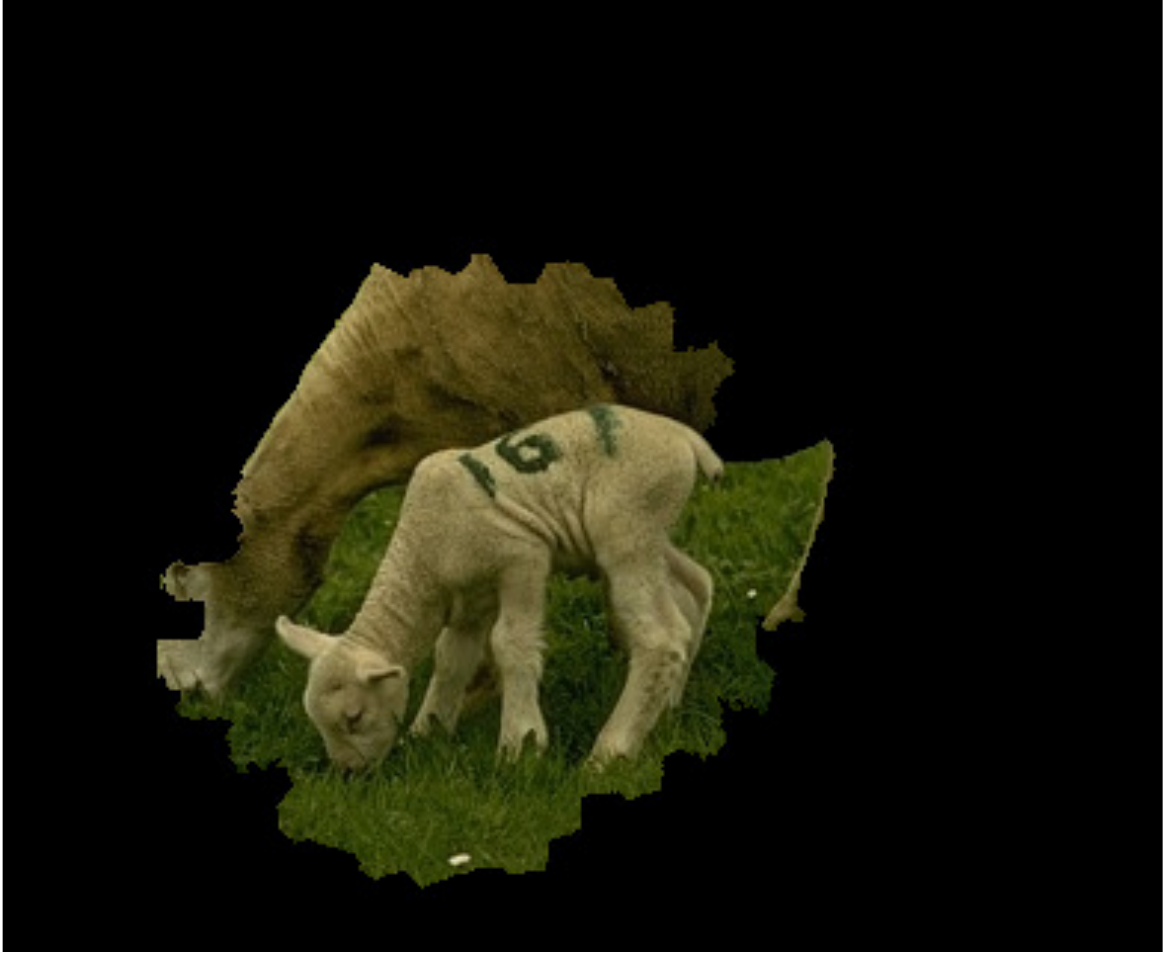}}\\
\subfloat{
\centering
\includegraphics[width=0.1\textwidth]{./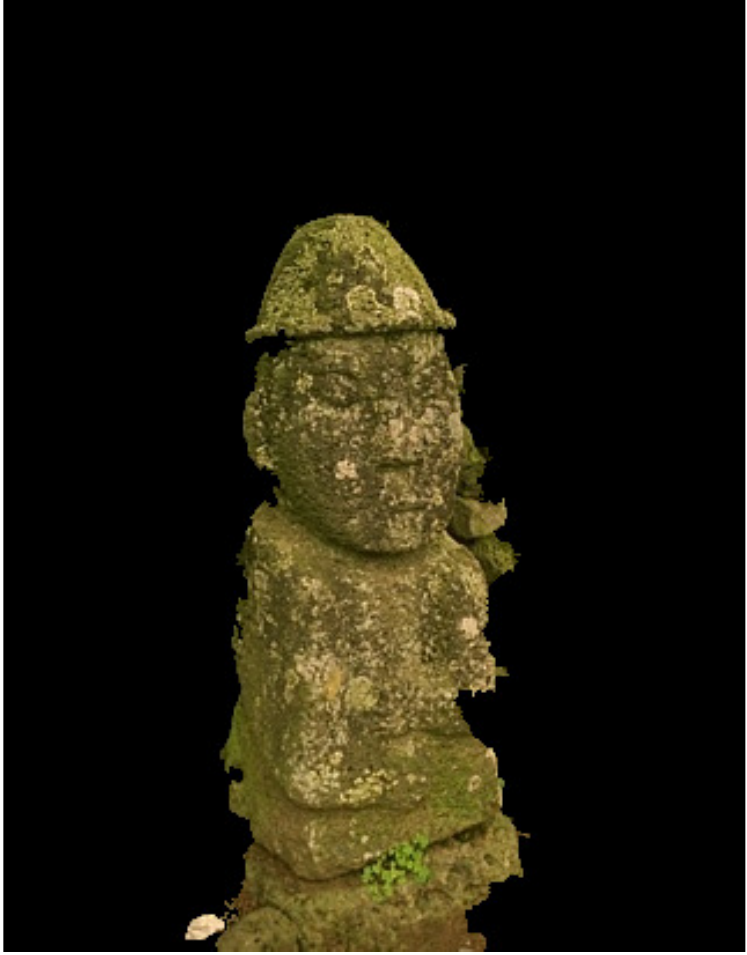}
\includegraphics[width=0.192\textwidth]{./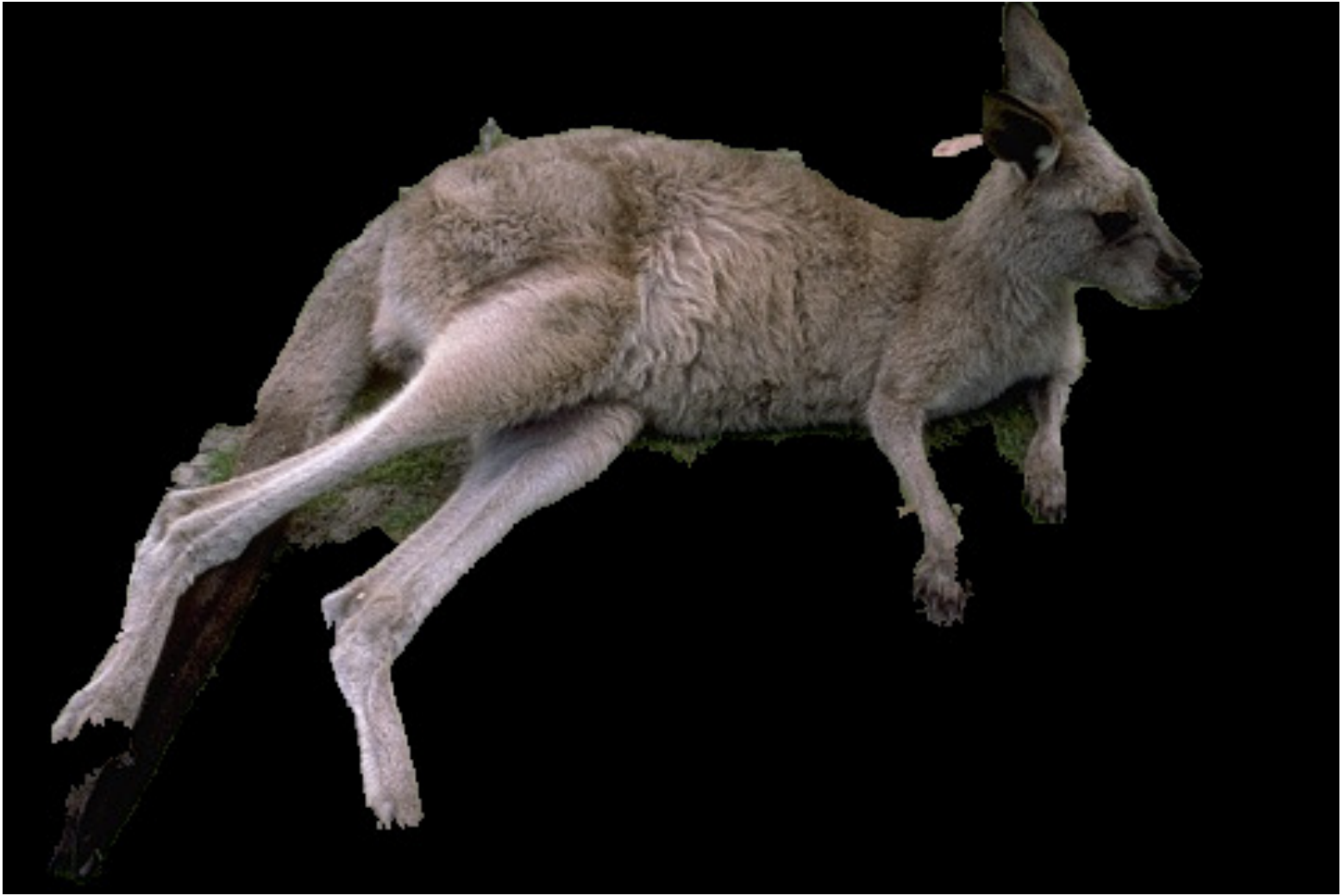}
\includegraphics[width=0.192\textwidth]{./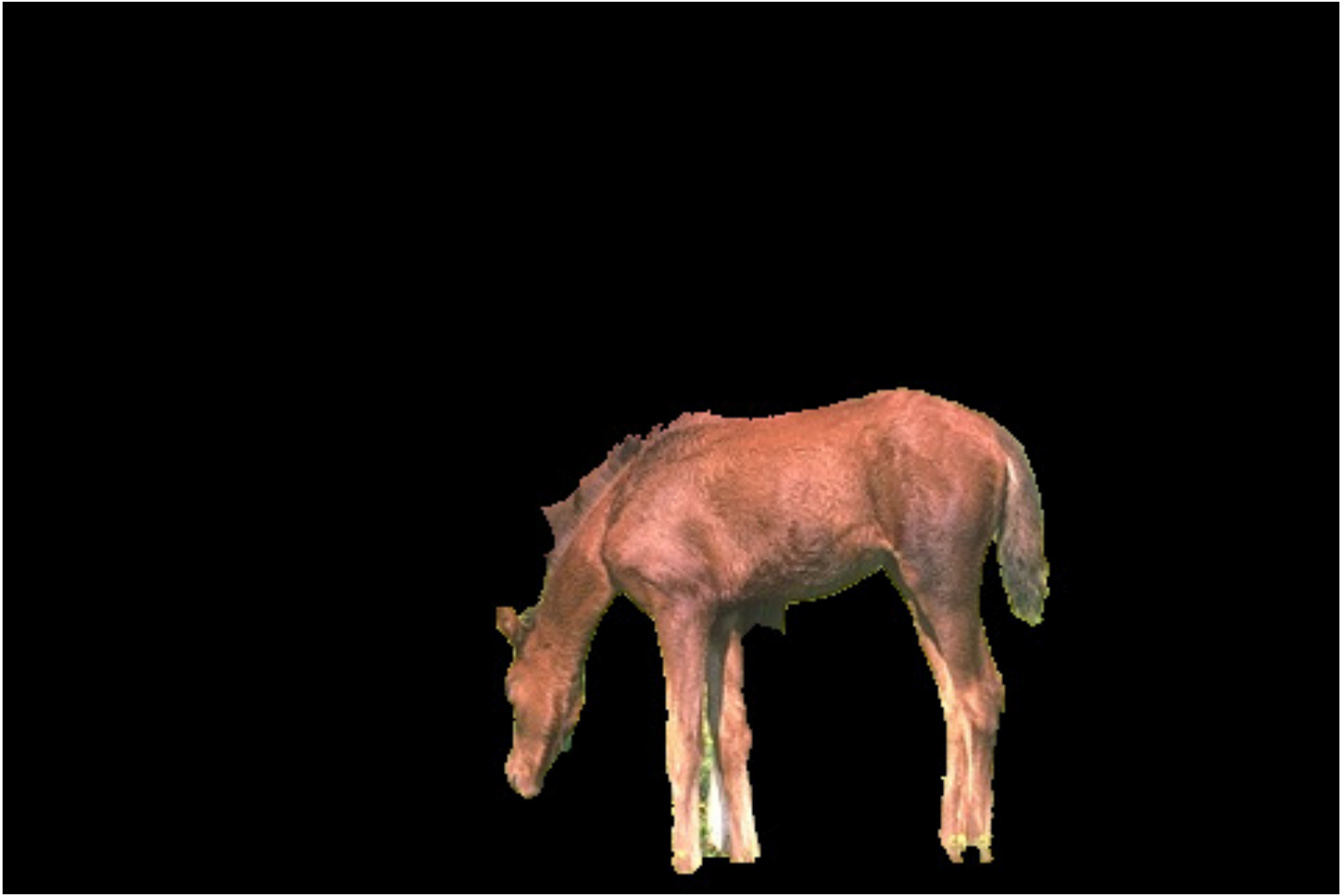}
\includegraphics[width=0.192\textwidth]{./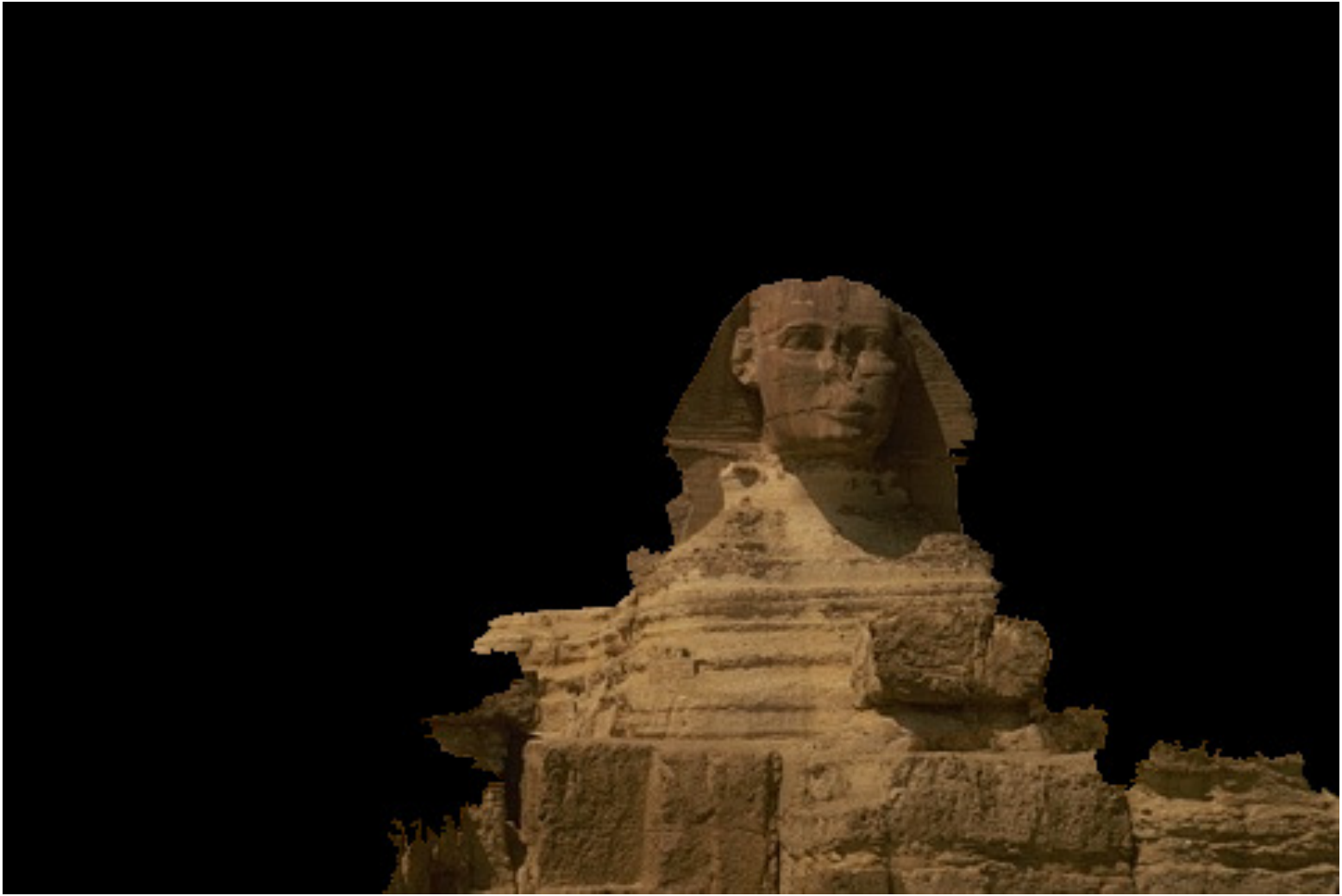}
\includegraphics[width=0.156\textwidth]{./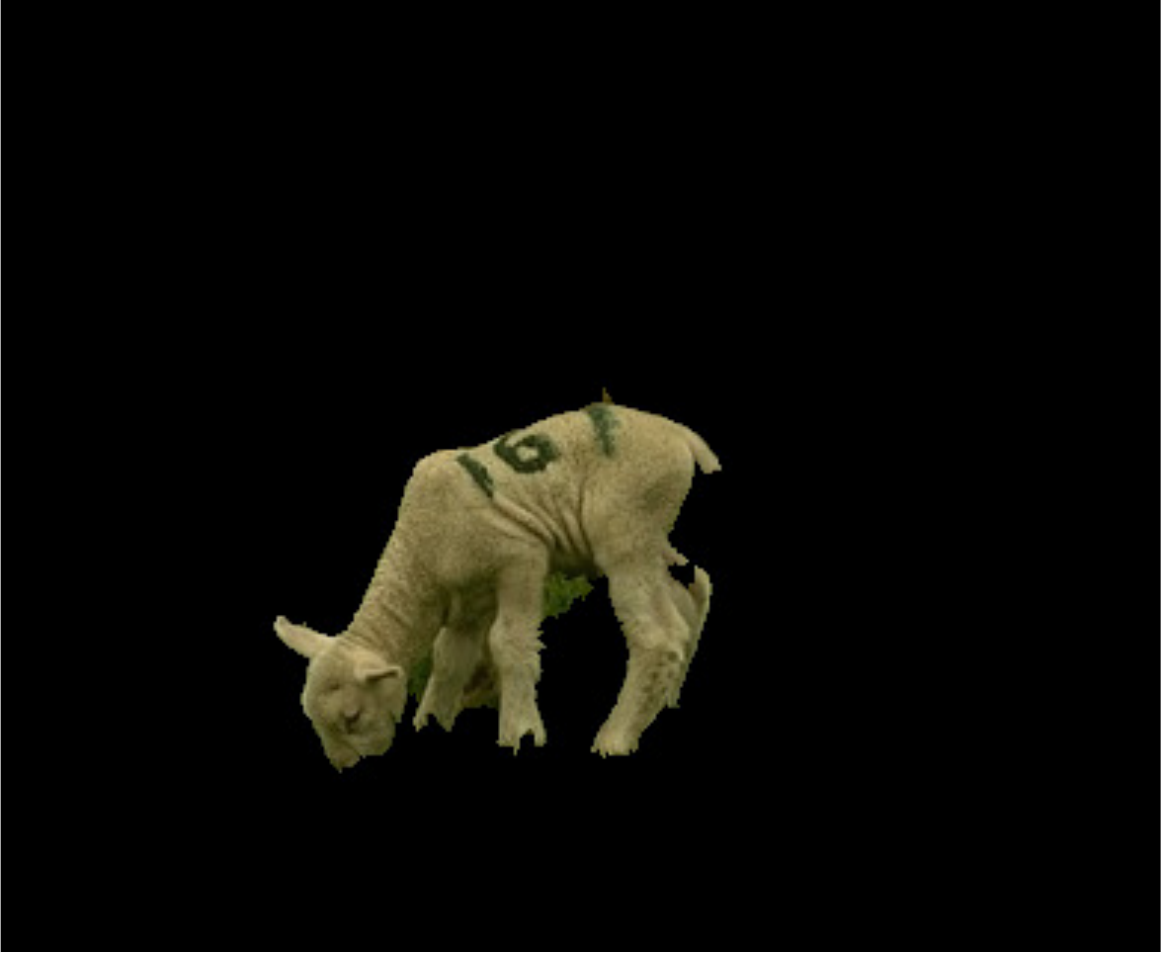}}
\vspace{-0.0cm}
\caption{Segmentation results on the Berkeley dataset. The top row shows
the original images with partial labelled pixels. 
         Our method (bottom) achieves better results than BNCut (middle).          }
\label{fig:img_segm}
\end{figure*}

{\bf Formulation}
In graph based segmentation, images are represented by weighted graphs $G(V,E)$,
with vertices corresponding to pixels and edges encoding feature similarities between pixel pairs.
A partition $\bx \in \{-1,1\}^n$ is optimized to cut the minimal edge weights and results into two balanced disjoint groups.
Prior knowledge can be introduced to improve performance, encoding by labelled vertices of a graph, \ie, pixels/superpixels in an image.
As shown in the top line of Fig.~\ref{fig:img_segm}, $10$ foreground pixels and $10$ background pixels are annotated by red and blue markers respectively.
Pixels should be grouped together if they have the same color; otherwise they should be separated.

Biased normalized cut (BNCut)~\cite{Maji2011biased} is an extension of NCut~\cite{Shi2000normalized}, 
which considers the partial group information of labelled foreground pixels.
Prior knowledge is encoded as a quadratic constraint on $\bx$.
The result of BNCut is a weighted combination of the eigenvectors of normalized Laplacian matrix.
One disadvantage of BNCut is that at most one quadratic constraint can be incorporated into its formulation.
Furthermore, no explicit results can be obtained: the weights of
eigenvectors must be tuned by the user.
In our experiments, we use the parameters suggested in~\cite{Maji2011biased}.

Unlike BNCut, \fastsdp can incorporate multiple quadratic constraints on $\bx$.
In our method, the partial group constraints of $\bx$ are formulated as:
$(\bt_f^{\T} \bP \bx)^2 $ $ \geq $ $ \kappa \lVert \bt_f^{\T} \bP \rVert^2_1$, 
$(\bt_b^{\T} \bP \bx)^2 $ $ \geq  $ $ \kappa \lVert \bt_b^{\T} \bP \rVert^2_1$ and 
$((\bt_f - \bt_b)^{\T} \bP \bx)^2 \geq \kappa \lVert (\bt_f - \bt_b)^{\T} \bP \rVert^2_1$,
where $\kappa \in [0,1]$ . $\bt_f, \bt_b \in \{0,1\}^n$ are the indicator vectors of foreground and background pixels.
$\bP = \bD^{-1}\bW$ is the normalized affinity matrix, which smoothes the partial group constraints~\cite{Yu2004segmentation}.
After lifting, the partial group constraints are:
\begin{subequations}
    \label{EQ:SEGM1}
\begin{align}
\langle \bP \bt_f \bt_f^{\T} \bP, \bX \rangle   &\geq  \kappa \lVert \bt_f^{\T} \bP \rVert_1^2, \label{eq:img_segm_cons1}\\
\langle \bP \bt_b \bt_b^{\T} \bP, \bX \rangle   &\geq  \kappa \lVert \bt_b^{\T} \bP \rVert_1^2, \label{eq:img_segm_cons2}\\
\langle \bP (\bt_f - \bt_b) (\bt_f - \bt_b)^{\T} \!\bP, \bX \rangle   &\geq   \kappa \lVert (\bt_f - \bt_b)^{\T} \bP \rVert_1^2. \label{eq:img_segm_cons3}
\end{align}
\end{subequations}
We have the formulations of the standard SDP and \fastsdp, 
with constraints \eqref{eq:graph_bisect_cons_1} and \eqref{EQ:SEGM1}
for this particular application.
%
%
%
%
The standard SDP~\eqref{eq:backgd_sdp1} is solved by SeDuMi and SDPT3.

Note that constraint~\eqref{eq:graph_bisect_cons_1} enforces the {\it
equal partition}; after rounding, this equal partition may
only be partially satisfied, though. 
We still use the method in~\cite{Goemans95improved} to generate a
score vector, and the threshold is set to $0$ instead of median.

\begin{table}[t]
  \centering
  \footnotesize
  \begin{tabular}{l|cccc}
  \hline
     Methods             &  BNCut & \fastsdp & SeDuMi & SDPT3\\
  \hline
  \hline  
     Time(s)               & $0.258$   & $23.7$   & $372$    & $329$   \\
     obj       & $-112.55$  & $-116.10$ & $-116.30$ & $-116.32$\\
  \hline
  \end{tabular}
  \caption{Results on image segmentation, 
           which are the mean of results of images in Fig.~\ref{fig:img_segm}.        
           \fastsdp has similar objective value with SeDuMi and SDPT3.
           $\sigma$ is set to $10^{-2}$. obj $= \langle {\bx^\star}{\bx^\star}^{\T}, -\bW \rangle$.  } %
  \label{tab:img_segm}
\end{table}

{\bf Experiments}
We test our segmentation method on the Berkeley segmentation dataset~\cite{MartinFTM01}.
Images are converted to Lab color space and over-segmented into SLIC superpixels using the VLFeat toolbox~\cite{vedaldi08vlfeat}.
The affinity matrix $\bW$ is constructed based on the color
similarities and spatial adjacencies between superpixels:
\begin{align}
\bW_{ij} = \left\{ 
                \begin{array}{ll} 
                \!\mathrm{exp}(- \frac{\lVert \mathbf{f}_i - \mathbf{f}_j \rVert_2^2}{\sigma_f^2} - \frac{\mathrm{d}(i,j)^2}{\sigma_d^2}) & \!\mbox{if }   \mathrm{d}(i,j) \!<\! r,\\
                \!0 &\! \mbox{otherwise.}  \end{array}  
         \right. 
\label{eq:img_segm_affinity_mat} %
\end{align}
where $\mathbf{f}_i$ and $\mathbf{f}_j$ are color histograms of superpixels $i$,~$j$, and $\mathrm{d}(i,j)$ is the spatial distance between superpixels $i$,~$j$.

From Fig.~\ref{fig:img_segm}, we can see that BNCut did not accurately extract foreground, 
because it cannot use the information about which pixels {\em cannot} be grouped together:
BNCut only uses the information provided by red markers.
In contrast, our method clearly extracts the foreground. 
We omit the segmentation results of SeDuMi and SDPT3, since they are similar with the one using \fastsdp.
In Table~\ref{tab:img_segm}, we compare the CPU time and the objective value of BNCut, \fastsdp, SeDuMi and SDPT3.
The results are the average of the five images shown in Fig.~\ref{fig:img_segm}.
In this example, $\sigma$ is set to $10^{-2}$ for \fastsdp.
All the five images are over-segmented into $760$ superpixels, and so the numbers of variables for SDP are the same ($289180$).
We can see that BNCut is much faster than SDP based methods, but with higher (worse) objective values.
\fastsdp achieves the similar objective value with SeDuMi and SDPT3, and is over $10$ times faster than them.

\subsection{Application 3: Image Co-segmentation}

{\bf Formulation}
Image co-segmentation performs partition on multiple images simultaneously. 
The advantage of co-segmentation over traditional single image segmentation is that
it can recognize the common object over multiple images.
Co-segmentation is conducted by optimizing two criteria: 1) the color and spatial consistency within a single image.
2) the separability of foreground and background over multiple images, measured by discriminative features, such as SIFT.
Joulin~\etal~\cite{Joulin2010dis} adopted a discriminative clustering method to the problem of co-segmentation, 
and used a low-rank factorization method~\cite{Journee2010lowrank} (denoted by \lowrank)  to solve the associated SDP program.
The \lowrank method finds a locally-optimal factorization $\bX = \bY
\bY^{\T}$, where the columns of $\bY$ is incremented until a certain condition is met. 
The formulation of discriminative clustering for co-segmentation can be expressed as:
\begin{align}
\min_{\bx \in \{-1,1\}^n} \langle \bx \bx^{\T}\!, \bA  \rangle,  
\sst \ ( \bx^{\T} \delta_i )^2 \!<\! \lambda^2,  \forall i = 1 ,\dots, q,  \label{eq:img_cosegm_x}
\end{align}
where $q$ is the number of images and $n = \sum_{i=1}^{q} n_i$ is total number of pixels.
Matrix $\bA = \bA_b + (\mu/n) \bA_w $, and $\bA_w = \bI_n - \bD^{-1/2} \bW \bD^{-1/2}$ is the intra-image affinity matrix, 
and $\bA_b \lambda_k (\bI - \be_n \be_n^{\T} / n) (n \lambda_k \bI_n + \bK)^{-1} (\bI - \be_n \be_n^{\T} / n)$ is the inter-image discriminative clustering cost matrix.
$\bW$ is a block-diagonal matrix, 
whose $i$th block is the affinity matrix~\eqref{eq:img_segm_affinity_mat} of the $i$th image, and $\bD = \mathbf{diag}(\bW \be_n)$.
$\bK$ is a kernel matrix, which is based on the $\chi^2-$distance of SIFT features: $\bK_{lm} = \mathrm{exp}(-\sum_{d=1}^{k} ((\bx_d^l - \bx_d^m)^2 / (\bx_d^l + \bx_d^m)))$.
Because there are multiple quadratic constraints, 
spectral methods are {\it not} applicable to  problem~\eqref{eq:img_cosegm_x}.

The constraints for $\bX = \bx \bx^{\T}$ are:
\begin{align}
\mathbf{diag} (\bX) = \be, \ \langle \bX, \delta_i \delta_i^{\T}  \rangle \leq \lambda^2, \ \forall i = 1 ,\dots, q. \label{eq:img_cosegm_cons}
\end{align}
We then introduce $\bA$ and the constraints~\eqref{eq:img_cosegm_cons} into~\eqref{eq:backgd_sdp1} and~\eqref{eq:fastsdp_analysis3} to get the associated SDP formulation.
%

%
%
%
The strategy in \lowrank is employed to recover a score vector ${\bx_r^\star}$ from the solution $\bX^{\star}$, which is based on the eigen-decomposition of $\bX^{\star}$.
The final binary solution $\bx^\star$ is obtained by thresholding
${\bx_r^\star}$ (comparing with $0$).

{\bf Experiments}
The Weizman horses\footnote{\url{http://www.msri.org/people/members/eranb/}}
and
MSRC\footnote{\url{http://www.research.microsoft.com/en-us/projects/objectclassrecognition/}}
datasets
are used for this image co-segmentation.
There are $6\!\thicksim\!10$ images in each of four classes, namely car-front, car-back, face and horse.
Each image is oversegmented to $400\!\thicksim\!700$ SLIC superpixels using VLFeat~\cite{vedaldi08vlfeat}.
The number of superpixels for each image class is then increased to $4000\!\thicksim\!7000$.

Standard toolboxes like SeDuMi and SDPT3 cannot handle such large-size
problems on a standard desktop.
We compare \fastsdp with the \lowrank approach. 
In this experiment, $\sigma$ is set to $10^{-4}$ for \fastsdp.
As we can see in Table~\ref{tab:img_cosegm}, the speed of \fastsdp is
about $5.7$ times faster than \lowrank on average.    
The objective values (to be minimized) of \fastsdp are lower than \lowrank for all the four image classes.
Furthermore, the solution of \fastsdp also has lower rank than that of \lowrank for each class. 
For car-back, the largest eigenvalue of the solution for \fastsdp has $81\%$ of total energy while the one for \lowrank only has $56\%$.

Fig.~\ref{fig:img_cosegm} visualizes the score vector ${\bx}_r^\star$ on some sample images.
The common objects (cars, faces and horses) are identified by our co-segmentation method.
\fastsdp and \lowrank achieve visually similar results in the experiments.

\begin{figure}[t]
\vspace{-0.0cm}
\centering
\subfloat{
\includegraphics[width=0.11\textwidth,clip]{./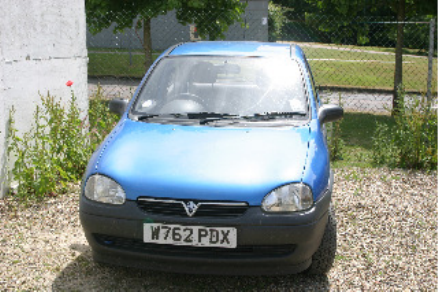}
\includegraphics[width=0.11\textwidth,clip]{./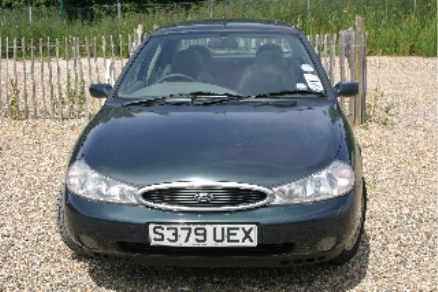}
\includegraphics[width=0.11\textwidth,clip]{./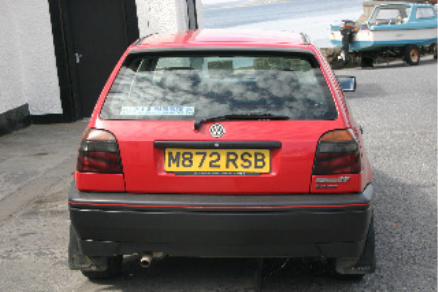}
\includegraphics[width=0.11\textwidth,clip]{./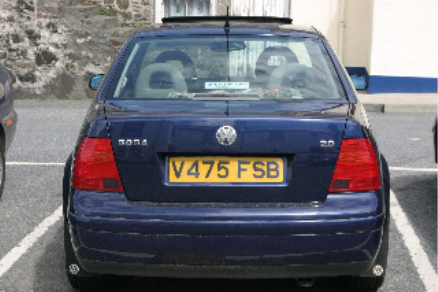}
}\\ 
\subfloat{
\includegraphics[width=0.11\textwidth,clip]{./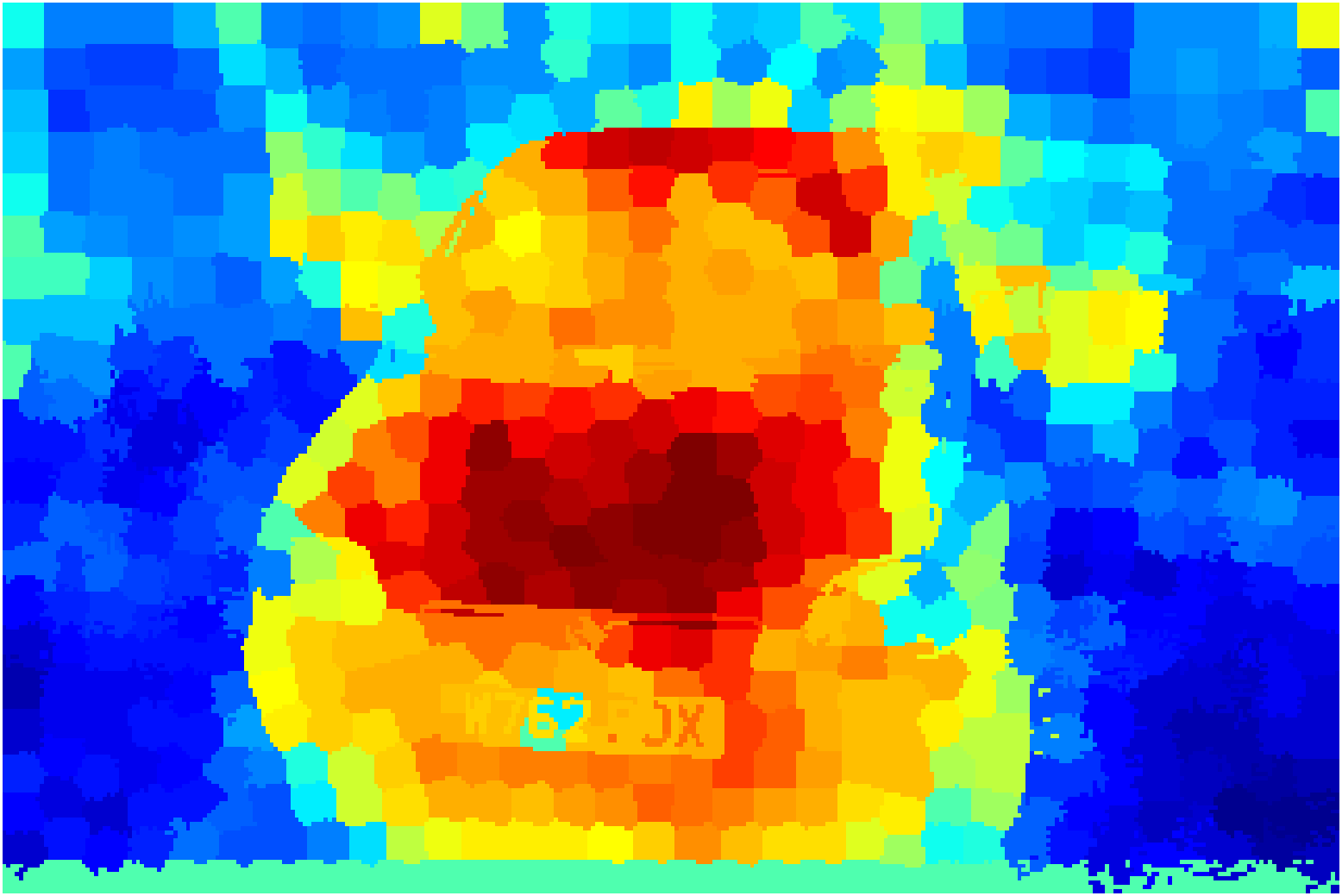}
\includegraphics[width=0.11\textwidth,clip]{./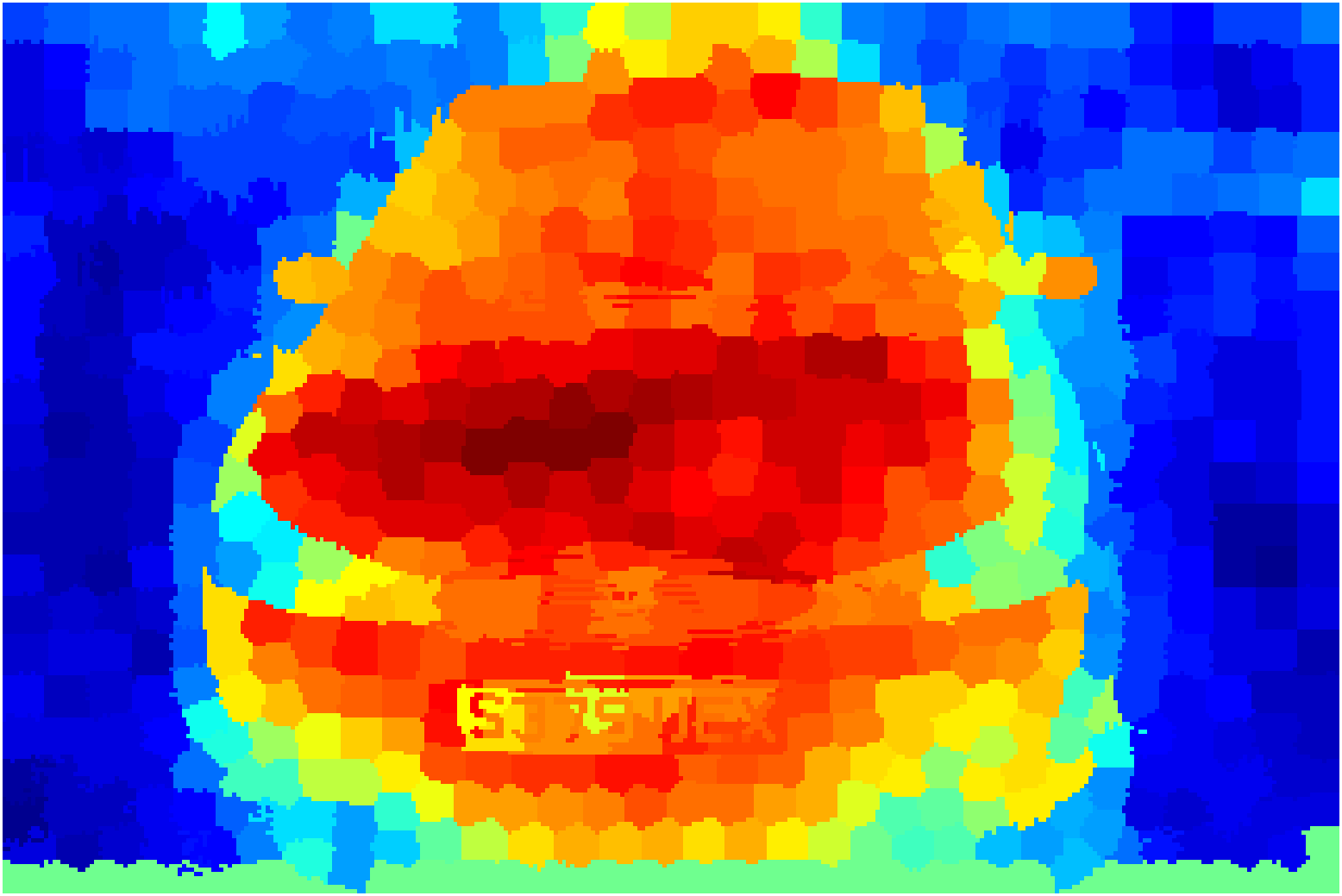}
\includegraphics[width=0.11\textwidth,clip]{./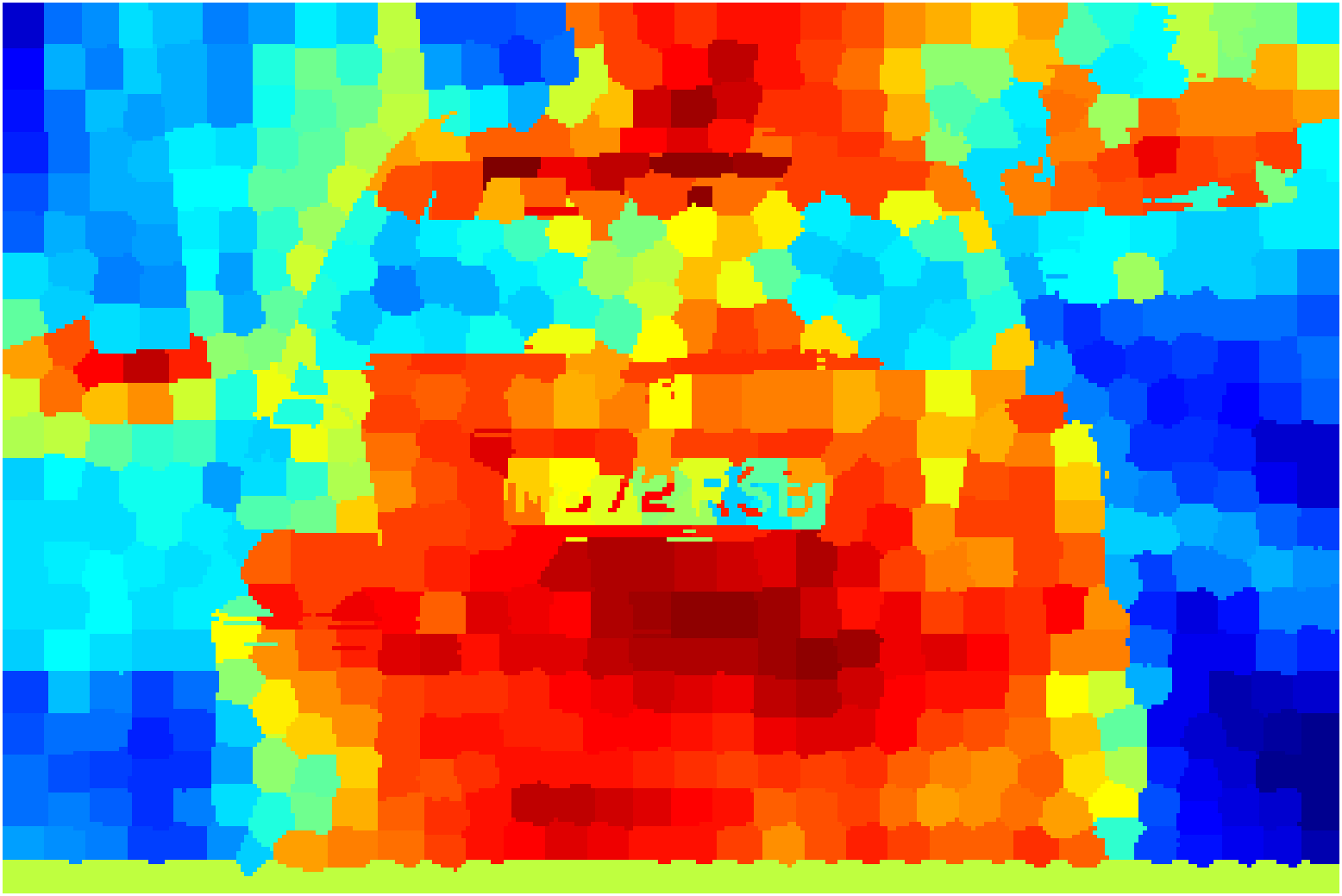}
\includegraphics[width=0.11\textwidth,clip]{./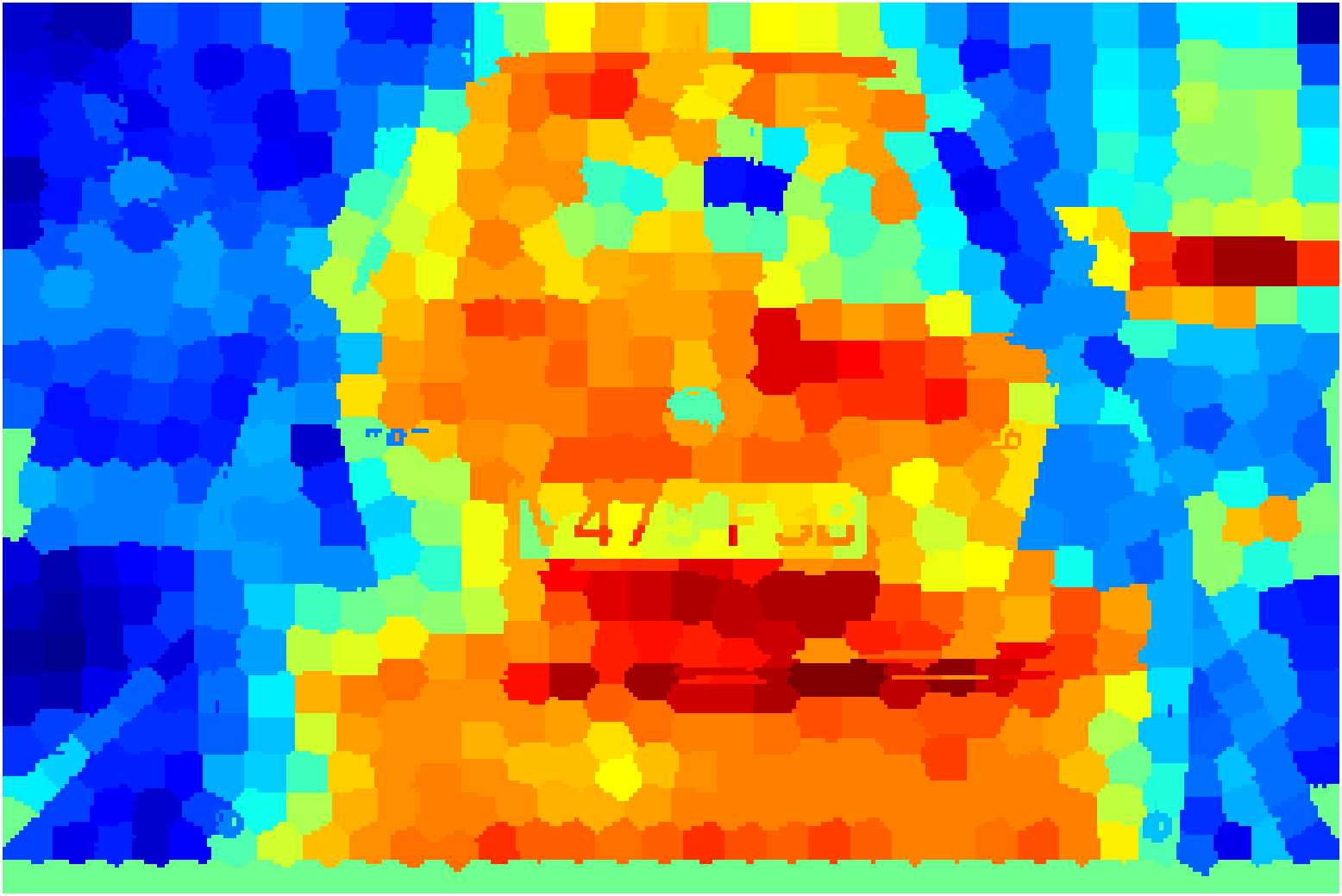}
}\\ 
\subfloat{
\includegraphics[width=0.11\textwidth,clip]{./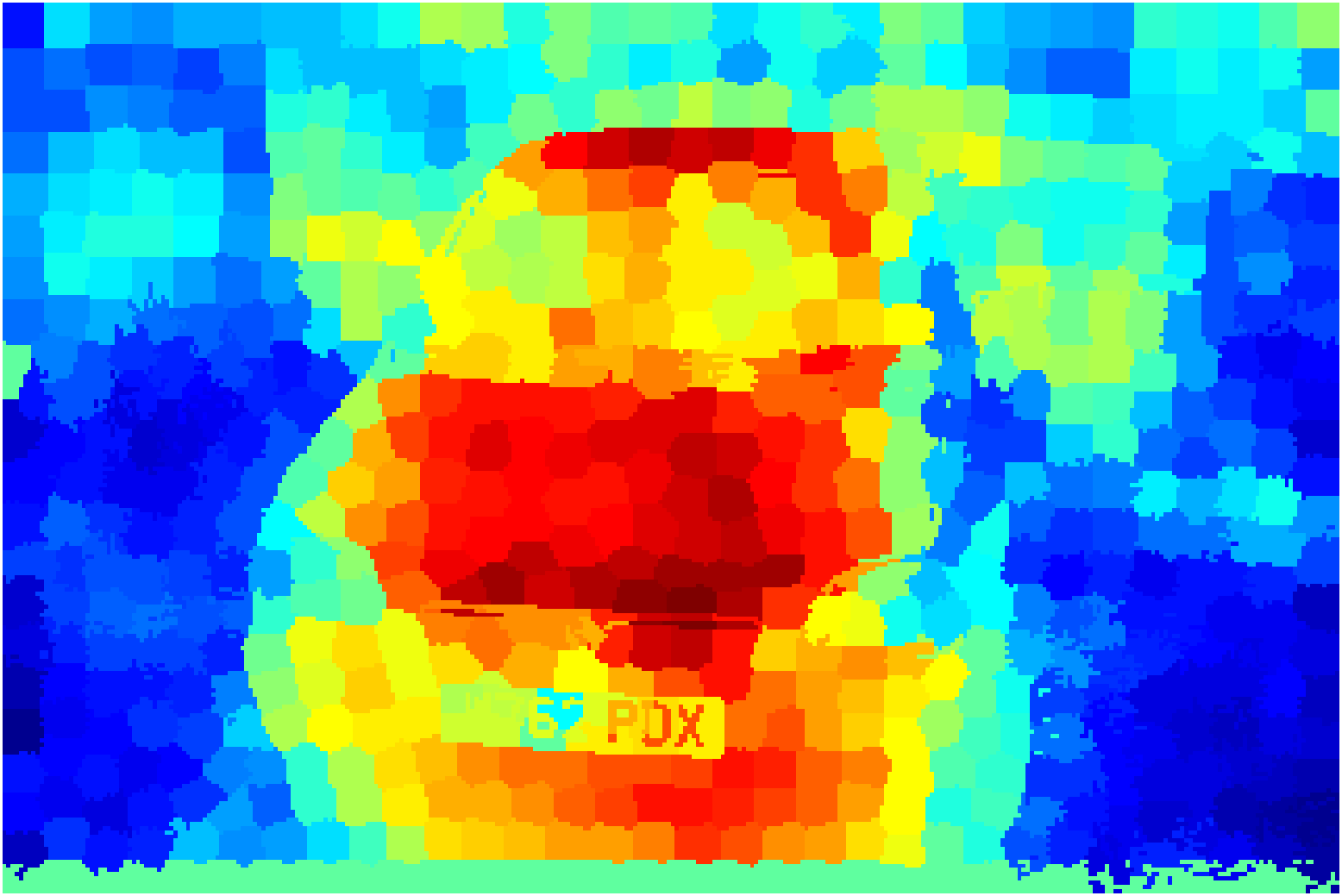}
\includegraphics[width=0.11\textwidth,clip]{./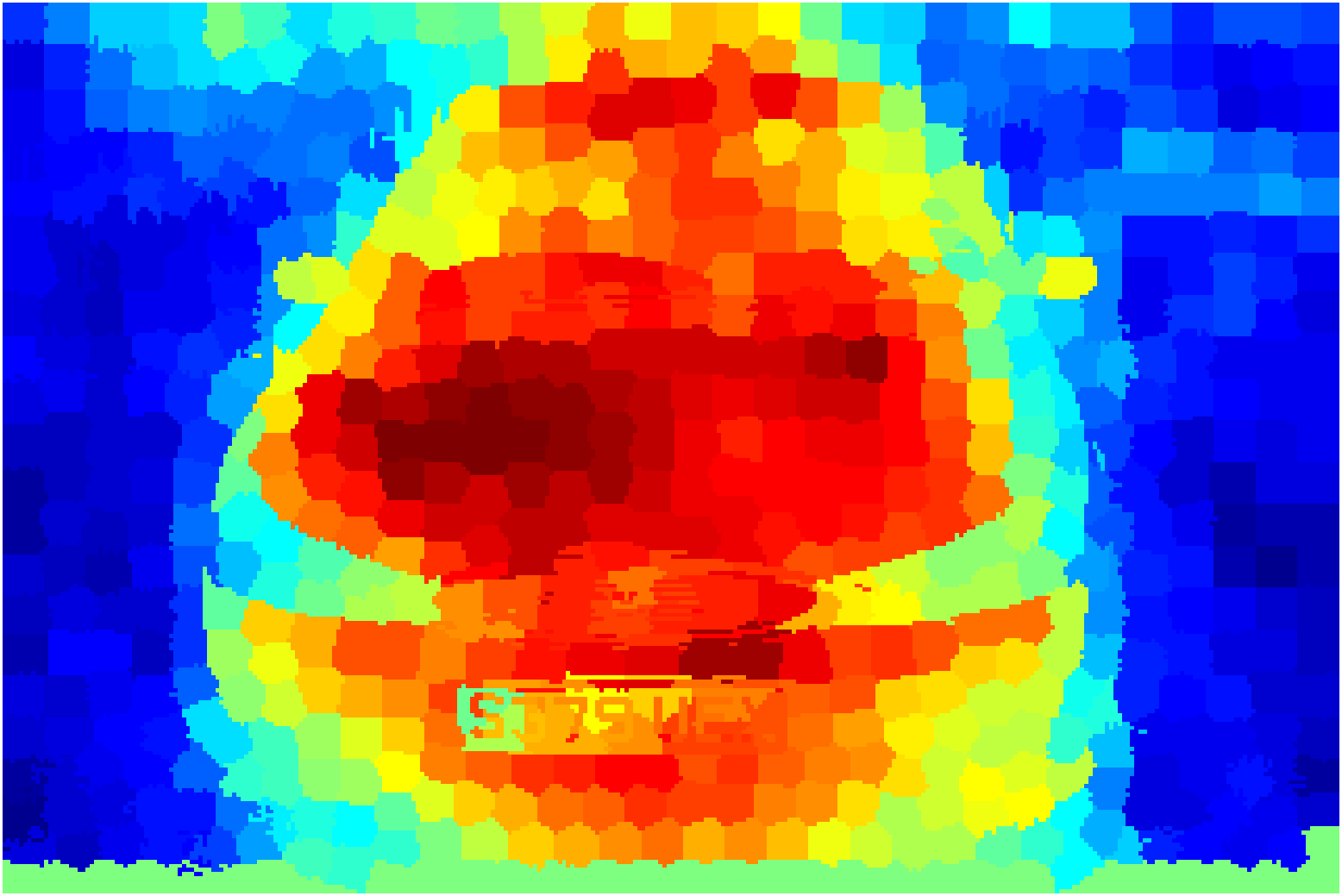}
\includegraphics[width=0.11\textwidth,clip]{./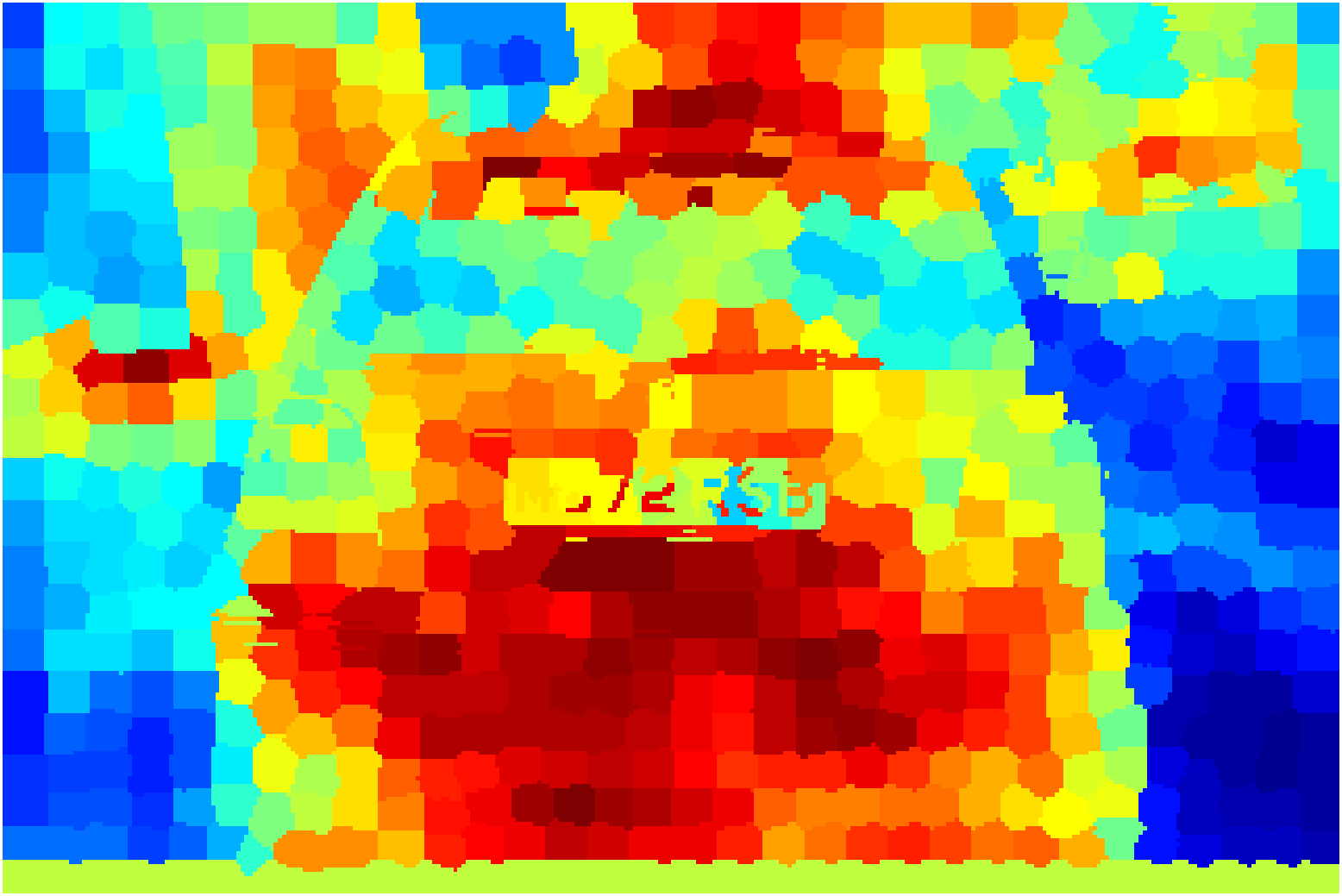}
\includegraphics[width=0.11\textwidth,clip]{./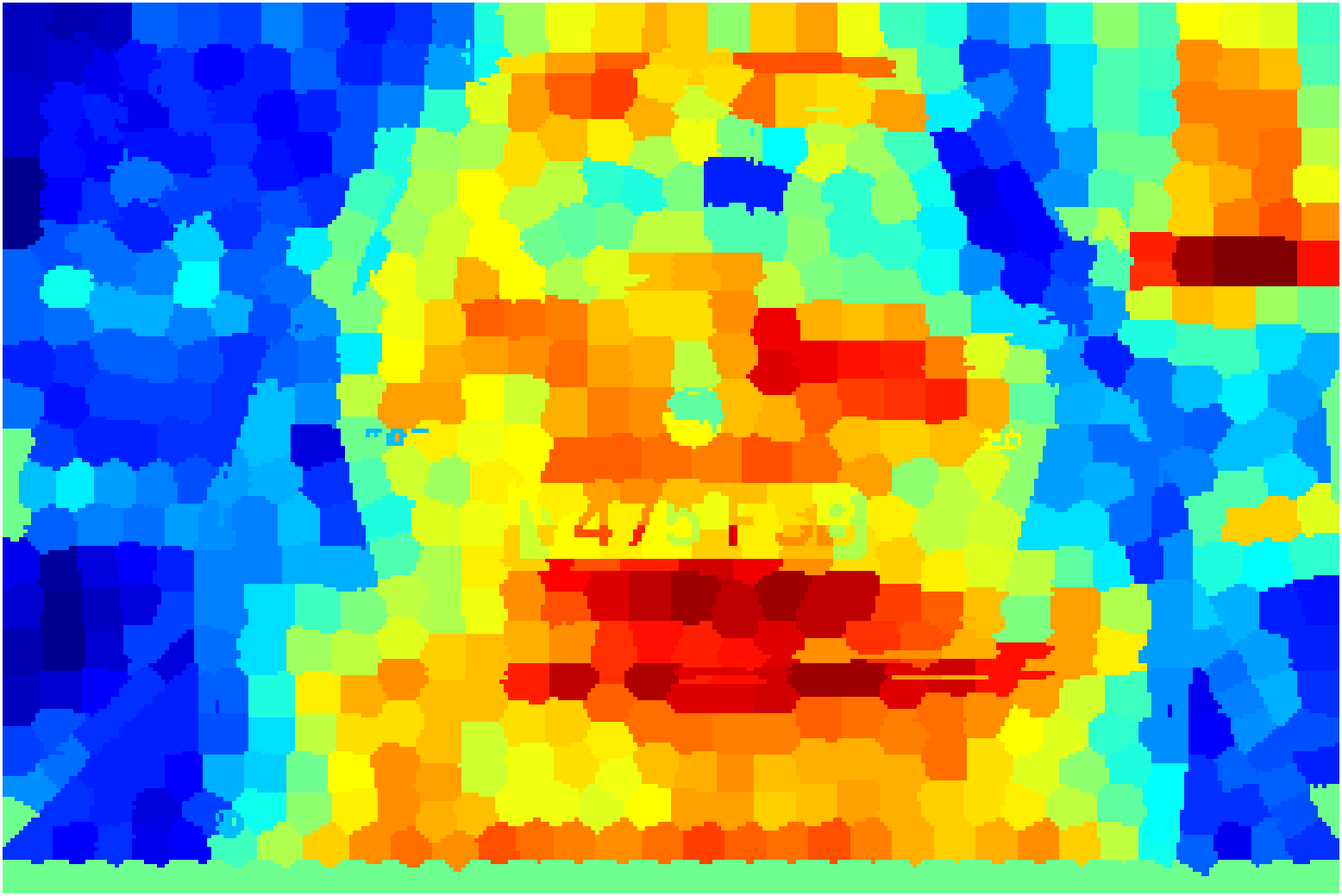}
}\\ 
\subfloat{
\includegraphics[width=0.11\textwidth,clip]{./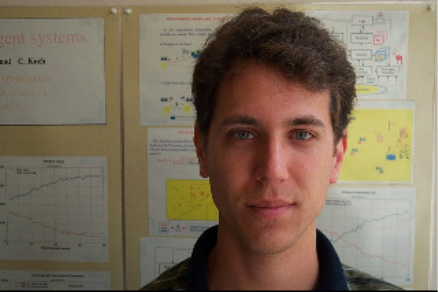}
\includegraphics[width=0.11\textwidth,clip]{./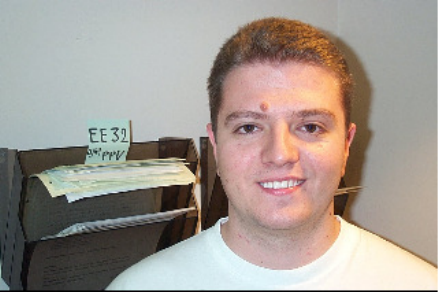}
\includegraphics[width=0.11\textwidth, height = 0.074\textwidth, clip]{./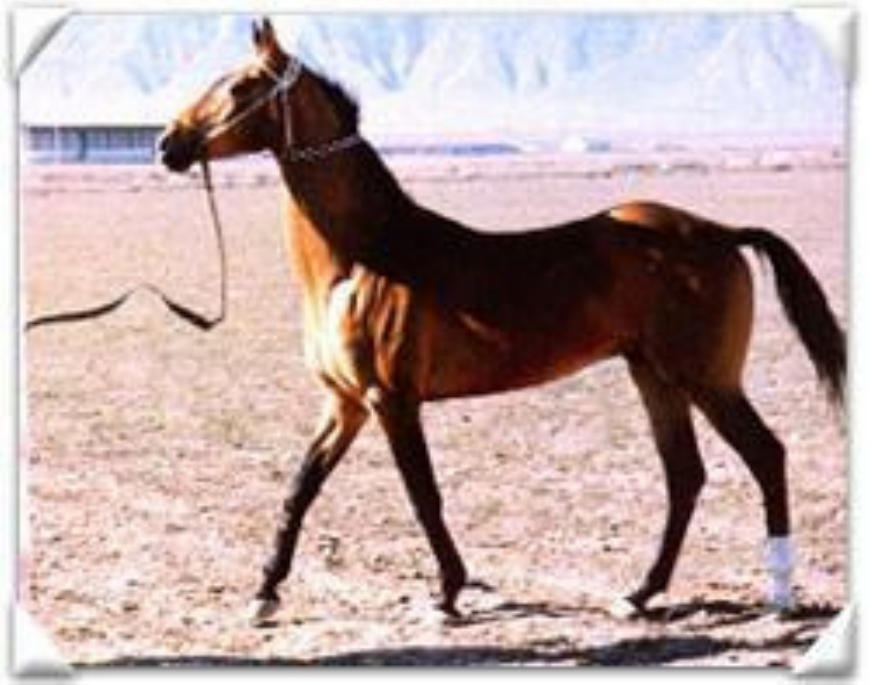}
\includegraphics[width=0.11\textwidth, height = 0.074\textwidth, clip]{./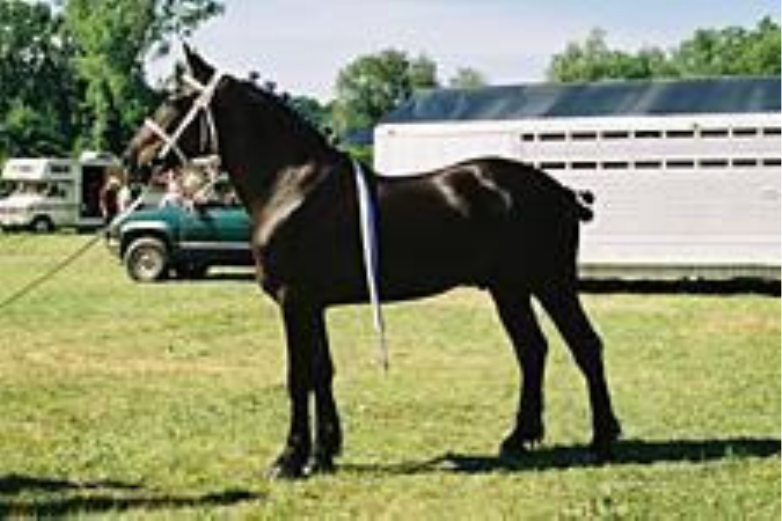}
}\\ 
\subfloat{
\includegraphics[width=0.11\textwidth,clip]{./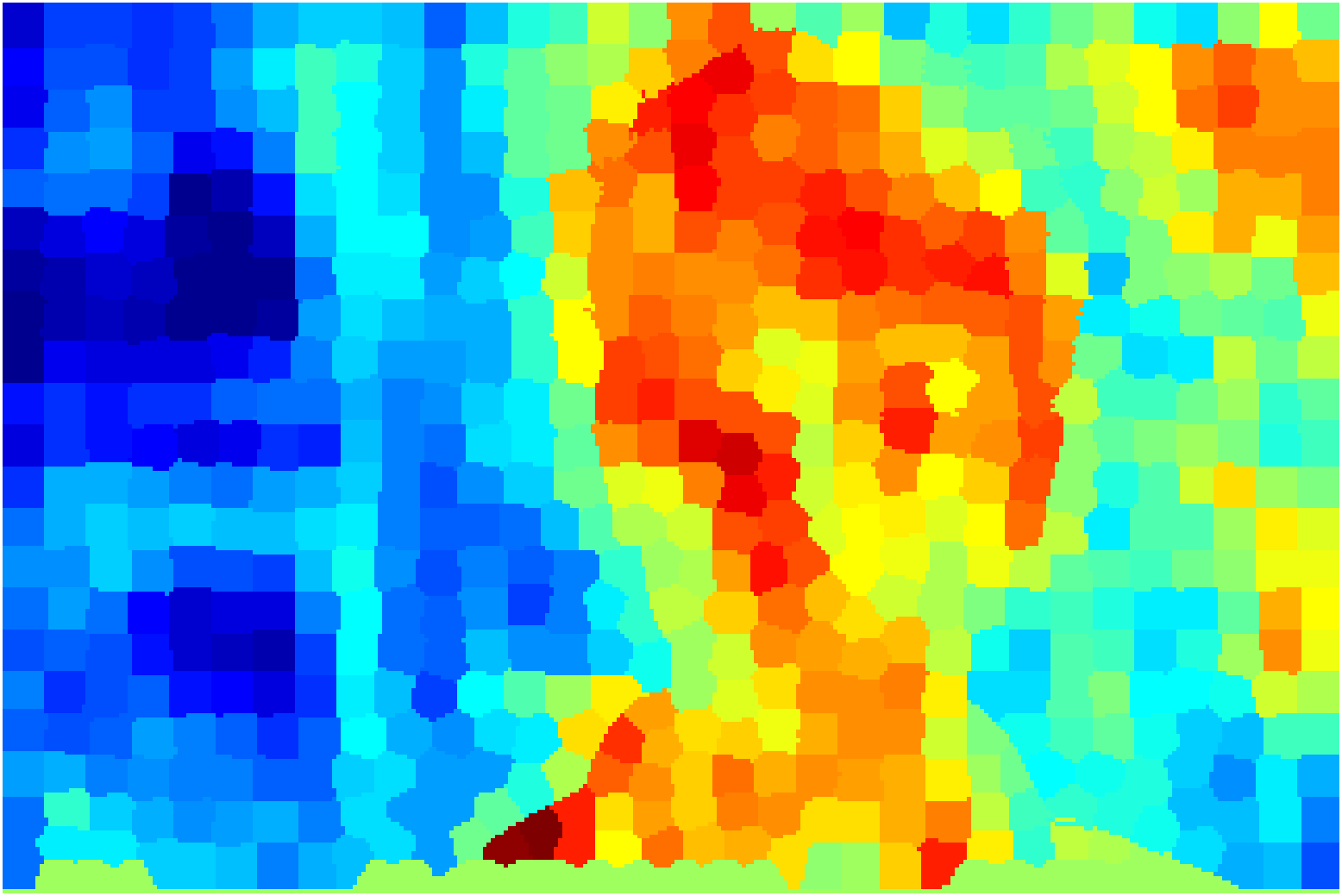}
\includegraphics[width=0.11\textwidth,clip]{./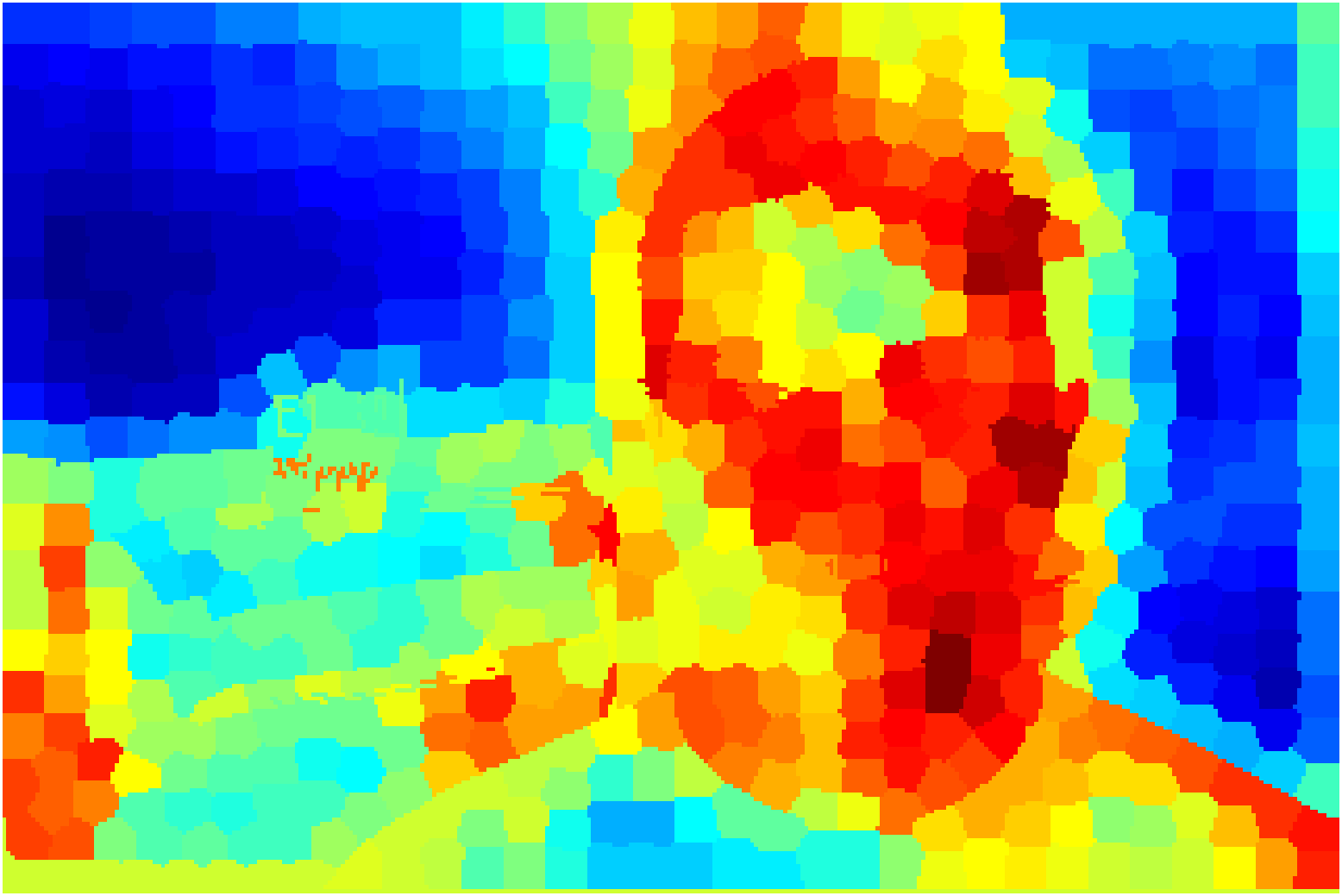}
\includegraphics[width=0.11\textwidth, height = 0.074\textwidth, clip]{./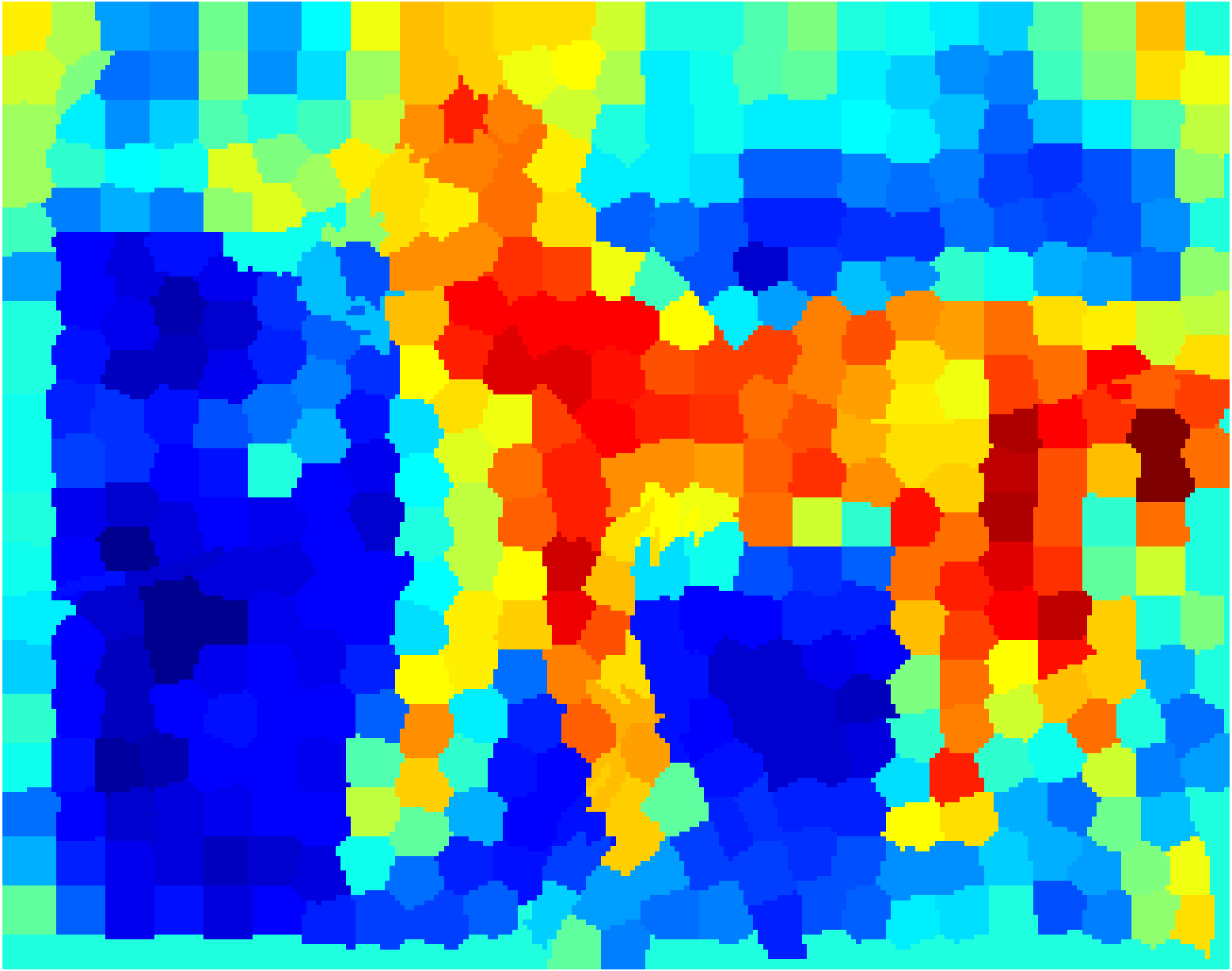}
\includegraphics[width=0.11\textwidth, height = 0.074\textwidth, clip]{./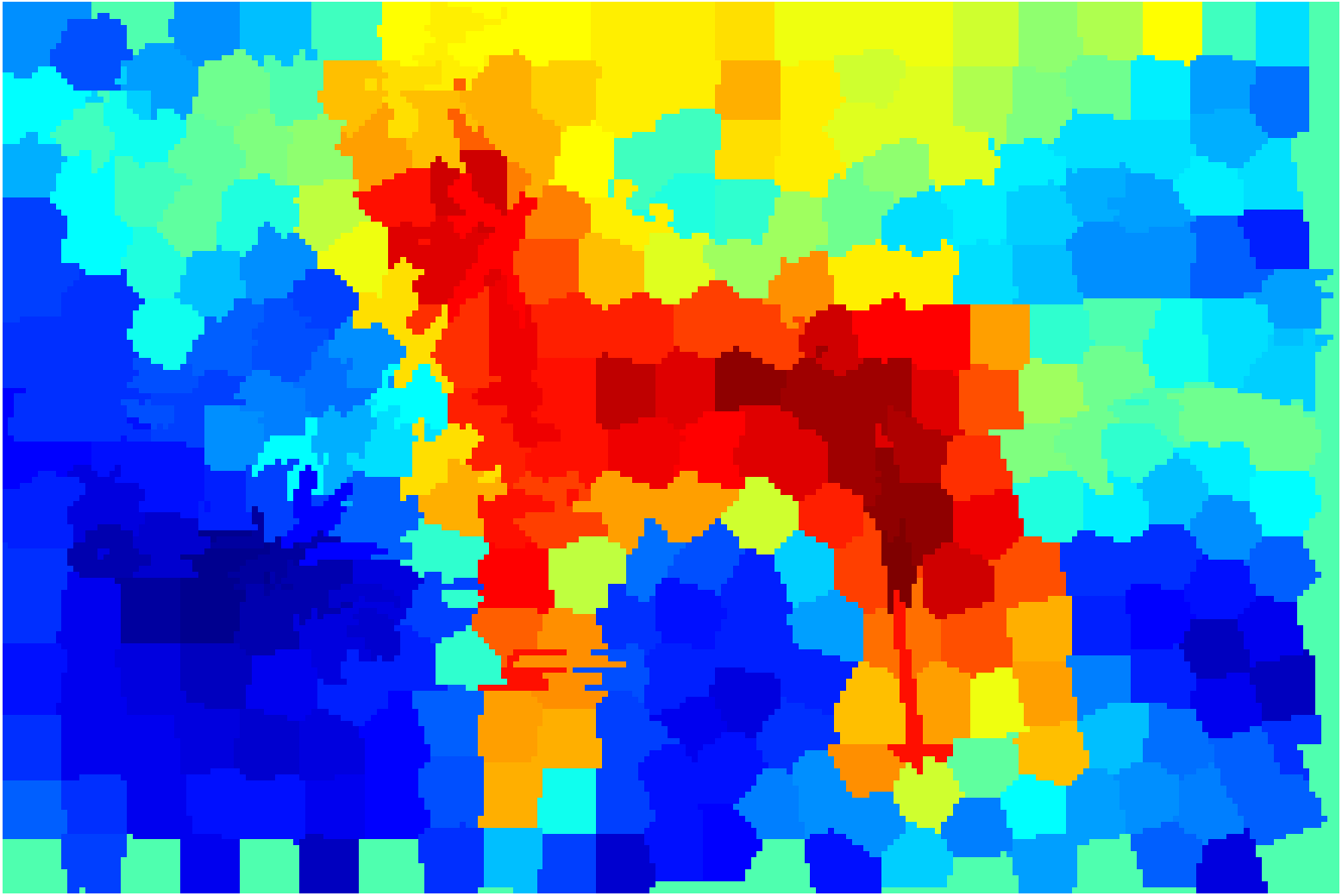}
}\\ 
\subfloat{
\includegraphics[width=0.11\textwidth,clip]{./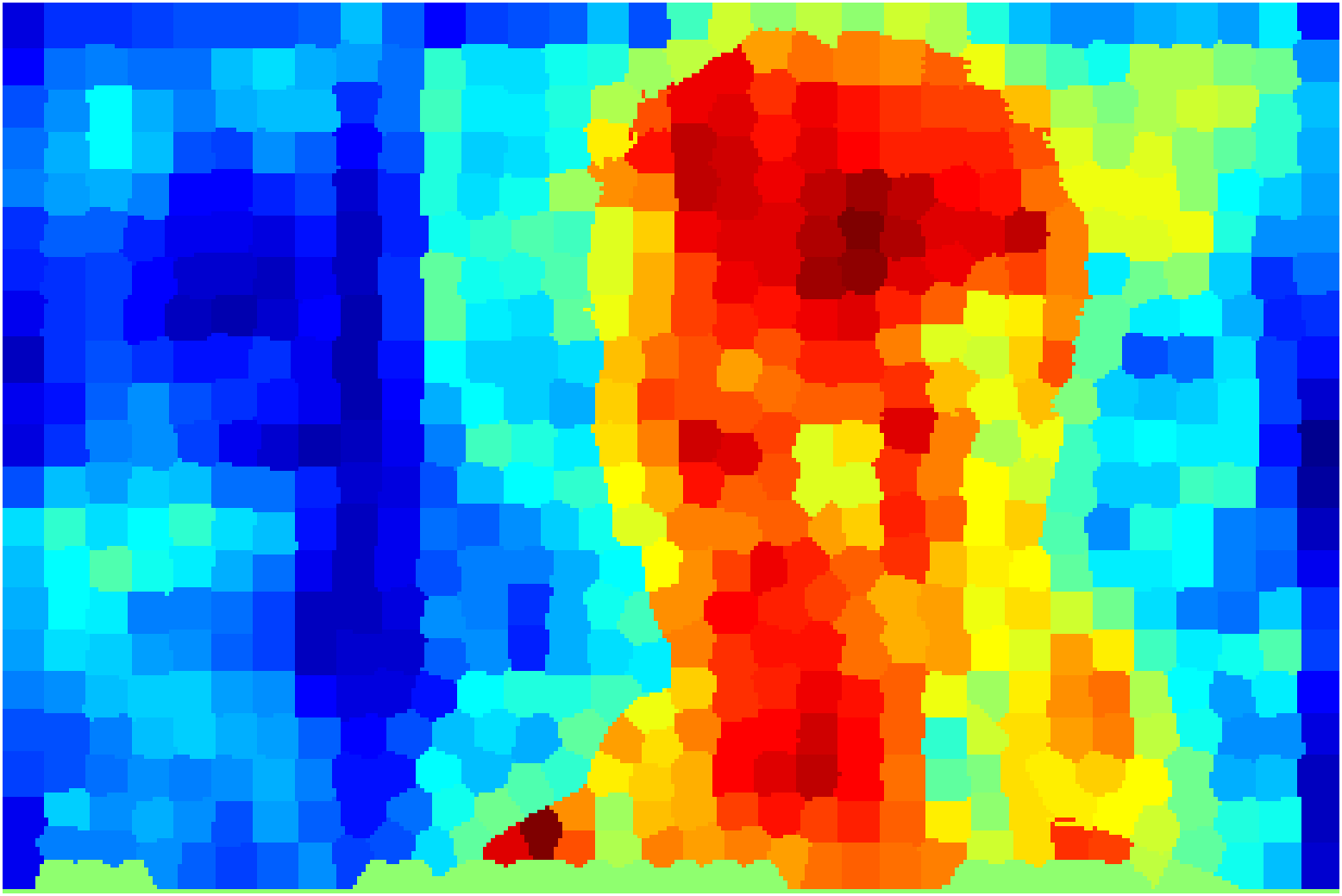}
\includegraphics[width=0.11\textwidth,clip]{./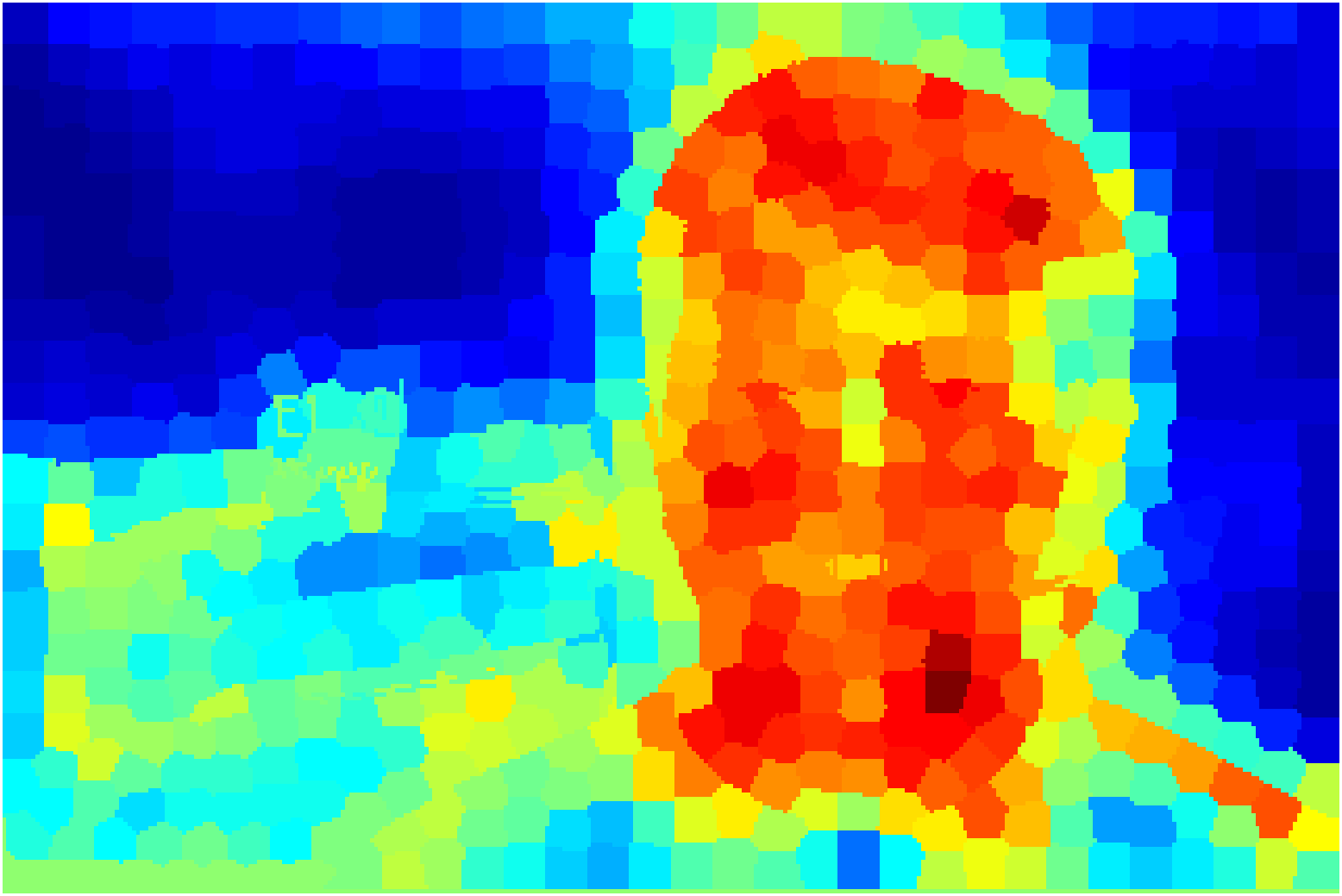}
\includegraphics[width=0.11\textwidth, height = 0.074\textwidth, clip]{./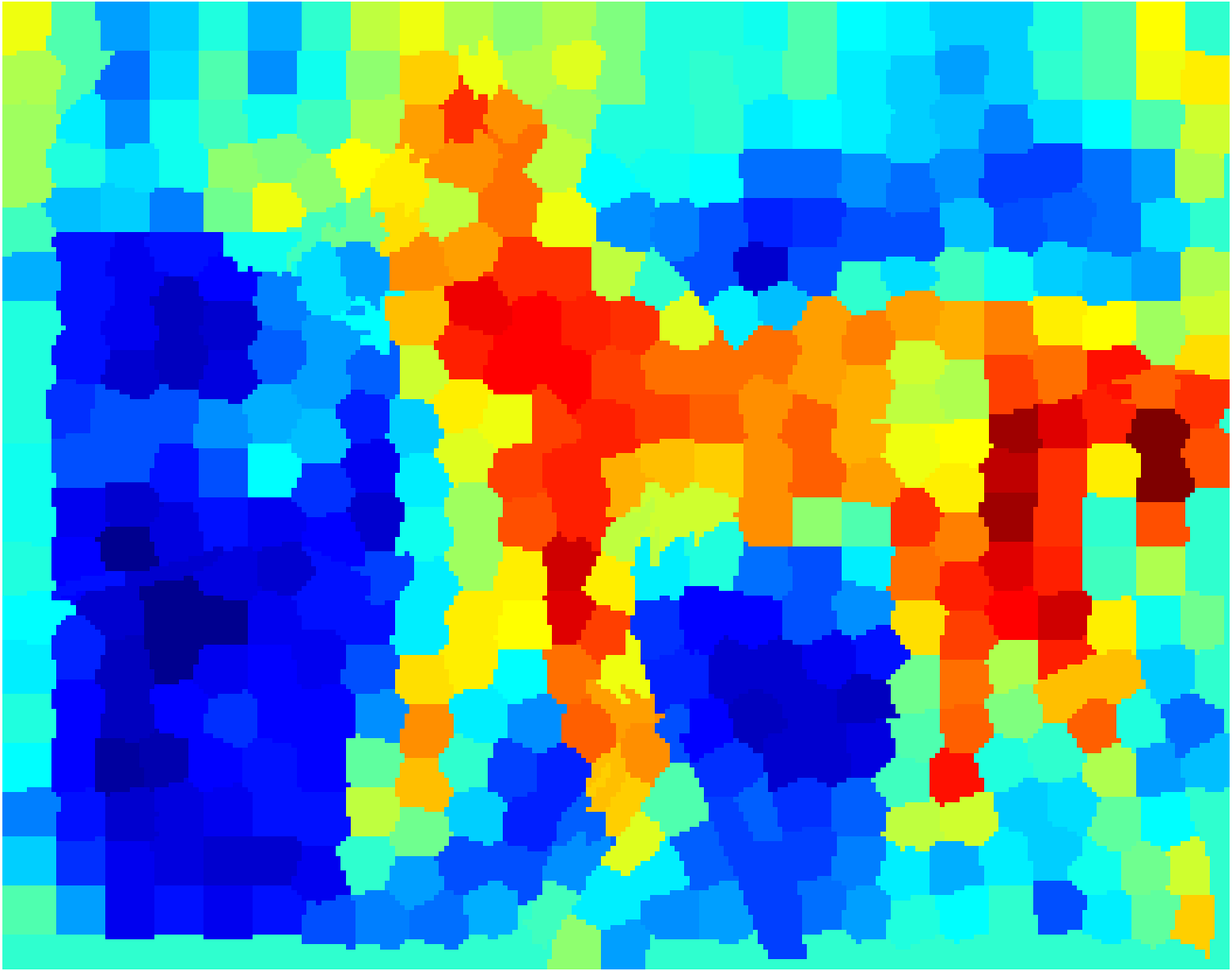}
\includegraphics[width=0.11\textwidth, height = 0.074\textwidth, clip]{./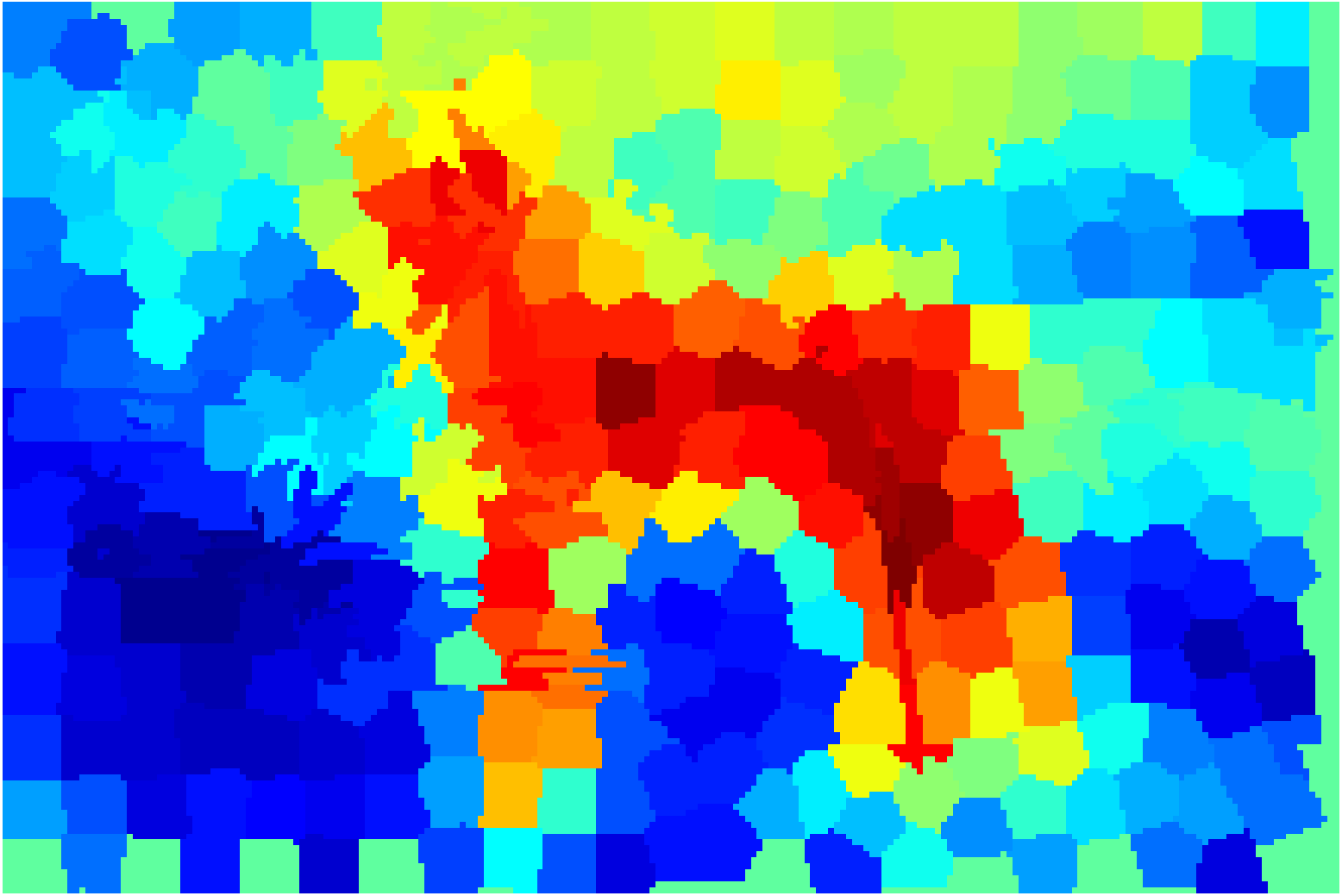}
}
\vspace{-0.0cm}
\caption{Co-segmentation results on Weizman horses and MSRC datasets.
The original images, the results of \lowrank and \fastsdp are illustrated from top to bottom.
\lowrank and \fastsdp produce similar results.}
\vspace{-0.0cm}
\label{fig:img_cosegm}
\end{figure}

\begin{table}[t]
  \centering
  \vspace{-0.0cm}
  \footnotesize
  \begin{tabular}{l|l|@{\hspace{0.1cm}}c@{\hspace{0.1cm}}c@{\hspace{0.1cm}}c@{\hspace{0.1cm}}c}
  \hline
     \multicolumn{2}{l|@{\hspace{0.1cm}}}{Dataset}             &  horse & face & car-back & car-front \\
  \hline
  \hline
    \multicolumn{2}{l|@{\hspace{0.1cm}}}{{\#}Images}              &  $10$       & $10$        & $6$        & $6$ \\
  \hline    
    \multicolumn{2}{l|@{\hspace{0.1cm}}}{{\#}Vars of BQPs~\eqref{eq:img_cosegm_x}}            &  $4587$       & $6684$        & $4012$        & $4017$ \\
  \hline  
    \multirow{2}{*}{Time(s)} & \lowrank          &  $1724$       & $3587$        & $2456$       & $2534$ \\
                             & \fastsdp          &  $430.3$      & $507.0$       & $251.1$      & $1290$ \\
  \hline    
    \multirow{2}{*}{obj}    & \lowrank           &  $-4.90$       & $-4.55$        & $-4.19$       & $-4.15$ \\
                                             & \fastsdp           &  $-5.24$       & $-4.94$        & $-4.53$       & $-4.27$ \\
  \hline
    \multirow{2}{*}{rank}   & \lowrank           &  $17$ & $16$ & $13$ & $11$ \\
                                             & \fastsdp           &  $3$  & $3$  & $3$  & $3$  \\
  \hline
  \end{tabular}
  \caption{Performance comparison of \lowrank~\cite{Journee2010lowrank} and \fastsdp 
           for co-segmentation.
           \fastsdp achieves faster speeds and better solution quality than \lowrank, 
           on all the four datasets.          
           obj $= \langle {\bx^\star}{\bx^\star}^{\T}, \bA \rangle$.%
           $\sigma$ is set to $10^{-4}$. 
          }
  \label{tab:img_cosegm}
\end{table}

\subsection{Application 4: Image Registration}

{\bf Formulation}
In image registration, $K$ source points must be matched to $L$ target points, where $K < L$.
The matching should maximize the local feature similarities of matched-pairs 
and also the structure similarity between the source and target graphs. 
The problem is expressed as a BQP, as in~\cite{Schellewald05}:
\begin{subequations}
\begin{align}
\min_{\bx \in \{0,1\}^{KL} } &\ \bh^{\T} \bx + \alpha \bx^{\T} \bH \bx,  \\
\sst   \,\,\,\,\,\,\,    &\ {\textstyle \sum\nolimits_j \bx_{ij}=1, \forall i = 1, \dots, K, }\label{eq:graph_match_cons01} \\
           &\ {\textstyle \sum\nolimits_i \bx_{ij} \leq 1, \forall j = 1, \dots, L,} \label{eq:graph_match_cons02} 
\end{align}
\label{eq:graph_match_1}
\end{subequations} 
where $\bx_{ij} = \bx_{(i-1)\!L+j} = 1$ if the source point $i$ is matched to the target point $j$; otherwise 0.
$\bh \in \mathbb{R}^{K\!L}$ records the local feature similarity between each pair of source-target points;
$\bH_{ij,kl} = \mathrm{exp} (-(d_{ij} - d_{kl})^2 / \sigma^2)$ encodes the structural consistency of source points $i$, $j$ and target points $k$,~$l$.

By adding one row and one column to $\bH$ and $\bX = \bx \bx^{\T}$, we have:
$\hat{\bH} = [ 0, \ 0.5\bh^{\T} ; 0.5\bh, \ \alpha \bH   ]$,
$\hat{\bX} = [ 1, \ \bx^{\T} ; \bx, \ \bX  ]$.
Schellewald \etal~\cite{Schellewald05} formulate the constraints for $\hat{\bX}$ as: 
\begin{subequations}
\begin{align}
                     & \hat{\bX}_{11} = 1, \label{eq:graph_match_cons0}\\
                     & 2 \cdot \mathbf{diag}(\bX) = \bX_{1:}^{\T} + \bX_{:1}, \label{eq:graph_match_cons1} \\
                     & \bN \cdot \mathbf{diag}(\bX) = \be_K, \label{eq:graph_match_cons2}\\
                     & \bM \circ \bX = \mathbf{0}, \label{eq:graph_match_cons3}
\end{align}
\end{subequations}
where $\bN = \bI_K \otimes \be_L^{\T}$ and 
$\bM = \bI_K \otimes (\be_L \be_L^{\T} -\bI_L) + (\be_K \be_K^{\T} -\bI_K) \otimes \bI_L$ .
Constraint~\eqref{eq:graph_match_cons1} arises from the fact that $x_i \!=\! x_i^2$; constraint~\eqref{eq:graph_match_cons2} arises from~\eqref{eq:graph_match_cons01};
constraint~\eqref{eq:graph_match_cons3} avoids undesirable solutions that match one point to multiple points.
The SDP formulations are obtained by introducing into~\eqref{eq:backgd_sdp1} and~\eqref{eq:fastsdp_analysis3} 
the matrix $\hat{\bH}$ and the constraints~\eqref{eq:graph_match_cons0} to~\eqref{eq:graph_match_cons3}.
In this case, the BQP is a $\{0,1\}$-problem, instead of $\{-1,1\}$-problem.
Based on~\eqref{eq:graph_match_cons01}, $\eta = \mathrm{trace}(\hat{\bX})  = K\!+\!1$.
The binary solution $\bx^\star$ is obtained by solving the linear program:
\begin{align}
\max_{\bx \in \mathbb{R}^{KL} } \bx^{\T} \mathbf{diag}(\bX^\star), \ \ \sst \ \bx \geq \mathbf{0},~\eqref{eq:graph_match_cons01},~\eqref{eq:graph_match_cons02}, \label{eq:graph_match_round}
\end{align}
which is guaranteed to have integer solutions~\cite{Schellewald05}.

{\bf Experiments}
We apply our registration formulation on some toy data and real-world data.
For toy data, we firstly generate $30$ target points from a uniform distribution, and randomly select $15$ source points.
The source points are rotated and translated by a random similarity transformation $\by = \bR \bx + \bt$ with additive Gaussian noise.
For the Stanford bunny data, %
$50$ points are randomly sampled and similar transformation and noise are applied.
$\sigma$ is set to $10^{-4}$.

From Fig.~\ref{fig:graph_match},
we can see that the source and target points are matched correctly.
For the toy data, our method runs over $170$ times and $50$ times  faster than SeDuMi and SDPT3 respectively.
For the bunny data with $3126250$ variables, \fastsdp spends $412$ seconds and SeDuMi/SDPT3 did not find solutions after $3$ hours running.
The improvements on speed for \fastsdp is more significant than previous experiments. 
The reason is that the SDP formulation for registration has much more
constraints, which slows down SeDuMi and SDPT3 but has much less impact on \fastsdp.

\begin{figure}[t]
\begin{minipage}[t]{\linewidth}
  \centering
  \includegraphics[width=0.32\textwidth,clip]{./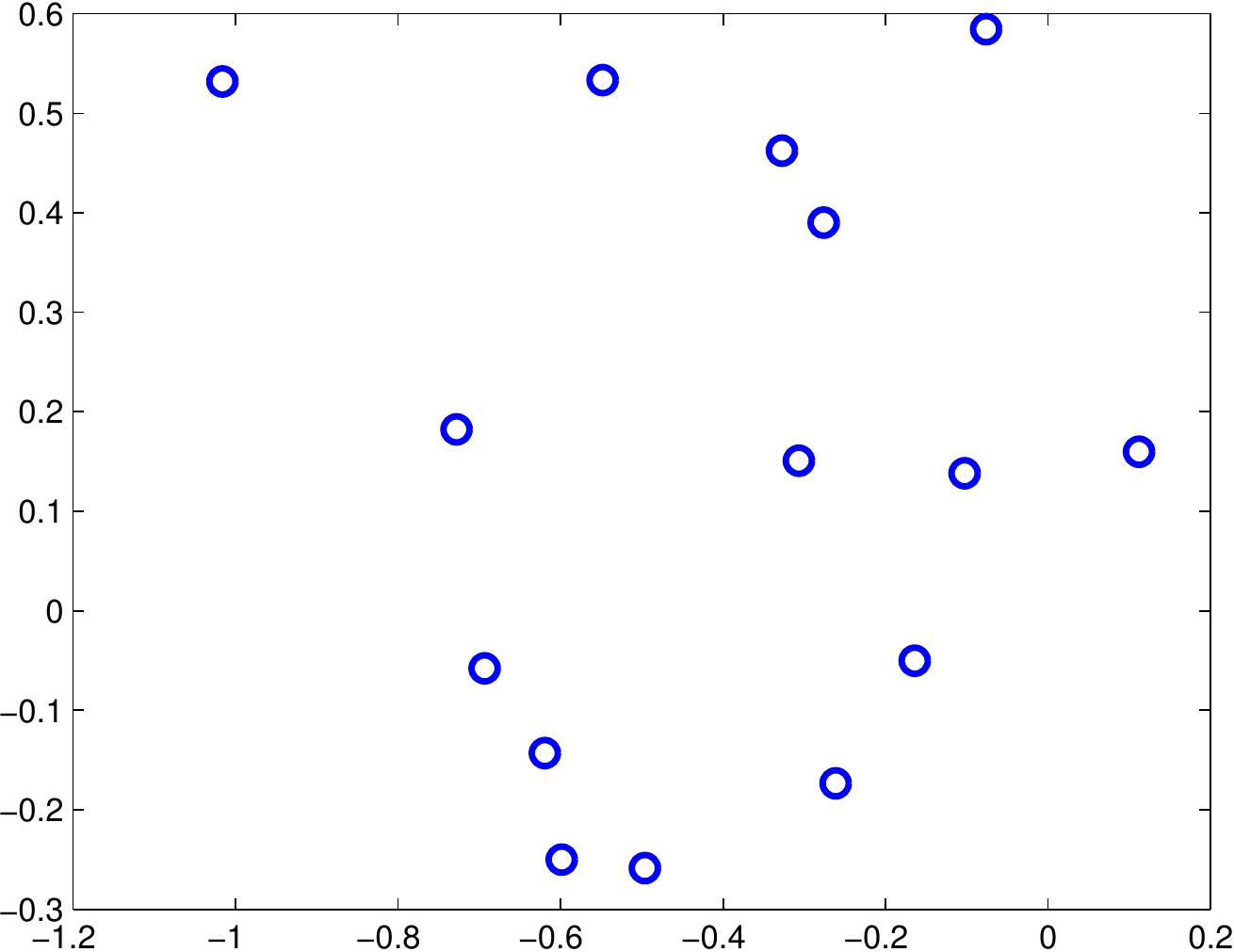}
  \includegraphics[width=0.32\textwidth,clip]{./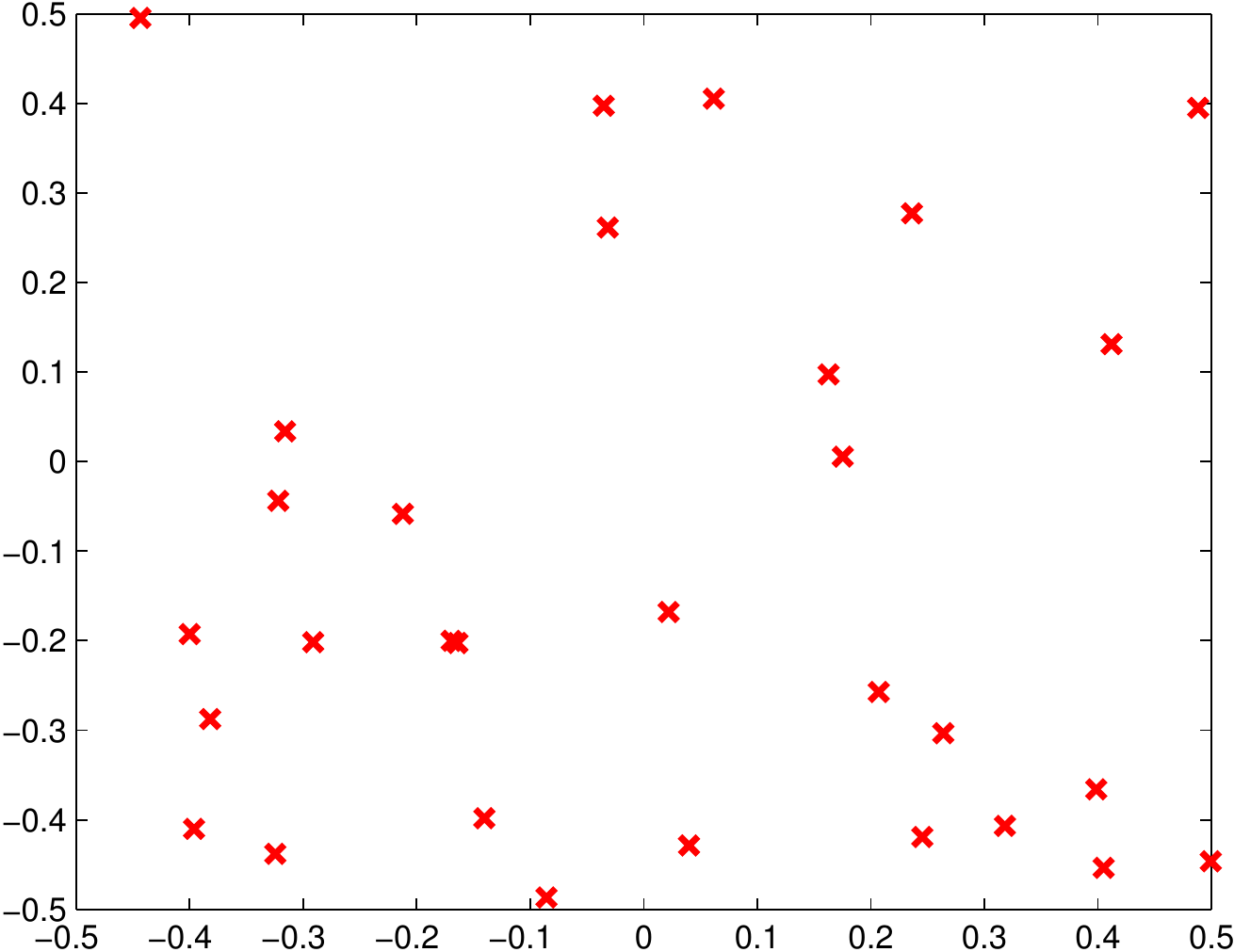}
  \includegraphics[width=0.32\textwidth,clip]{./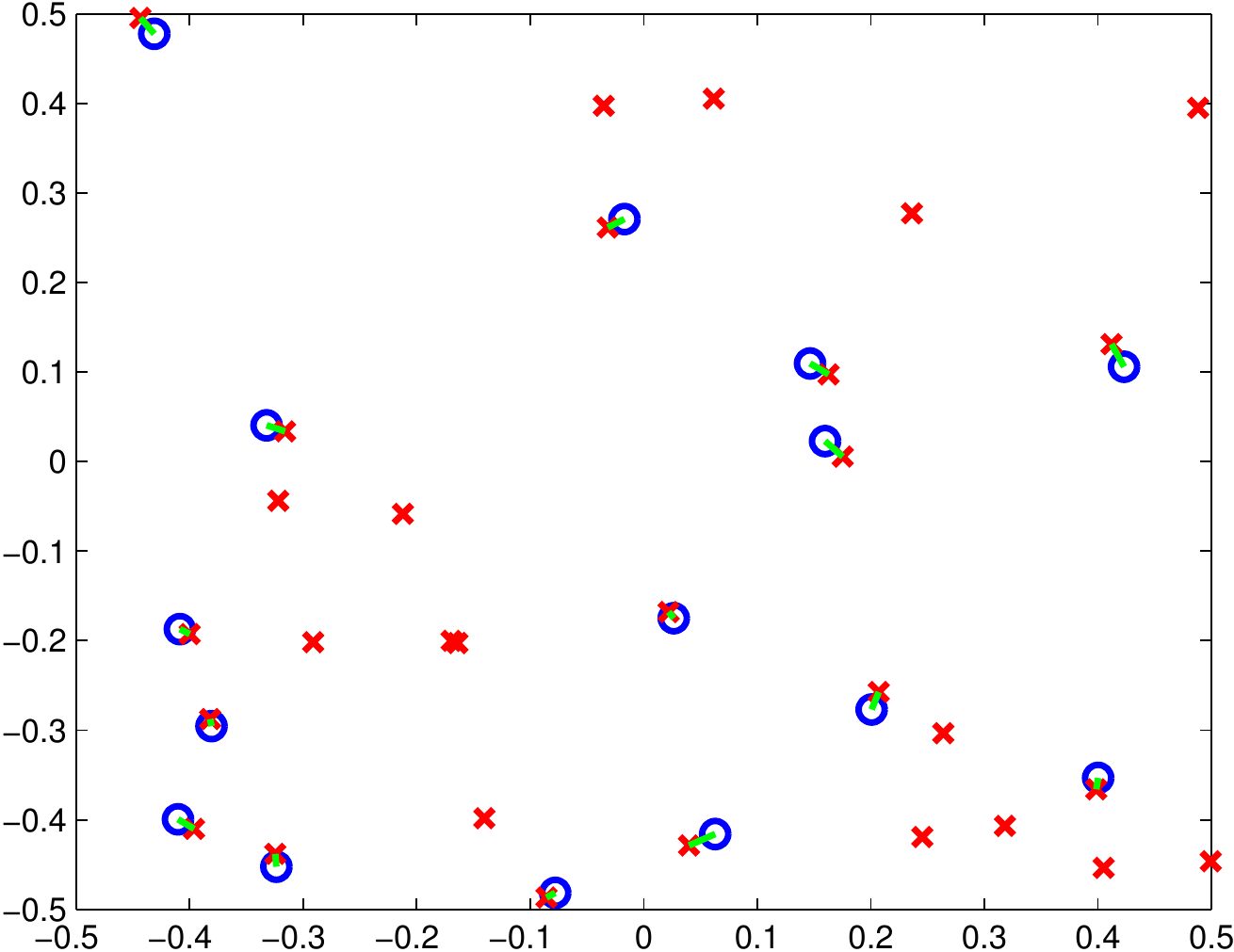}
  \includegraphics[width=0.32\textwidth,clip]{./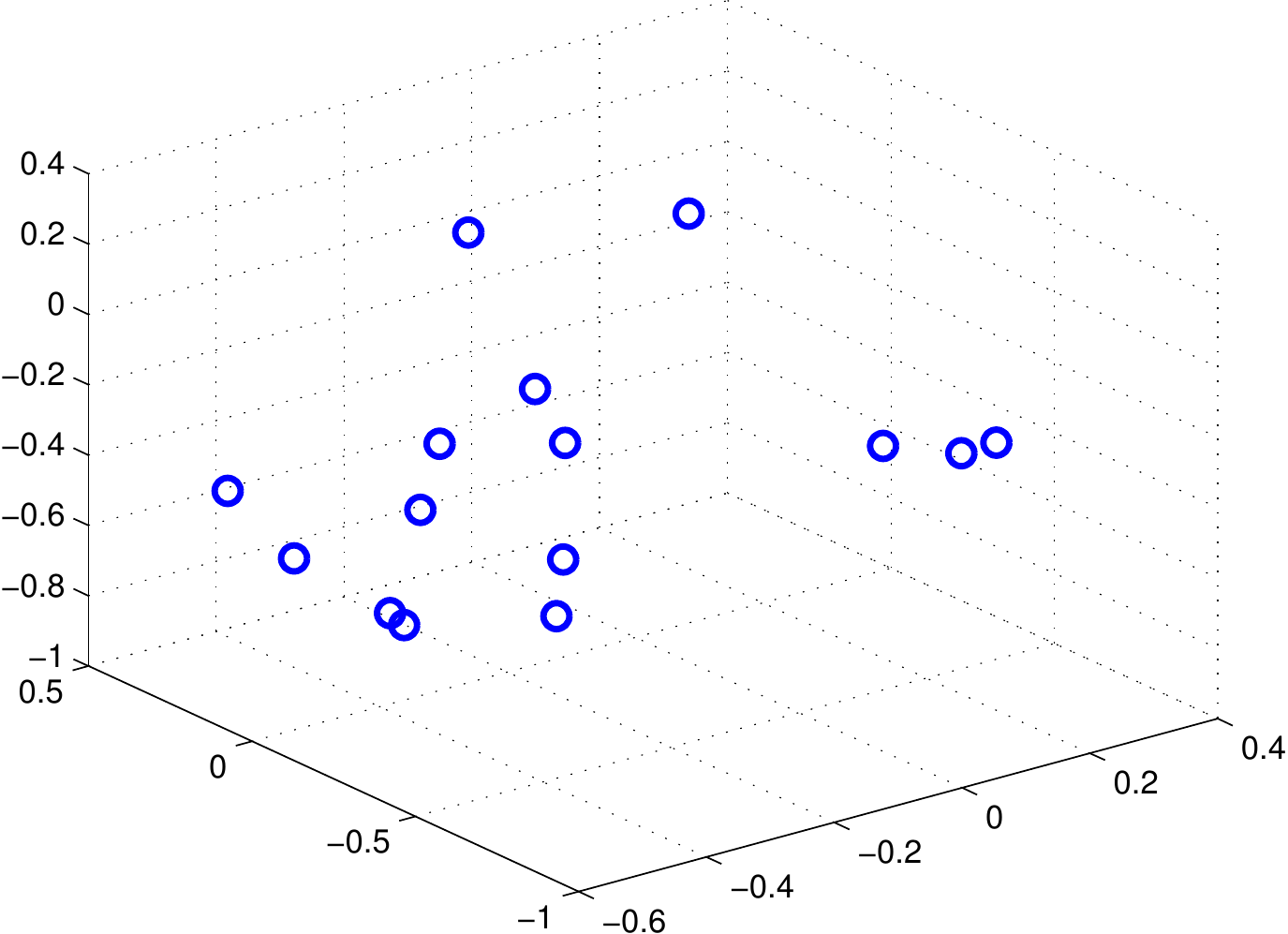}
  \includegraphics[width=0.32\textwidth,clip]{./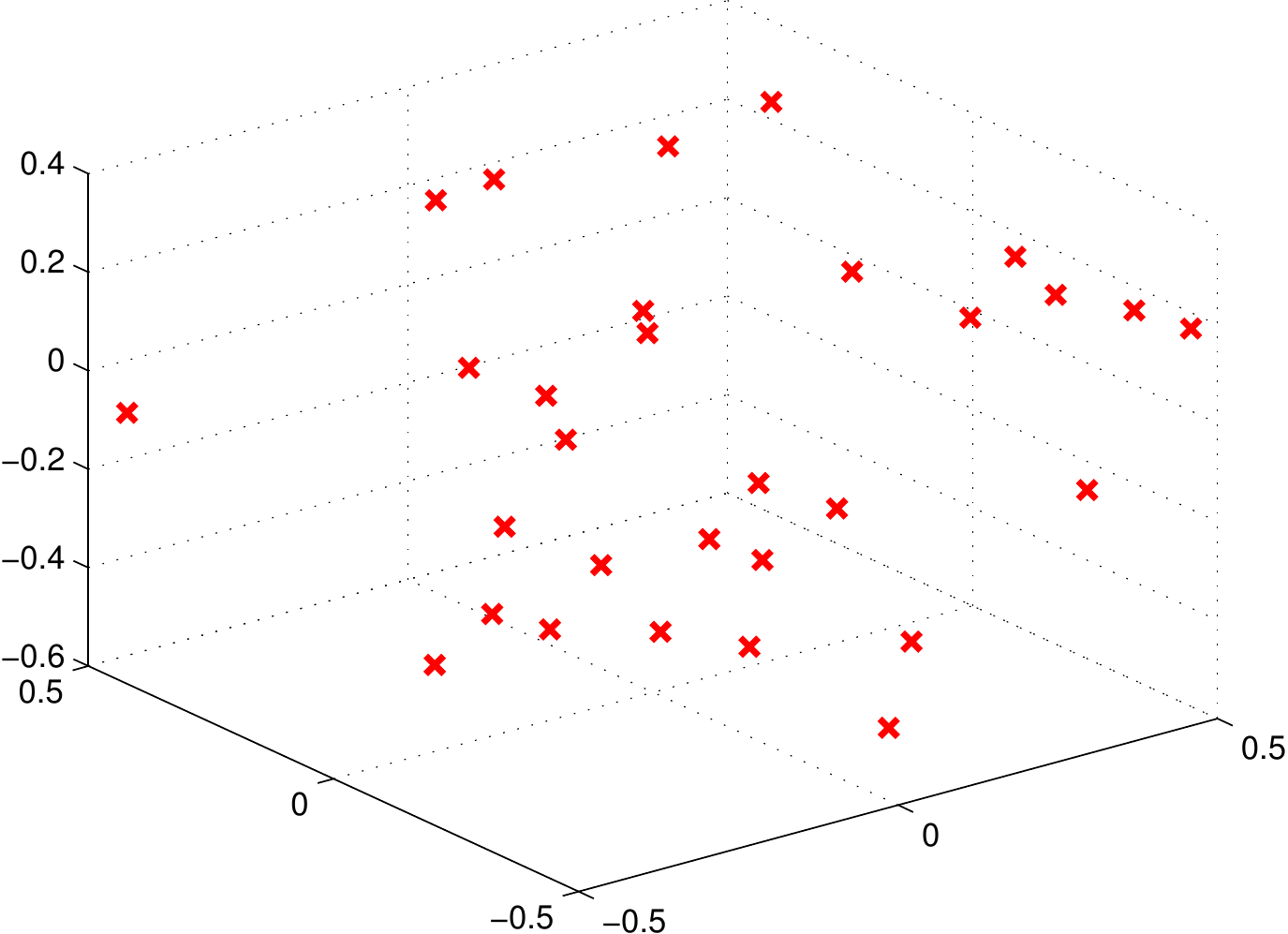}
  \includegraphics[width=0.32\textwidth,clip]{./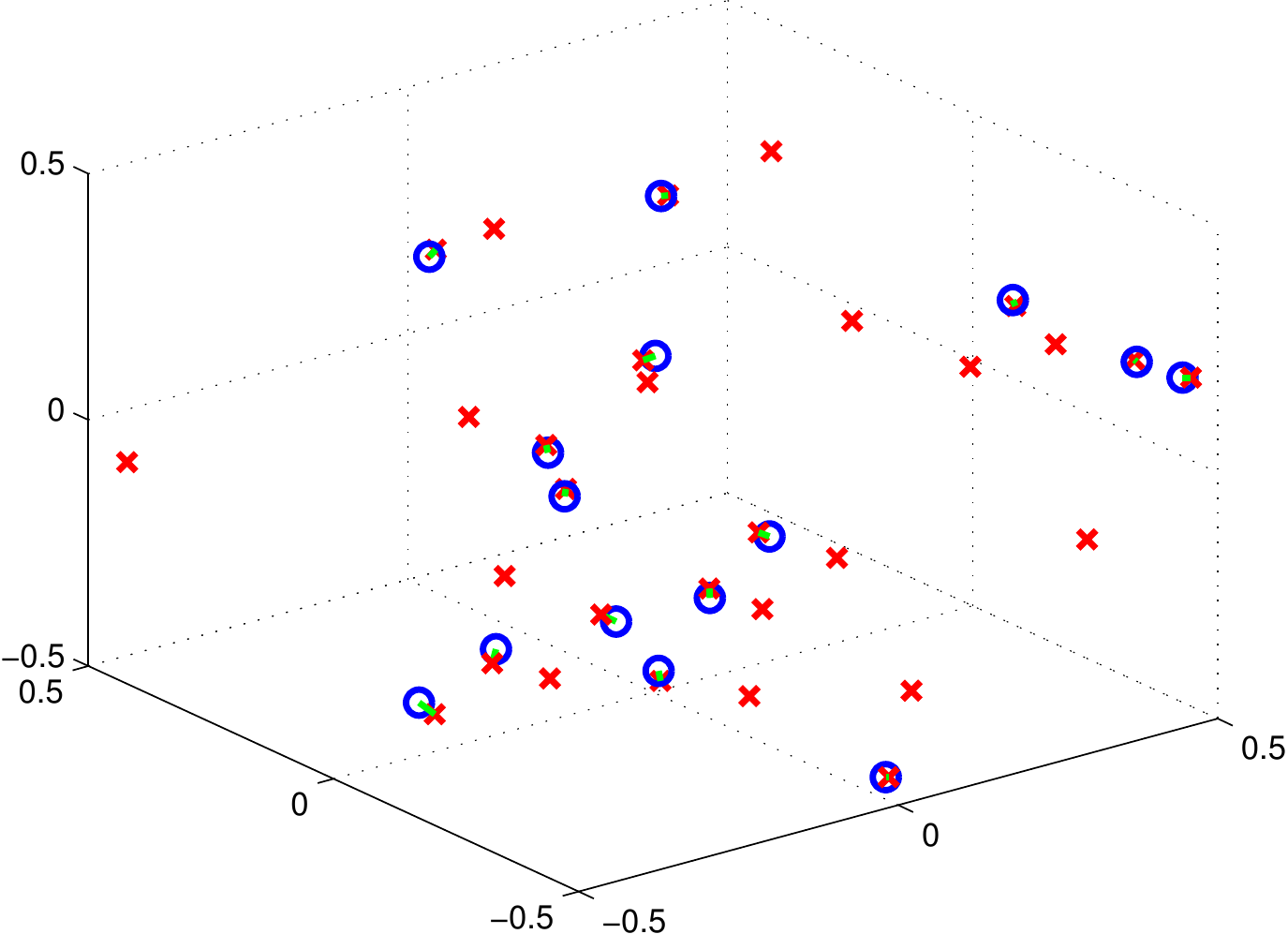}
  \includegraphics[width=0.32\textwidth,clip]{./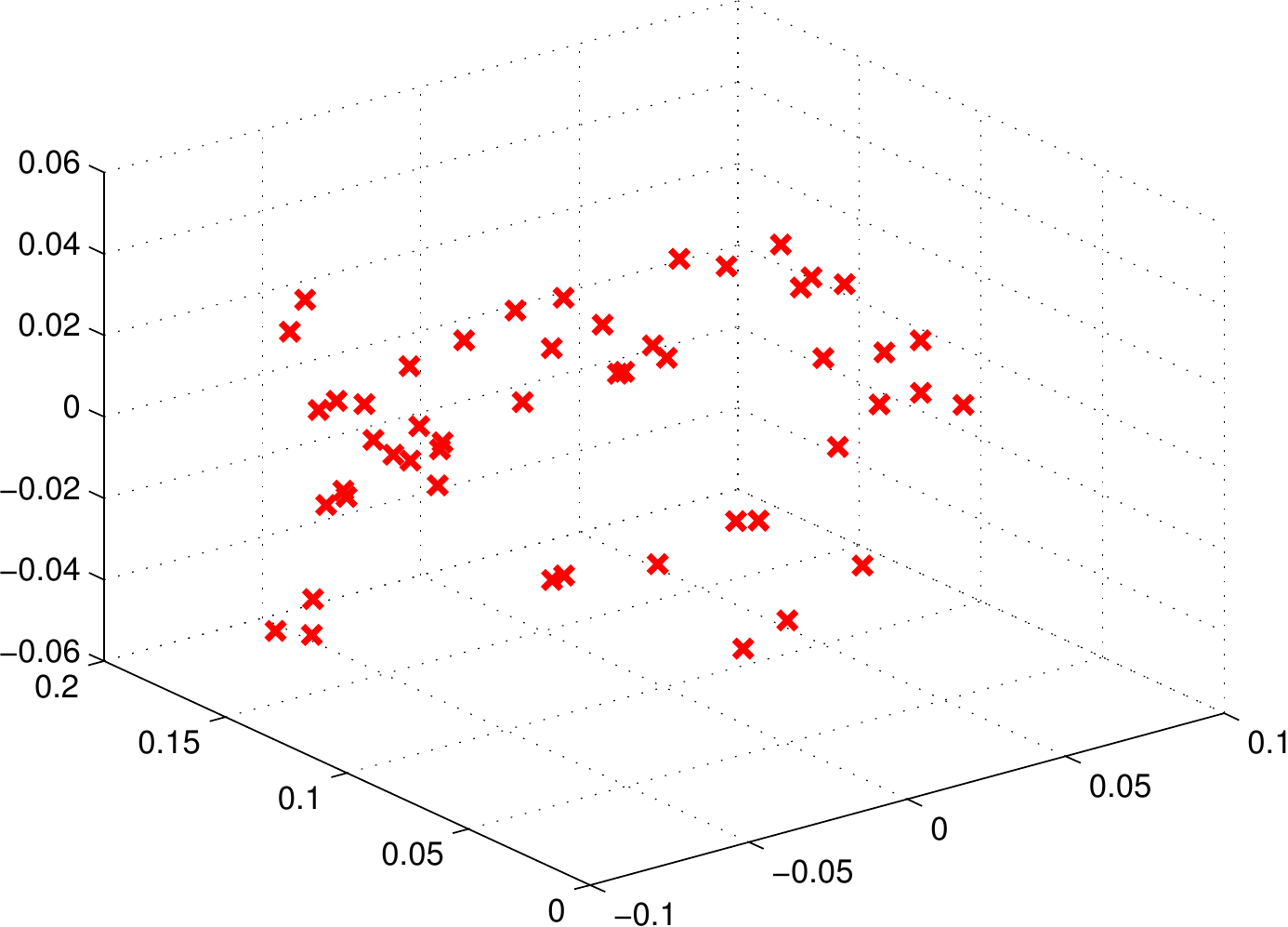}
  \includegraphics[width=0.32\textwidth,clip]{./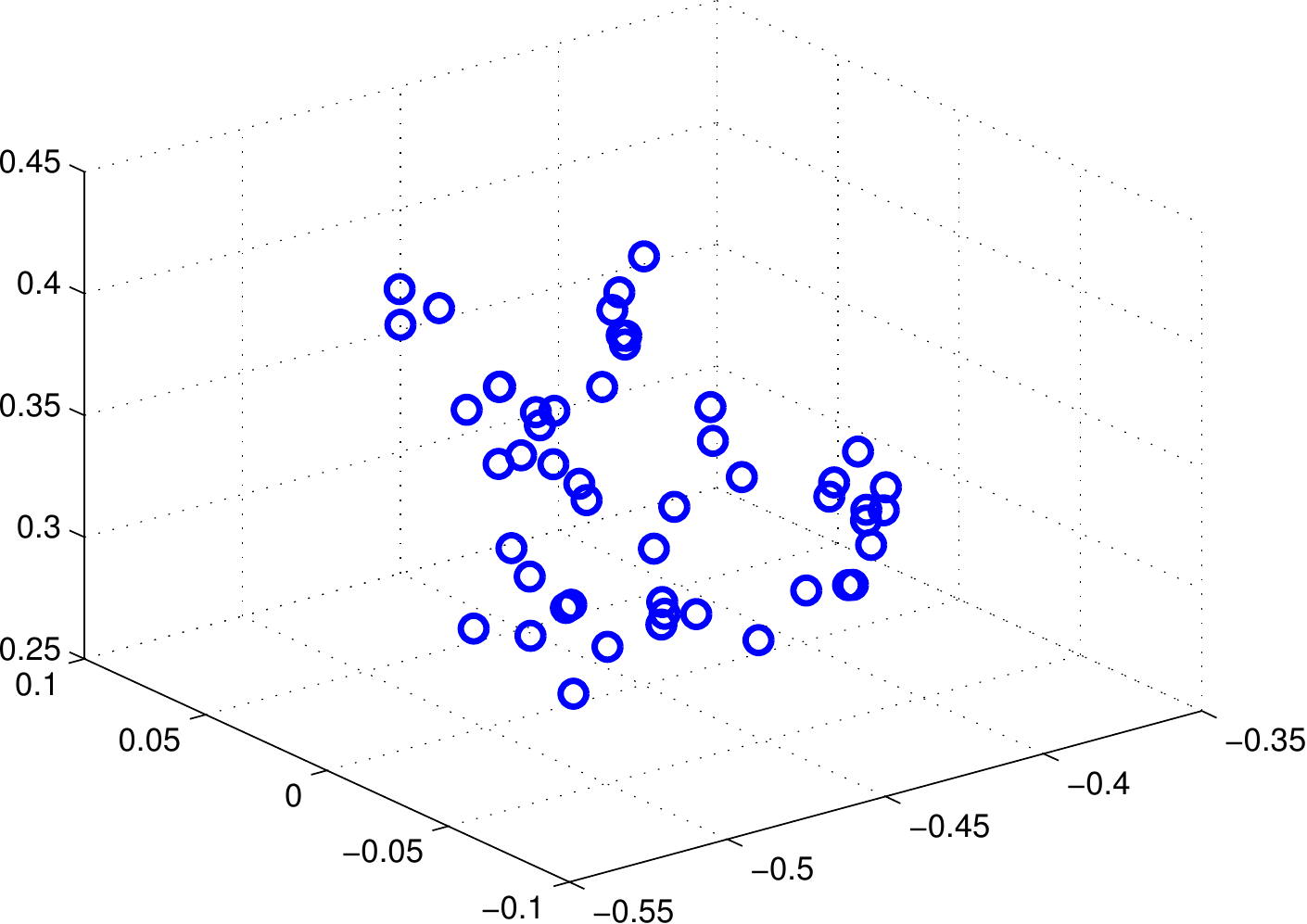}
  \includegraphics[width=0.32\textwidth,clip]{./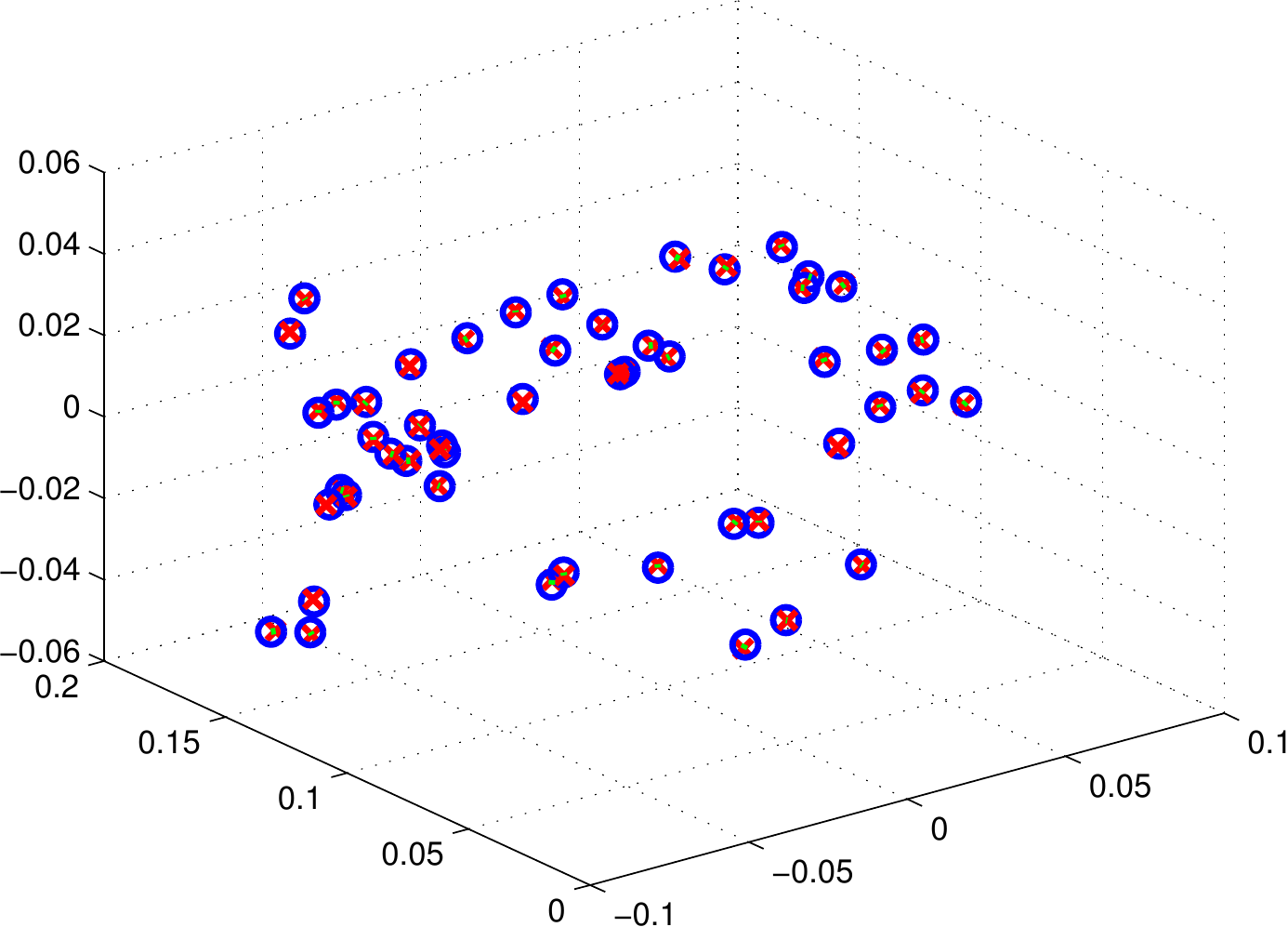}
\hspace{20cm} {\footnotesize source points} \hspace{1cm} {\footnotesize target points} \hspace{1cm} {\footnotesize matching results} 
\end{minipage}
\begin{minipage}[t]{\linewidth}
  \vspace{0.2cm}
  \centering  
  \footnotesize
  \begin{tabular}{l|l|ccc}
  \hline
     \multicolumn{2}{l|}{Data}       &  2d-toy & 3d-toy & bunny  \\
  \hline
  \hline  
    \multicolumn{2}{l|}{{\#} variables in BQP~\eqref{eq:graph_match_1}} &  $450$  & $450$  & $2500$ \\  
  \hline
    \multirow{3}{*}{Time(s)} & \fastsdp      &  $16.1$    & $19.0$    & $412$ \\
                             & SeDuMi       &  $2828$    & $3259$    & $>10000$    \\
                             & SDPT3        &  $969$     & $981$     & $>10000$    \\                            
  \hline
  \end{tabular}
\end{minipage}
\caption{Registration results. For 2d (top row) and 3d (middle row) artificial data, $15$ source points are matched to a subset of $30$ target points.
For bunny data (bottom row), there are $50$ source points and $50$ target points. $\sigma$ is set to $10^{-4}$.}
\label{fig:graph_match}
\end{figure}

\section{Conclusion}

In this paper, we have presented an efficient semidefinite formulation (\fastsdp) for BQPs.
\fastsdp produces a similar lower bound with the conventional SDP formulation, and therefore is tighter than spectral relaxation.
Our formulation is easy to implement by using the L-BFGS-B toolbox and standard eigen-decomposition software,
and therefore is much more scalable than the conventional SDP formulation.
We have applied \fastsdp to a few computer vision problems,
which demonstrates its flexibility in formulation.
Experiments also show the computational efficiency and good solution quality of \fastsdp.

We have made the code 
available online\footnote{\url{http://cs.adelaide.edu.au/~chhshen/projects/BQP/}}.

{\bf Acknowledgements} 
\noindent
    This work was in part supported by ARC Future
    Fellowship FT120100969. Correspondence should be address to C. 
    Shen. 

{
\bibliographystyle{abbrv}
\bibliography{sdcut1}

\begin{thebibliography}{10}

\bibitem{boyd2004convex}
S.~Boyd and L.~Vandenberghe.
\newblock {\em Convex Optimization}.
\newblock Cambridge University Press, 2004.

\bibitem{Cour_solvingmarkov}
T.~Cour and J.~Bo.
\newblock Solving markov random fields with spectral relaxation.
\newblock In {\em Proc.\ Int.\ Conf.\ Artificial Intelligence \& Statistics},
  2007.

\bibitem{Gour2006}
T.~Cour, P.~Srinivasan, and J.~Shi.
\newblock Balanced graph matching.
\newblock In {\em Proc. Adv. Neural Info.\ Process.\ Systems}, pages 313--320,
  2006.

\bibitem{Goemans95improved}
M.~X. Goemans and D.~Williamson.
\newblock Improved approximation algorithms for maximum cut and satisfiability
  problems using semidefinite programming.
\newblock {\em J. {ACM}}, 42:1115--1145, 1995.

\bibitem{Guattery98onthe}
S.~Guattery and G.~Miller.
\newblock On the quality of spectral separators.
\newblock {\em SIAM J. Matrix Anal. Appl.}, 19:701--719, 1998.

\bibitem{Heiler2005semi}
M.~Heiler, J.~Keuchel, and C.~Schnorr.
\newblock Semidefinite clustering for image segmentation with a-priori
  knowledge.
\newblock In {\em Proc. DAGM Symp. Pattern Recogn.}, pages 309--317, 2005.

\bibitem{Joulin2010dis}
A.~Joulin, F.~Bach, and J.~Ponce.
\newblock Discriminative clustering for image co-segmentation.
\newblock In {\em Proc.\ IEEE Conf.\ Comput.\ Vis. \& Pattern Recogn.}, 2010.

\bibitem{Journee2010lowrank}
M.~Journee, F.~Bach, P.-A. Absil, and R.~Sepulchre.
\newblock Low-rank optimization on the cone of positive semidefinite matrices.
\newblock {\em SIAM J. Optimization}, 20(5), 1999.

\bibitem{Kannan00onclusterings}
R.~Kannan, S.~Vempala, and A.~Vetta.
\newblock On clusterings: Good, bad and spectral.
\newblock {\em J. ACM}, 51:497--515, 2004.

\bibitem{Keuchel2003binary}
J.~Keuchel, C.~Schnoerr, C.~Schellewald, and D.~Cremers.
\newblock Binary partitioning, perceptual grouping and restoration with
  semidefinite programming.
\newblock {\em IEEE Trans.\ Pattern Analysis \& Machine Intelligence},
  25(11):1364--1379, 2003.

\bibitem{krislock2012improved}
N.~Krislock, J.~Malick, and F.~Roupin.
\newblock Improved semidefinite bounding procedure for solving max-cut problems
  to optimality.
\newblock {\em Math.\ Program.\ Ser.\ A}, 2013.
\newblock Published online 13 Oct.\ 2012 at \url{http://doi.org/k2q}.

\bibitem{Lang05fixingtwo}
K.~J. Lang.
\newblock Fixing two weaknesses of the spectral method.
\newblock In {\em Proc. Adv. Neural Info.\ Process.\ Systems}, pages 715--722,
  2005.

\bibitem{Lauer09spectralmotion}
F.~Lauer and C.~Schnorr.
\newblock Spectral clustering of linear subspaces for motion segmentation.
\newblock In {\em Proc.\ Int.\ Conf.\ Comput.\ Vis.}, 2009.

\bibitem{Maji2011biased}
S.~Maji, N.~K. Vishnoi, and J.~Malik.
\newblock Biased normalized cuts.
\newblock In {\em Proc.\ IEEE Conf.\ Comput.\ Vis. \& Pattern Recogn.}, pages
  2057--2064, 2011.

\bibitem{Malick2007spherical}
J.~Malick.
\newblock The spherical constraint in boolean quadratic programs.
\newblock {\em J. Glob. Optimization}, 39(4):609--622, 2007.

\bibitem{MartinFTM01}
D.~Martin, C.~Fowlkes, D.~Tal, and J.~Malik.
\newblock A database of human segmented natural images and its application to
  evaluating segmentation algorithms and measuring ecological statistics.
\newblock In {\em Proc.\ IEEE Conf.\ Comput.\ Vis. \& Pattern Recogn.},
  volume~2, pages 416--423, 2001.

\bibitem{Olsson07solvinglarge}
C.~Olsson, A.~Eriksson, and F.~Kahl.
\newblock Solving large scale binary quadratic problems: Spectral methods vs.
  semidefinite programming.
\newblock In {\em Proc.\ IEEE Conf.\ Comput.\ Vis. \& Pattern Recogn.}, pages
  1--8, 2007.

\bibitem{Schellewald05}
C.~Schellewald and C.~Schn\"orr.
\newblock Probabilistic subgraph matching based on convex relaxation.
\newblock In {\em Proc.\ Int.\ Conf. Energy Minimization Methods in Comp. Vis.
  \& Pattern Recogn.}, pages 171--186, 2005.

\bibitem{Shen2011scalabledual}
C.~Shen, J.~Kim, and L.~Wang.
\newblock A scalable dual approach to semidefinite metric learning.
\newblock In {\em Proc.\ IEEE Conf.\ Comput.\ Vis. \& Pattern Recogn.}, pages
  2601--2608, 2011.

\bibitem{Shi2000normalized}
J.~Shi and J.~Malik.
\newblock Normalized cuts and image segmentation.
\newblock {\em IEEE Trans.\ Pattern Analysis \& Machine Intelligence},
  22(8):888--905, 8 2000.

\bibitem{Sturm98usingsedumi}
J.~F. Sturm.
\newblock Using {SeDuMi} 1.02, a {MATLAB} toolbox for optimization over
  symmetric cones.
\newblock {\em Optimization Methods \& Softw.}, 11:625--653, 1999.

\bibitem{Toh99sdpt3}
K.~C. Toh, M.~Todd, and R.~H. Tütüncü.
\newblock {SDPT3}---a {MATLAB} software package for semidefinite programming.
\newblock {\em Optimization Methods \& Softw.}, 11:545--581, 1999.

\bibitem{Torr2003MRFSDP}
P.~Torr.
\newblock Solving markov random fields using semi definite programming.
\newblock In {\em Proc.\ Int.\ Conf.\ Artificial Intelligence \& Statistics},
  2007.

\bibitem{vedaldi08vlfeat}
A.~Vedaldi and B.~Fulkerson.
\newblock {VLFeat}: An open and portable library of computer vision algorithms.
\newblock \url{http://www.vlfeat.org/}, 2008.

\bibitem{Yu2004segmentation}
S.~X. Yu and J.~Shi.
\newblock Segmentation given partial grouping constraints.
\newblock {\em IEEE Trans.\ Pattern Analysis \& Machine Intelligence},
  26(2):173--183, 2004.

\bibitem{Zhu94lbfgsb}
C.~Zhu, R.~H. Byrd, P.~Lu, and J.~Nocedal.
\newblock Algorithm 778: {L-BFGS-B}: {F}ortran subroutines for large-scale
  bound-constrained optimization.
\newblock {\em ACM Trans. Mathematical Software}, 23(4):550--560, 1997.

\end{thebibliography}
}

\end{document}